\documentclass[lettersize,journal]{IEEEtran}
\usepackage{array}
\usepackage[caption=false,font=normalsize,labelfont=sf,textfont=sf]{subfig}
\usepackage{textcomp}
\usepackage{stfloats}
\usepackage{verbatim}
\usepackage{graphicx}
\usepackage{amsmath,amssymb}
\usepackage{siunitx}
\usepackage{upgreek}
\usepackage[noend]{algpseudocode}
\usepackage{algorithm}
\usepackage{multirow}
\usepackage{url}
\usepackage{balance}

\usepackage{enumitem}
\usepackage{xcolor}
\usepackage{soul}
\usepackage[hidelinks]{hyperref}
\usepackage{caption}
\usepackage{subcaption}
\usepackage{tabularx,booktabs}
\usepackage{makecell}
\usepackage[colorinlistoftodos,prependcaption,textsize=small]{todonotes}
\usepackage{longtable}
\usepackage{lscape}
\usepackage{hvfloat}
\usepackage[table]{xcolor}
\usepackage{lipsum}

\usepackage[
    backend=biber,
    bibstyle=ieee,
    citestyle=ieee-comp,
    natbib=true,
    sorting=none
]{biblatex}
\usepackage{tabularx}
\usepackage{placeins}

\usepackage{acro}
\usepackage{enumitem}
\usepackage{amsmath, amssymb, amsfonts, amsthm}
\newtheorem{theorem}{Theorem}

\newtheorem{lemma}{Lemma}
\theoremstyle{definition}
\newtheorem{definition}{Definition}
\newtheorem{proposition}{Proposition}

\DeclareAcronym{cfg}{
  short       = CFG ,
  long        = Context-Free Grammar ,
  short-plural = s ,
  long-plural  = s
}

\DeclareAcronym{nn}{
  short       = NN ,
  long        = Neural Network ,
  short-plural = s ,
  long-plural  = s
} 

\DeclareAcronym{ode}{
  short       = ODE ,
  long        = ordinary differential equation ,
  short-plural = s ,
  long-plural  = s
}

\DeclareAcronym{mse}{
  short       = MSE ,
  long        = mean square error ,
  short-plural = s ,
  long-plural  = s
}

\DeclareAcronym{sr}{
  short       = SR ,
  long        = symbolic regression ,
  short-plural =  ,
  long-plural  = 
}

\begin{document}
\bstctlcite{IEEEexample:BSTcontrol}

\title{Exhaustive Canonical Skeletons Generation:\\ A Formal Dataset for Decomposable Symbolic Regression}

\author{\IEEEauthorblockN{Giorgio Morales and Anikó Ekárt} \\
\IEEEauthorblockA{Aston Centre for Artificial Intelligence Research and Application, Aston University, Birmingham, UK}\vspace{-3.2ex}

\thanks{Manuscript received October 1, 2026; revised xxx xx, 2026.}}

\markboth{IEEE Computational Intelligence Letters -- In Review}%
{Morales \MakeLowercase{\textit{et al.}}: A Formal Dataset for Decomposable Symbolic Regression}


\maketitle

\begin{abstract}
Neural symbolic regression (SR) models require diverse datasets of mathematical expressions for robust training.
However, existing datasets rely predominantly on stochastic sampling methods, which suffer from high structural redundancy and incomplete functional coverage. 
To address this, we introduce an exhaustive dataset of univariate canonical skeletons generated via a deterministic constrained context-free grammar. 
A canonical skeleton acts as a generic structural representation that subsumes more specific functional forms, eliminating redundant expressions and avoiding repeated computational efforts. 
Although our dataset consists of univariate skeletons, it is designed to support the decomposable SR paradigm to solve multivariate SR problems without assuming linear separability. 
Furthermore, we pair this dataset with a sampling strategy that generates clean numerical data without triggering numerical instabilities or relying on artificial protected operators. 
Canonicalizing two existing neural SR datasets reveals that our dataset is substantially more compact, eliminates structural repetitions, and achieves a uniform operator distribution free of simplicity bias. 
Finally, we establish a benchmark of 150 SR problems across three tiers, 
on which state-of-the-art SR methods are shown to struggle significantly.\footnote{Dataset and code available at \href{https://github.com/ACAIRA-Aston/SRSkeletons}{github.com/ACAIRA-Aston/SRSkeletons}}
\end{abstract}

\begin{IEEEkeywords}
Neural symbolic regression; Symbolic skeletons; Explainable artificial intelligence.
\end{IEEEkeywords}

\vspace{-1ex}

\section{Introduction}

\Ac{sr} aims to uncover explicit, human-interpretable mathematical expressions that describe underlying patterns in numerical data. 
In recent years, neural \ac{sr} models (i.e., deep generative models and Transformers)~\cite{SRthatscales,SymbolicGPT,SRtransformer,SeTGAP} have made significant advances by framing equation discovery as a sequence-to-sequence translation task. 
However, the generalization capability of neural SR models is tied to the quality, diversity, and structural validity of their training datasets. 

To construct these datasets, existing approaches rely on stochastic tree sampling, templates, or heuristics~\cite{Lample2020Deep,SRthatscales,SRtransformer,SymbolicGPT,MultiSetSR,Yu_2025} (see Supplementary Materials (\textbf{S.M.})~\ref{app:related} for a review). 
While these methods allow the generation of large datasets, they remain probabilistic explorations of an exponentially growing search space. 
Thus, they suffer from severe structural redundancy:
distinct derivation paths frequently collapse into functionally identical expressions after simplification (e.g., $x \cdot 1$ vs. $x+0$), over-exposing models to a narrow subset of simple forms. 
Even when constrained by templates~\cite{SymbolicGPT}, sampling weights~\cite{MultiSetSR}, or probabilistic grammars~\cite{Yu_2025}, these pipelines rely on arbitrary heuristics and fail to guarantee exhaustive and non-redundant coverage of valid functional structures.

To overcome these limitations, our \textbf{first contribution} is an Exhaustive dataset of Canonical Univariate Skeletons (ExCaUS), coupled with a principled sampling strategy. 
A symbolic skeleton replaces specific numeric parameters with constant placeholders. 
We extend this concept by defining a \textit{canonical skeleton} as a generic structural representation that subsumes more specific instantiated representations. 
By formalizing explicit closure rules, canonical skeletons eliminate functional collisions and prevent repeated generation efforts.

While our dataset consists of univariate skeletons, its utility is not restricted to univariate SR problems. 
Instead, it is designed to serve as the foundation for training SR models under a decomposable SR paradigm~\cite{SeTGAP}. 
As detailed in Sec.~\ref{sec:datasetdef}, complex multivariate systems can be solved by first identifying the isolated functional relationship between individual variables and the system's response in the form of univariate skeletons. 
These skeletons are then merged into multivariate expressions through progressive structural composition.

To construct our dataset, we establish a formal recursive structural decomposition mapped directly to a \ac{cfg}. 
Unlike stochastic generation methods, our grammar-based approach provides a deterministic framework for exhaustive enumeration. 
Furthermore, we pair this dataset with a robust data sampling strategy designed to generate clean numerical data without triggering numerical instabilities or relying on protected operators. 
To validate our approach, we compared our canonical dataset against two existing datasets used by previous neural \ac{sr} methods, which we canonicalized to ensure a fair evaluation. 
The comparison reveals that the baseline approaches present severe skewness and redundancy, while our dataset is substantially more compact and entirely free of structural duplicates. 
Additionally, it displays a more uniform distribution of unary operators, successfully avoiding the common bias toward overly simplistic functional forms.

Our \textbf{second contribution} is a rigorous \ac{sr} benchmark, ExCaUS-Bench.
We generated 150 problems, split into three tiers, to evaluate a suite of \ac{sr} methods. 
Exhaustive tuning and testing show that most \ac{sr} methods struggle significantly to recover the underlying functions in these problems. 
Unlike standard benchmarks that evaluate models over narrow intervals, our benchmark utilizes extended domain ranges, substantially increasing the difficulty of the regression.

\section{Dataset Definition}  \label{sec:datasetdef}
 
In this section, we define the principles for constructing the dataset of symbolic skeletons.
Let $\mathcal{U}$ and $\mathcal{B}$ denote the finite set of unary (e.g., $\texttt{sin}$ and $\texttt{log}$) and binary operators (e.g., $+, -, \times, \div$), respectively.
Thus, a mathematical expression $f(\mathbf{x})$ is represented as a rooted tree with nodes $\in \{\mathcal{U}, \mathcal{B}\}$, where $\mathbf{x} \!=\! \{x_1, \dots, x_n\}$ is a multivariate input vector.

A symbolic skeleton expression is an abstract representation of a function $f(\mathbf{x})$ that captures the functional form between a variable, or set of variables, and the system's response using constant placeholders $c_i$ that may represent numeric constants or functions of other variables.
It is obtained using the skeleton function $\kappa(\cdot)$.
In general, we employ it as $\kappa(f(\mathbf{x}); \mathbf{x}_s)$, where $\mathbf{x}_s$ is a subset of variables and $S \subseteq \{1, \dots, n\}$ is an index set. 
Thus, $\mathbf{x}_s$ represents the set of variables with respect to which the skeleton is expressed. 
In particular, $\kappa(f(\mathbf{x}); \mathbf{x}) \!=\! \kappa(f(\mathbf{x}))$.
For example, $f(\mathbf{x}) = 2 \sin( \frac{x_1}{10 x_2} + \frac{\pi}{2})^2 + {10}$; then $\kappa(f(\mathbf{x}), x_1) = c_1 \sin(c_2 x_1 + c_3)^2 + c_4$, $\kappa(f(\mathbf{x}), x_2) = c_5 \sin(\frac{c_6}{x_2} + c_7)^2 + c_8$, and $\kappa(f(\mathbf{x})) = c_{9} \sin(c_{10} \frac{x_1}{x_2} + c_{11})^2 + c_{12}$.

From now on, we focus on univariate symbolic skeletons.
\textbf{Although our dataset consists of univariate skeletons, its intended use is not restricted to univariate SR, nor does it assume linear separability.} 
Following the decomposable SR paradigm of \citet{SeTGAP}, univariate skeletons are viewed as structural components that capture the functional relationship between a variable and the system response. 

Suppose that $f(x) \!=\! \frac{x_1}{x_2} \tanh(\sqrt{2x_1^2 + x_2^2})$ and that the individual variable-response relationship analysis yields 
$\mathbf{e}(x_1) = c_1 x_1 \tanh(c_2 |x_1| + c_3)$ and $\mathbf{e}(x_2)= \frac{c_4}{x_2} \tanh(c_5 |x_2| + c_6)$.
These skeletons do not describe the complete system independently; rather, they identify structural patterns that are consistent with the underlying system. 
The arguments $c_2|x_1|+c_3$ and $c_5|x_2|+c_6$ can be viewed as special cases of $\sqrt{c_7x_1^2+c_8}$ and $\sqrt{c_9x_2^2+c_{10}}$, respectively. 
The subsequent task is thus to discover a multivariate composition that preserves the identified univariate structures while jointly fitting the entire dataset. 
As shown in \cite{SeTGAP}, this can be achieved through a progressive merging process in which univariate skeletons are combined into increasingly expressive multivariate skeletons until a suitable representation of the whole system is obtained.

Let $\mathcal{F}$ be the set of all admissible univariate expressions $f(x)$ constructed from $\mathcal{U}$ and $\mathcal{B}$.
Then, $\mathcal{S}_{skel}$ denotes the set of all skeleton expressions (i.e., expression trees whose leaves are ``$x$'' or ``$c_i$'') s.t. $\mathcal{S}_{skel} = \{ \kappa(f(x)) : f \in \mathcal{F} \}$.
An instantiation of a skeleton $s \in \mathcal{S}_{skel}$ is defined by a vector $\boldsymbol{\alpha} = [\alpha_1, \dots, \alpha_k] \in \mathbb{R}^k$, where $k$ is the number of placeholders in $s$. 
The instantiated expression is denoted $s[\boldsymbol{\alpha}]$.

\begin{definition}[Subsumption]
For $s_1, s_2 \in \mathcal{S}_{skel}$, we say $s_1$ subsumes $s_2$, $s_1 \succeq s_2$, if, for any instantiation $\boldsymbol{\beta}$, $\exists \boldsymbol{\alpha}$ s.t. $s_1[\boldsymbol{\alpha}] \equiv s_2[\boldsymbol{\beta}]$. Here, $\equiv$ represents equivalence between identically equal functions on an open interval containing 0.
\end{definition}
For instance, consider the skeletons $s_1 \!=\! c_1 x^2 \!+\! c_2 x \!+\! c_3$ and $s_2 \!=\! c_4 x^2$. 
Then, $s_1 \succeq s_2$, as $s_1\left[[\beta, 0, 0]\right] \equiv s_2\left[[\beta]\right]$.
For the sake of brevity, from now on, skeletons may be expressed using unnumbered placeholders $c$ (e.g., $s_1 \!=\! c x^2 \!+\! c x \!+\! c$).
However, each $c$ is considered as an independent placeholder. 

\begin{definition}[Canonical atoms]
We define the canonical atoms $C \!:=\! c$ (constant), $L \!:=\! cx+c$ (linear), 
and $P \!:=\! P^m(x)$ (polynomial). 
Here, $m$ is a placeholder index, and every polynomial of finite degree belongs to the family $P$.
\end{definition}

\begin{definition}[Canonical equivalence]
\label{def:equiv}
We define the following equivalence and closure rules between canonical atoms:
\begin{itemize}[leftmargin=1em, noitemsep,topsep=0pt]
    \item Constant: $C+c \!\sim\! C, \, C\cdot c \!\sim\! C, \, \nu(C)\!\sim\! C \text{ and } \nu\in \mathcal{U}$.
    \item Linear: $x\!+\!C \!\sim\! L, \, x\!\cdot\! C \!\sim\! L, L\!+\!C \!\sim\! L,\,  L\!\cdot\! C \!\sim\! L, L\!+\!x \!\sim\! L$.
    \item Polynomial: $L\cdot x \!\sim\! P, \,P+C \!\sim\! P, \,P \cdot C \!\sim\! P, \, P+x \!\sim\! P,$ \\\hspace*{4.7em} $P \cdot x \!\sim\! P, L \cdot L \!\sim\! P$, $P + L \!\sim\! P$, $P \cdot L \!\sim\! P$.
\end{itemize}
\end{definition}

\begin{proposition}[Recursive Structural Decomposition]
\label{sec:prop1}
Let $f(x)$ be defined by a finite composition of unary and binary ($+$ and $\times$) operators. 
Then, $f(x)$ obeys: 
$ f(x) = v_0 + \sum_i v_i \prod_j \nu_{ij}(v_{ij}(x))$,
where $v_0, v_i \in \mathbb{R}$, $\nu_{ij} \in \mathcal{U} \cup \{I\}$, and $v_{ij}(x)$ are subexpressions that admit the same decomposition recursively until reaching variables or constants.
\end{proposition}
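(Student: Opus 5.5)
The plan is structural induction on the expression tree of $f$. We show that every such tree lies in the class $\mathcal{D}$ of expressions of the form $v_0 + \sum_i v_i \prod_j \nu_{ij}(v_{ij}(x))$, with $v_{ij}$ again in $\mathcal{D}$ or a leaf. We allow empty sums and products, with an empty product equal to $1$. We also allow the identity $I$ among the $\nu_{ij}$.

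\textbf{Base cases.} These are the leaves.
\begin{itemize}[leftmargin=1em, noitemsep,topsep=0pt]
    \item A constant $c$ is written with $v_0 = c$ and an empty sum.
    \item The variable $x$ is written as $0 + 1\cdot I(x)$, so $v_1 = 1$, $\nu_{11} = I$ and $v_{11}(x) = x$. This is where the recursion bottoms out.
\end{itemize}

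\textbf{Inductive step.} Assume the children of the root already admit the decomposition. There are three cases, one per root operator.
\begin{itemize}[leftmargin=1em, noitemsep,topsep=0pt]
    \item \emph{Unary root.} If the root is $\nu \in \mathcal{U}$ with child $g$, then $f = 0 + 1\cdot \nu(g(x))$. This is a single term with a single factor, and $g$ admits the decomposition by hypothesis.
    \item \emph{Addition.} If $f = g + h$ with $g = a_0 + \sum_i a_i \prod_j \mu_{ij}(\cdot)$ and $h = b_0 + \sum_k b_k \prod_l \rho_{kl}(\cdot)$, then $f$ has constant part $v_0 = a_0 + b_0$. Its terms are the union of the two term lists, and all inner subexpressions are unchanged.
    \item \emph{Multiplication.} If $f = g\cdot h$, expand by distributivity:
    \[
    f = a_0 b_0 + \sum_k a_0 b_k \prod_l \rho_{kl} + \sum_i b_0 a_i \prod_j \mu_{ij} + \sum_{i,k} a_i b_k \prod_j \mu_{ij}\prod_l \rho_{kl}.
    \]
    Each term is a real coefficient times a finite product of factors of the form $\nu(\text{subexpression})$. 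Concatenating the two products is allowed because the inner index $j$ ranges over an arbitrary finite set. The arguments are the original inner subexpressions, which decompose by hypothesis.
\end{itemize}

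\textbf{Termination of the recursion.} The phrase ``recursively until reaching variables or constants'' needs a well-founded measure. I would use the tree depth of the arguments $v_{ij}$. Each $v_{ij}$ arising above is either a proper subtree of $f$, namely the argument of some unary node, or a leaf $x$ under $I$. Hence its depth is strictly smaller than that of $f$, or it is terminal.

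\textbf{Main obstacle.} The hard part is avoiding a circular definition when $\nu_{ij} = I$. If $I$ were allowed to wrap an arbitrary sum, the recursion would not decrease. I would resolve this by insisting that $I$ is applied only to the leaf $x$. Every non-unary internal node is then absorbed into the top-level polynomial structure, since $+$ and $\times$ nodes are flattened by the distributive expansion. Arguments of genuine unary operators are strict subtrees, so induction on depth closes. This also shows that the number of summands is finite, as it is bounded by products of finite term counts.

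Finally, subtraction and division can be covered by reading $g - h = g + (-1)h$, which fits the addition and multiplication cases. Division $g/h$ can be read as $g\cdot \mathrm{inv}(h)$ with $\mathrm{inv} \in \mathcal{U}$, which fits the multiplication and unary cases. The statement itself only requires $+$ and $\times$.
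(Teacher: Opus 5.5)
Your proposal is correct and takes essentially the same route as the paper. The paper defers the proof to SeTGAP, but its own inductive arguments (Lemma 1 and the complex-valued extension in the supplementary material) use the same structural induction: leaves as base cases, a unary root as a single-factor term, addition by merging term lists, and multiplication by distributive expansion. Your explicit well-foundedness argument, which applies $I$ only to leaves so that every other $v_{ij}$ is the argument of a unary node and hence a strictly shallower subtree, makes precise what the statement's phrase ``recursively until reaching variables or constants'' leaves implicit.
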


Proposition~\ref{sec:prop1} was proved by \citet{SeTGAP}.
Although the structure is defined using $\mathcal{B} = \{ +, \times\}$, expressions involving other binary operations, such as subtraction or division, are subsumed by this decomposition including appropriate unary operators, such as negation and inversion. 
In \textbf{S.M.}~\ref{app:extension}, we show that certain functions 
that may not mirror this form, such as piecewise-defined functions, can be represented using operations acting upon functional blocks that are individually expressible through this structural decomposition.

In addition, the power operator is treated as a unary operator, with the exponent regarded as a parameter external to the skeleton. 
That is, $\texttt{Pow}(f(x))$ denotes the family $\{f(x)^h \!:\! h \!\in\! \mathbb{Q}\}$, while the specific instantiated form is obtained in Sec.~\ref{sec:sampling} by sampling $h$.
Note that exponents outside $\mathbb{Q}$ can be represented through compositions of $\exp$ and $\log$. 
This choice constrains the expression space by excluding exponents that depend on $x$, thereby avoiding a broader class of expressions that is not considered in this study. 
This restriction is consistent with the standard power-law parametrization used in scientific modeling settings, where the exponent is typically constant~\cite{feynman}. 
However, variable-exponent formulations do exist in more general mathematical settings outside our scope.

\begin{definition}[Affine expansion]
For any subexpression or leaf $u$, the affine expansion operator is defined as $\mathcal{A}[u] = c_i \cdot u + c_j$, where $c_i, c_j$ are newly instantiated placeholders.
\end{definition}

\begin{definition}[Normal form]
Let $\mathcal{N}(s)$ denote the normal form of skeleton $s$, obtained by recursively applying the affine expansion operator $\mathcal{A}[\cdot]$ to every node and leaf of the $s$ tree, and the equivalence rules $\sim$, until no further transformation is possible.
For example, given $s(x) = \sin(x) + \frac{x^2}{c}$; then, $\mathcal{N}(s) = c \, \sin(cx+c) + cx^2+c = c \, \sin(L) + P$.
\end{definition}

\begin{lemma}[Recursive Canonical Form]
\label{lemma1}
Every skeleton $s(x) \in S_{\text{skel}}$ is subsumed by a unique finite rooted tree $\mathcal{N}(s)$ with leaves $\in \{C, L, P\}$ and internal nodes that satisfy the recursive structural decomposition defined in Proposition~\ref{sec:prop1}.
That is, $s(x) = c_0 + \sum c_i \prod \nu_{ij}(T_{ij}(x))$, where $c_0, c_i$ are constant placeholders and $T_{ij}$ skeleton subtrees.
\end{lemma}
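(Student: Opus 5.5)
The plan is a structural induction on the expression tree of $s$. The argument runs in three steps: existence and subsumption, termination, and uniqueness.

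\textbf{Step 1: existence via Proposition~\ref{sec:prop1}.} Since $s=\kappa(f)$ for some $f\in\mathcal{F}$, Proposition~\ref{sec:prop1} gives $s(x)=v_0+\sum_i v_i\prod_j \nu_{ij}(v_{ij}(x))$ with each $v_{ij}$ of strictly smaller depth. Subtraction and division are first rewritten using negation and inversion, and $\texttt{Pow}$ is treated as a unary operator.

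\textbf{Base case.} Leaves $x$ and $c$ are sent to $\mathcal{A}[x]=cx+c\sim L$ and $\mathcal{A}[c]\sim C$. In both cases the original leaf is recovered by an instantiation ($c_i=1,c_j=0$, respectively the constant itself), so the result subsumes the leaf.

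\textbf{Inductive step.} By the induction hypothesis each $v_{ij}$ is subsumed by a tree $T_{ij}=\mathcal{N}(v_{ij})$. Applying $\mathcal{A}$ at the product and sum levels produces placeholders $c_0,c_i$ in place of $v_0,v_i$. Subsumption follows by choosing $\boldsymbol{\alpha}$ to reproduce $\boldsymbol{\beta}$ on the original constants, setting multiplicative expansion constants to $1$ and additive ones to $0$. Subsumption is closed under composition: substituting an instantiation of $T_{ij}$ that equals $v_{ij}[\boldsymbol{\beta}]$ into $\nu_{ij}$ yields identical functions on the same open interval around $0$.

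\textbf{Collapse to atoms.} Subtrees built only from $x$, constants, $+$ and $\times$ are collapsed to atoms using the rules of Definition~\ref{def:equiv}. Each rule replaces a family by one that contains it. For example, $L\cdot L\sim P$ holds because the polynomial family includes all products of linear forms. Thus subsumption is preserved at every rewrite, and the leaves end up in $\{C,L,P\}$. Unary nodes $\nu(C)$ collapse to $C$, and the surviving internal nodes have the form $c_0+\sum c_i\prod\nu_{ij}(\cdot)$.

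\textbf{Step 2: termination.} $\mathcal{N}(s)$ is finite because the procedure terminates. I would use the lexicographic measure (number of non-atom nodes, number of rule-redexes). Affine expansion is applied once per original node and is not reapplied to newly created placeholders, since a placeholder absorbs $c\cdot c+c\sim C$. Every $\sim$ rule strictly decreases the node count. Hence the process halts, leaving a finite tree whose depth is at most that of $s$.

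\textbf{Step 3: uniqueness (main obstacle).} The rewrite system must be confluent, so that the normal form does not depend on the order of rule application. I would first show local confluence via critical pairs. The key observation is that every rule maps into the lattice $C\le L\le P$ by taking the ``join'' of its arguments' atom types, with $x$ treated as $L$. The resulting atom type of a maximal $\{+,\times\}$-block over atoms is therefore the join determined by its degree: $C$ if constant, $L$ if degree $\le 1$, and $P$ otherwise. This type is independent of the rewrite order. Newman's lemma, combined with termination from Step 2, then gives confluence and hence a unique normal form.

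The delicate part of Step 3 is the interaction with the affine expansions around unary nodes. One must check that the wrapping $c\,\nu(\cdot)+c$ is absorbed identically regardless of whether the sibling sum is collapsed first. To handle this, I would canonicalize each sum as a multiset of products, so that uniqueness holds up to commutativity and associativity of $+$ and $\times$, and state the uniqueness in that sense.
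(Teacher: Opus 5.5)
\textbf{Verdict: there is a genuine gap, and it is in uniqueness.} Your subsumption bookkeeping is sound: you set the new multiplicative placeholders to $1$ and the additive ones to $0$, and $\equiv$ survives composition on a finite intersection of intervals about $0$. This is more careful than the paper's ``maximal version'' remark. The uniqueness argument, however, fails on two levels.

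\textbf{The map you prove unique is never fixed.} Step~1 applies $\mathcal{A}$ only at the sum and product levels of a decomposition from Proposition~\ref{sec:prop1}. Step~2, like the paper's definition of $\mathcal{N}$, applies it at every node of $s$. These already disagree for $s=x\sin x$. Step~1 gives $C+L\sin(L)$. Node-wise expansion gives $c\,(L\cdot(c\sin(L)+c))+c$, which is $L\sin(L)+L$ after distributing. Moreover, Proposition~\ref{sec:prop1} only asserts that \emph{some} decomposition exists, so Step~1 is not single-valued. For $x(\sin x+\cos x)$, the distributed decomposition yields $C+L\sin(L)+L\cos(L)$. Keeping $I(\sin x+\cos x)$ as a factor instead yields $L+L\sin(L)+L\cos(L)$ after flattening.

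\textbf{The rewrite system of Steps~2--3 is not confluent.} So the local-confluence check would fail, and Newman's lemma cannot be invoked.
To reach $c_0+\sum c_i\prod\nu_{ij}(T_{ij})$ from node-wise expansion you need distributivity. Distributivity copies subtrees, so it breaks your node-count termination measure. Its critical pairs with the collapse rules are where order dependence actually lives. In $L\cdot(c\sin(C)+c)$, which comes from $s=x\sin(c)$, collapsing $\sin(C)\sim C$ first gives $L$. Distributing first gives $L+L$, for which Definition~\ref{def:equiv} has no rule.
Even on pure atom blocks the listed rules are not confluent. $L\cdot L\cdot P$ reduces to $P$ by applying $P\cdot L\sim P$ twice. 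It also reduces to the irreducible $P\cdot P$ by applying $L\cdot L\sim P$.
Your degree invariant is the right repair, with two caveats. The ``join'' description is wrong for products, since $L\cdot L\sim P$. And the invariant only determines a block's atom once you add $L+L\sim L$, $P+P\sim P$ and $P\cdot P\sim P$, or define the collapse semantically by degree class.

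\textbf{How to repair it.} Follow the paper's route rather than rewriting theory. Define $\mathcal{N}$ by bottom-up recursion on the tree of $s$: $\mathcal{N}(\nu(s'))=c\,\nu(\mathcal{N}(s'))+c$ and $\mathcal{N}(s_1\circ s_2)=\mathcal{A}[\mathcal{N}(s_1)\circ\mathcal{N}(s_2)]$. At each node, follow this with a deterministic flatten-and-collapse step: distribute fully, do not merge like terms, and collapse each maximal unary-free block by degree. Existence, finiteness and the sum-of-products shape then follow by induction on height. Uniqueness, up to commutativity and associativity, is just single-valuedness of a recursively defined function. The paper itself appeals only to idempotence at this point, which is weaker.

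One smaller point: $\mathcal{N}(s)$ can be deeper than $s$, because the affine wrappers add levels. This does not affect finiteness.
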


\begin{theorem}
\label{theorem1}
    Let $\mathcal{S} \!=\! \{ \mathcal{N}(s)\!\! :\! s \!\in\! \mathcal{S}_{\text{skel}} \}$ be the set of canonical univariate skeletons obtained from skeletons $s$ with a total unary operator count bounded by $N_U$ and a maximum unary nesting depth bounded by $N_D$. 
    Then, the set $\mathcal{S}$ is finite.
\end{theorem}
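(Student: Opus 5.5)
The plan is an induction on the unary nesting depth $N_D$, using Lemma~\ref{lemma1} to give every canonical skeleton the form $c_0 + \sum_i c_i \prod_j \nu_{ij}(T_{ij}(x))$. Here the leaves lie in $\{C, L, P\}$ and each $T_{ij}$ is again in normal form. The key observation is that the closure rules of Definition~\ref{def:equiv} absorb all purely algebraic growth: sums and products of atoms collapse into $C$, $L$ or $P$. In particular, $P$ stands for every finite-degree polynomial. Hence the only sources of distinct structure are the unary nodes, and these are bounded by $N_U$ and $N_D$.

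\textbf{Step 1 (depth zero).} If $s$ contains no unary operator, it is a finite $+,\times$ composition of $x$ and placeholders. Applying the rules of Definition~\ref{def:equiv} bottom-up, every such tree reduces to one of $C$, $L$ or $P$. So the set of canonical skeletons with $N_U = 0$ has at most three elements.

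\textbf{Step 2 (bounding the outer sum-of-products).} For a canonical skeleton with unary count at most $N_U$, consider the top-level decomposition from Lemma~\ref{lemma1}. I would argue the following:
\begin{itemize}[leftmargin=1em, noitemsep,topsep=0pt]
    \item Each product $\prod_j \nu_{ij}(T_{ij})$ contains either a unary factor or only atom leaves. The latter products (the terms with $\nu_{ij} = I$) merge into a single atom in $\{C, L, P\}$ by the polynomial closure rules.
    \item Products with at least one non-identity $\nu_{ij}$ each consume at least one unary operator. So there are at most $N_U$ of them, each with at most $N_U$ unary factors.
    \item The identity factors inside a product again merge into one atom multiplier in $\{C, L, P\}$.
\end{itemize}
Each unary factor is $\nu(T)$, with $\nu$ ranging over the finite set $\mathcal{U}$. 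The argument $T$ is a canonical skeleton with unary count at most $N_U - 1$ and nesting depth at most $N_D - 1$. By the induction hypothesis on $(N_D, N_U)$, the set of possible $T$ is finite. The power operator is counted as a single element of $\mathcal{U}$, since its exponent is external to the skeleton. The top-level tree is therefore assembled from finitely many choices:
\begin{itemize}[leftmargin=1em, noitemsep,topsep=0pt]
    \item at most $N_U$ terms;
    \item at most $N_U$ unary factors per term, drawn from a finite set;
    \item one atom multiplier per term and one residual atom.
\end{itemize}
Up to reordering of commutative operands, only finitely many trees arise, which gives finiteness. The uniqueness in Lemma~\ref{lemma1} is what prevents the same function from being counted through infinitely many syntactic variants.

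\textbf{Main obstacle.} The delicate point is Step 2's claim that unboundedly many identity-factor terms and products cannot generate infinitely many distinct normal forms. For example, sums like $c\,\nu(T)\,x + c\,\nu(T)\,x^2 + \dots$ share the same unary factor, so repetitions of one unary subtree might not each consume a fresh unary operator. I would handle this by arguing that the normal form groups such terms by their unary-factor multiset. Terms with an identical multiset $\prod \nu(T)$ are combined via distributivity into $\prod \nu(T)\cdot(P)$, using $P+P\sim P$, which follows from $P+L\sim P$ and the closure of polynomials. So each distinct multiset appears at most once. Distinct multisets each require their own unary occurrences, and the total number of unary operators is bounded by $N_U$. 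Hence both the number of terms and the number of factors per term are bounded by $N_U$, and the count closes.
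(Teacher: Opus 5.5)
Your route is the paper's route made explicit: finitely many node labels and a bounded number of nodes once the closure rules have absorbed the purely algebraic growth, hence finitely many trees. You organize it as an induction on nesting depth, and you spell out the grouping of terms that share a unary factor, which the paper covers only with the remark that repeated additions and multiplications are absorbed into atoms.

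The step that fails as written is the term count in Step 2 and in your final paragraph. There you use the budget $N_U$ of the original skeleton $s$ as a budget for the unary occurrences of $\mathcal{N}(s)$. You need a form in which every identity factor is an atom, both for ``the identity factors merge into one atom multiplier'' and for grouping terms by distributivity. To reach it, products of sums must be distributed, as in the multiplication case of the proof of Lemma~\ref{lemma1}, and distribution copies unary nodes. Take $s=(\sin x+\tanh x+\exp x)(\log x+\sinh x+\cosh x)$, which has $N_U=6$. Then $\mathcal{N}(s)$ contains the nine cross terms $C\sin(L)\log(L)$, $C\sin(L)\sinh(L)$, \dots, $C\exp(L)\cosh(L)$, with nine distinct factor multisets. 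So ``distinct multisets each require their own unary occurrences'' and ``at most $N_U$ terms'' are both false. If you refuse to distribute, the factor $I(\sin x+\tanh x+\exp x)$ carries unary operators and does not merge into an atom, so the other half of Step 2 breaks instead.

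The conclusion survives if you bound the size of each multiset rather than the number of terms:
\begin{itemize}[leftmargin=1em, noitemsep,topsep=0pt]
    \item After distribution, every monomial is a product of distinct leaves of the outer $\{+,\times\}$ tree of $s$, with each outermost unary subtree counted as a leaf. So each monomial has at most $N_U$ unary factors.
    \item Each factor has the form $\nu(\mathcal{N}(s'))$, where $\nu\in\mathcal{U}$ and $s'$ is the argument of a unary node of $s$. Hence $s'$ has unary count at most $N_U-1$ and depth at most $N_D-1$, and the factor ranges over a finite set by induction. For this, phrase the induction on the counts of the original skeleton, not of the canonical one.
    \item After your grouping, each multiset occurs at most once, with a single atom multiplier. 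The number of terms is therefore at most the number of multisets of size at most $N_U$ over a finite alphabet, which is finite.
\end{itemize}
The paper's own proof makes the same tacit move when it asserts that the number of $\nu_{ij}$ in $\mathcal{N}(s)$ is bounded by $N_U$, so this repair is needed there too.
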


Lemma~\ref{lemma1} and
Theorem~\ref{theorem1} are proved in \textbf{S.M.}~\ref{app:lemma1} and~\ref{app:theorem1}.

\section{Grammar-based Skeleton Generation}

Theorem~\ref{theorem1} establishes that the \textbf{ex}haustive \textbf{ca}nonical \textbf{u}nivariate \textbf{s}keletons (\textbf{ExCaUS}) set $\mathcal{S}$ is finite under the constraints $N_U$ and $N_D$. 
A procedure to construct $\mathcal{S}$ must ensure that each element is generated exactly once, avoiding the redundant evaluation of non-canonical or functionally equivalent forms.
Below, we design a CFG to produce expressions that are, by construction, in their normal form $\mathcal{N}(s)$.

\vspace{-1.5ex}
\subsection{Grammar Definition}

We define a grammar $G = (V, \Sigma, R, \sigma)$ where $V$, $\Sigma$, and $R$ are the set of non-terminals, terminal symbols, and production rules, respectively, and $\sigma$ is the start symbol. 
To align with the Proposition~\ref{sec:prop1}, 
the non-terminals represent the hierarchical levels of the sum-of-products form.
$R$ is defined as:
\[
\begin{aligned}
\sigma &\to \tau, 
&\qquad \tau &\to C + \Omega \mid C + \Omega + \tau,\\
\Omega &\to C \cdot \Phi,
& \Phi &\to \Psi \mid \Psi \cdot \Phi,\\
\Psi &\to \nu(\tau) \mid \mathrm{Atom},
& \mathrm{Atom} &\to L \mid P.
\end{aligned}
\]

Here, 
$\Omega$ represents an individual component of the sum, $\Phi$ represents the product of unary blocks, and $\Psi$ is either an atom or a unary operator acting on a sub-skeleton.
Note that the atom $C$ is excluded from the terminal production rule, as its inclusion would introduce functional redundancies.
Specifically, any occurrences of $C$ within a functional branch are subsumed by the existing additive and multiplicative placeholders at higher levels of the structural hierarchy.
$G$ serves as the blueprint for generating expressions that adhere to the recursive canonical form (Lemma~\ref{lemma1}).
While the recursive nature of $R$ theoretically allows for infinite functional topologies, the constraints described in the following subsections bound the search space and derive a finite and exhaustive set $\mathcal{S}$.

Recall $\nu \in \mathcal{U} \cup \{I\}$.  
The set $\mathcal{U}$ is selected to include the most common functional forms found across real-world engineering and physics problems~\cite{feynman,SRthatscales}.
We employ the operators: $\sin$, $\tan$, $\arcsin$, $\arctan$, $\sinh$, $\cosh$, $\tanh$, $\text{arsinh}$, $\text{arcosh}$, $\text{artanh}$, $\exp$, $\log$, $\sqrt{\cdot}$, $\texttt{abs}$, $\texttt{square}$, $\texttt{cube}$, $\texttt{Pow}$, and $\texttt{inv}$.

\vspace{-1.5ex}
\subsection{Production Constraints for Canonicality}

To ensure that the grammar generates only irreducible skeletons in their normal form, we impose three logical constraints. 
They enforce the structural closure rules in Def.~\ref{def:equiv}:

\begin{itemize}[leftmargin=1em, noitemsep,topsep=0pt]
    \item Additive Independence: In the expansion of $\tau$, the terms $\{\Omega_i\}$ must be functionally linearly independent. Furthermore, a sum is prohibited if the resulting expression is subsumed by an existing atomic category. 
    For example, the combination $C \cdot P + C \cdot L$ is disallowed because it reduces to the polynomial atom $P$ via the closure rules.
    \item Multiplicative Irreducibility: In the expansion of $\Phi$, we enforce that the product $\Psi_1 \dots \Psi_k$ must be unique and non-atomic. 
    Since power operations are already defined as unary operators in $\mathcal{U}$, repeated factors are unnecessary. 
    In addition, products of atoms that simplify to a single atom (e.g., $L \cdot L$ and $L \cdot P$, which resolve to $P$) are also excluded.
    \item Functional Irreducibility: A branch $\Psi = \nu(\sigma)$ is permissible if it produces a functional form distinct from the atoms $L$ and $P$. 
    If a composition results in an expression equivalent to a linear or polynomial form (e.g., $\ln(e^x) \sim x$), the production is pruned. 
    This also prevents vacuous operations on constants, enforcing the rule $\nu(C) \sim C$ by requiring $\sigma$ to contain at least one instance of the variable $x$.
\end{itemize}

\vspace{-1.5ex}
\subsection{Complexity Constraints}

To ensure that $\mathcal{S}$ is a finite and exhaustive dataset, our derivation process is governed by two constraints:
\begin{itemize}[leftmargin=1em, noitemsep,topsep=0pt]
    \item Global Unary Bound: We consider a unary count, $n_u$, which is incremented globally with every application of the rule $\Psi \to \nu(\sigma)$. 
    For any generated skeleton $s$, $n_u$ must not exceed $N_U$, restricting the total functional complexity.
    \item Compositional Nesting Bound: A local nesting depth $n_d$ tracks the recursive depth of the functional composition.
    Thus, the production $\Psi \to \nu(\sigma)$ increments $n_d$. 
    The rule is only available if the current depth $n_d < N_D$.
\end{itemize}




To mitigate computational overhead and prevent the pipeline from producing unrealistic skeletons or expressions that are functionally equivalent to others, 
an additional set of simplification and filtering rules is given in \textbf{S.M.}~\ref{app:rules}.

\vspace{-1.5ex}
\subsection{Symmetry Breaking and Uniqueness}

To ensure that the resulting dataset $\mathcal{S}$ contains unique functional representatives, we must address the structural symmetries inherent in the commutativity of addition and multiplication. 
Without such constraints, the grammar would generate redundant entries; e.g., $C\sin(L) + C\exp(L)$ and $C\exp(L) + C\sin(L)$.
Thus, we enforce a strict lexicographical ordering during the expansion of the recursive rules.
When generating a sum of terms ($\Omega$) or a product of factors ($\Psi$), the components are sorted according to a fixed priority:

\begin{enumerate}[leftmargin=1em, noitemsep,topsep=0pt]
    \item Complexity: Components are ordered by their total number of unary operators ($n_u$) and then by their nesting depth ($n_d$).
    \item Order: For elements with equal complexity, ties are broken by the alphabetical order of the string representation of their operators (e.g., $\exp(\cdot) \prec \ln(\cdot) \prec \sin(\cdot)$) and atoms ($L \prec P$).
\end{enumerate}


\vspace{-1.5ex}
\subsection{Implementation Algorithm}

The implementation of the skeleton generator is structured as a state-space expansion problem, where $G$ serves as the structural foundation and the symbolic library SymPy (\href{www.sympy.org/}{\text{sympy.org}}) is used as an engine for algebraic normalization.
By adopting a breadth-first traversal strategy, the skeletons are enumerated hierarchically according to their structural complexity.
The generator stores every unique intermediate derivation string encountered in an efficient hash table.

\setlength{\textfloatsep}{7pt}
\begin{algorithm}[t]
\fontsize{6.5}{7.1}\selectfont
\caption{Dataset Generation of Canonical Skeletons}
\label{alg:enumeration}
\renewcommand{\algorithmicrequire}{\textbf{Inputs:}}  
\renewcommand{\algorithmicensure}{\textbf{Output:}}
\begin{algorithmic}[1]
\Require $G$, $N_U$, $N_D$. \hspace{9.5em} \textbf{Output:} Set of unique canonical skeletons $\mathcal{S}$
\State $\mathcal{S} \gets \emptyset$
\State $Q \gets \text{Queue}([\sigma])$ \Comment{Initialize with the start symbol}
\State $\text{SeenPaths} \gets \text{Set}([\sigma])$ \Comment{Track partial derivations}

\While{$Q$ is not empty}
    \State $\text{expr} \gets Q.\text{pop}()$
    
    \If{$\text{expr}$ is fully resolved} \label{alg:FT1}
        \State $\text{s}_{\text{sorted}} \gets \texttt{CanonicalSort}(\texttt{simplify}(\text{expr}))$
        \State $\mathcal{S}.\texttt{add}(\text{s}_{\text{sorted}})$
        \State \textbf{continue} \label{alg:FT2}
    \EndIf
    
    \State $NT \gets \texttt{findLeftmostNT}(\text{expr})$
    \ForAll{productions $NT \to \beta$ in $G$}
        \State $\text{path} \gets \texttt{applyRule}(\text{expr}, NT \to \beta)$ \label{alg:applyRule}
        \State $\text{s} \gets \texttt{getsubTree}(\text{path})$
            \State $\text{s}' \gets \texttt{simplify}(\text{s})$
            
            \If{$\texttt{cmplx}(\text{s}') < \texttt{cmplx}(\text{s})$} 
                \State \textbf{reject} $\text{path}$ \Comment{Functional Irreducibility} \label{alg:irreduc}
                \State \textbf{continue}
            \EndIf
            \State $\text{s}_{\text{can}} \gets \texttt{CanonicalSort}(\text{s}')$ \Comment{Sort locally} \label{alg:localsort}
            \State $\text{path} \gets \texttt{updateSubtree}(\text{path}, \text{s}_{\text{can}})$ 
        
        \Statex \hspace*{3em} \# Global Constraint Validation
        \If{$\texttt{countNu}(\text{path}) \!\le\! N_U$ and $\texttt{maxNd}(\text{path}) \!\le\! N_D$ and $\text{path} \!\notin\! \text{SeenPaths}$}\label{alg:addQ}
            \State $Q.\texttt{push}(\text{path})$ 
            \State $\text{SeenPaths}.\texttt{add}(\text{path})$
        \EndIf
    \EndFor
\EndWhile
\end{algorithmic}
\end{algorithm}

As shown in Algorithm~\ref{alg:enumeration}, the main loop extracts a partial expression from the queue $Q$ to identify its leftmost non-terminal using \texttt{findLeftmostNT}. 
For each admissible production rule from $G$ applied to the current expression, we generate a new candidate \texttt{path} (Line~\ref{alg:applyRule}).
Function \texttt{getSubtree} does a recursive look-back starting from the site of the most recent substitution.
It identifies the most immediate parent node in the derivation tree that has transitioned into a fully resolved state, meaning the entire branch rooted at that node now consists exclusively of terminal symbols.

Once this branch is isolated, it is passed to SymPy's \texttt{simplify} for algebraic simplification.
If the result is found to be functionally equivalent to an atom or if its expression complexity, measured using function \texttt{cmplx} as the number of nodes required by the expression tree, is lower than that of the unsimplified version, the entire derivation path is rejected  (Line~\ref{alg:irreduc}).
This ensures that any functional form that simplifies to a lower-complexity form is discarded, as the deterministic nature of the breadth-first traversal guarantees that the canonical representative of that form was already generated earlier.

If the extended path is not rejected, \texttt{CanonicalSort} (Line~\ref{alg:localsort}) organizes the components of the simplified subtree to enforce a unique representation. 
This function applies a multi-level priority sort, ordering terms or factors first by their total unary operator count and nesting depth, and then by the alphabetical string representation of their root operators. 
This locally canonicalized branch is then substituted back into the main derivation tree using function \texttt{updateSubtree}. 
Before a path is returned to $Q$, it must satisfy the global complexity constraints for $N_U$ and $N_D$ (Line~\ref{alg:addQ}). 
If a derivation is fully resolved, it undergoes a final algebraic reduction and sorting before being added to the hash-based set $\mathcal{S}$ (Lines~\ref{alg:FT1}--\ref{alg:FT2}).

\vspace{-1ex}
\section{Sampling Strategy} \label{sec:sampling}

Below, we outline the steps for instantiating mathematical expressions and sampling corresponding numerical datasets. 


\vspace{-1.5ex}
\subsection{Sampling Expression Constants and Support}

Given a skeleton $\mathbf{e}(x) \in \mathcal{S}$, we construct a parameterized function $f(x)$.
Atoms $C$ are replaced with constant placeholders $c$, and atoms $L$ are expanded to $c\,x + c$. 
Atoms $P = P^m(x)$ are replaced by polynomials $\sum_{i=0}^m c_i x^i$, where the degree $m$ is sampled from $\mathcal{U}(2, 6)$, or constrained to $\mathcal{U}(2, 3)$ when nested within a unary operator to prevent explosive growth.

Power operators $\texttt{Pow}(\cdot)$, treated as unary in Sec.~\ref{sec:datasetdef}, are instantiated by assigning a rational exponent $h \in (0, 7)$. 
To avoid overriding primitive operators, values $h \in \{0.5, 1, 2, 3\}$ are excluded. 
The exponent $h$ is sampled with a $70\%$ probability as an integer and a $30\%$ probability as a fraction $p/q$ with denominator $q \in \{2, \dots, 6\}$. 
Once the symbolic structure is complete, each placeholder $c$ is sampled independently from $\mathcal{U}(-50, 50)$, yielding a fully parameterized function $f(x)$.

Finally, $f(x)$ is evaluated to generate input--response pairs $(X, Y)$ with $n = 10,000$ points. 
The support vector $X \in \mathbb{R}^n$ is drawn uniformly from $\mathcal{U}(-10, 10)$. 
Although this domain range is held constant to maintain evaluation uniformity, the randomized sampling of the internal constants scales the functions across diverse numerical ranges naturally. 
The response vector is subsequently evaluated as $Y = f(X)$.

\vspace{-1.8ex}
\subsection{Sampling Data} \label{sec:samplingdata}

To construct a robust dataset for training and evaluating SR models, $(X, Y)$ pairs must be sampled without triggering numerical instabilities or relying on artificial protected operators. 
Thus, our data generation builds upon the safeguard mechanisms introduced by \citet{MultiSetSR}.

Our data generation process parses expressions hierarchically to handle domain constraints across three classes of ``special functions'': single-bounded (e.g., $\log$ and $\sqrt{\cdot}$), double-bounded (e.g., $\arcsin$ and $\arccos(x)$), and operators with vertical singularities (e.g., $\texttt{inv}$ and $\tan$). 
While single- and double-bounded operations are managed via dynamic input shifting and linear scaling (see \textbf{S.M.}~\ref{app:invalidops} for full tree-traversal details), our handling of vertical asymptotes represents a major point of divergence and improvement over prior work.

Existing approaches to managing vertical asymptotes safely rely on post-hoc truncation or crude grid heuristics: 
\begin{itemize}[leftmargin=1em, noitemsep,topsep=0pt]
    \item \citet{SRthatscales} employed post-hoc truncation, dropping individual input--response pairs that evaluate to undefined values (NaN). A critical drawback is that expressions such as $\ln(x-8)$ yield undefined outputs across major regions of $x \in [-10, 8]$, severely shrinking the target dataset and creating unrepresentative input spaces of varying lengths.
    \item \citet{MultiSetSR} attempted to identify singularities by tracking zero-crossings on a uniform grid (e.g., 1000 points) and enforcing a fixed spatial exclusion window ($\epsilon = 0.05$). This approach suffers from two flaws: (i) a uniform spatial threshold ignores local scaling; e.g., for $f(x) = \frac{1}{x}$, $\epsilon = 0.05$ restricts outputs to a safe $f(0.05) = 20$, whereas for a steeper case like $f(x) = \frac{1}{x^5}$, the same distance yields $f(0.05) = 3.2 \times 10^6$, instantly triggering numerical overflow. (ii) Coarse grid discretization misses narrow singularities in high-frequency terms such as $\sin(20x^2)$.
\end{itemize}

\noindent To overcome these limitations, we propose a derivative-driven singularity avoidance mechanism.
Hence, the exclusion boundary is determined based on the geometric slope angle of the function. 
We define the unsafe asymptotic regime as the region where the tangent to the curve approaches a vertical orientation, indicating a structural explosion. 
Formally, we project the first derivative $f'(x)$ onto a bounded $[0, 1)$ metric representing the sine of the local slope angle. 
We exclude any input point $x$ that satisfies the condition on the sine of the slope angle, $\frac{|f'(x)|}{\sqrt{1 + (f'(x))^2}} \geq \alpha$, where $\alpha \in [0, 1)$ represents a hyperparameter threshold (e.g., $\alpha = 0.9995$, which structurally restricts the local slope angle to less than $\approx 88.2^\circ$). 
By using this geometric condition, sharp asymptotes receive a tightly compressed exclusion window, whereas wide asymptotic walls prompt the window to expand, as depicted in \textbf{S.M.}~\ref{app:invalidops}. 
Hence, this analysis yields a continuous exclusion window around each singularity point, which is stored to delineate forbidden subdomains, ensuring that the final equidistant support points are sampled exclusively within the remaining safe regions.


\section{Experimental Results}

\subsection{Dataset Comparison}

We compare 
our ExCaUS dataset to those of two neural SR approaches, SeTGAP~\cite{SeTGAP} and NeSymRes~\cite{SRthatscales}, with publicly available datasets.
The SeTGAP dataset contains $1.65$ million univariate skeletons, whereas the NeSymRes dataset comprises $10$ million skeletons spanning up to three input variables.
For fairness, we canonicalize both baseline datasets using our closure rules (Def.~\ref{def:equiv}).
To ensure consistency, we re-skeletonize NeSymRes's expressions with respect to their 
primary variable, $x_1$, which is present in all instances.
Hence, given a multivariate skeleton $s(x_1, x_2, x_3)$, a canonical univariate skeleton is obtained as $\mathcal{N}(\kappa(s(x_1, x_2, x_3) ; x_1))$.

\begin{table}[t]
\centering
\caption{Quantitative comparison of SR datasets}
\label{tab:dataset_comparison}
\vspace{-1ex}
\small
\resizebox{0.5\columnwidth}{!}{%
\begin{tabular}{l r r r}
\toprule
\textbf{Metric} & \textbf{ExCaUS} & \textbf{SeTGAP} & \textbf{NeSymRes} \\
\midrule
Total Expressions & 71,027 & 1,650,000 & 10,000,000 \\
Unique Skeletons & 71,027 & 664 & 128,514 \\
Duplication Rate (\%) & 0.00 & 99.96 & 98.71 \\
Dictionary Size (Unique Ops) & 16 & 12 & 11 \\
Mean Unary Nesting Depth & 1.58 & 1.29 & 1.12 \\
Max Unary Nesting Depth & 2 & 3 & 6 \\
Mean Operators per Expression & 2.32 & 1.36 & 1.31 \\
Max Operators per Expression & 4 & 3 & 10 \\
\bottomrule
\end{tabular}
}
\vspace{-2.7ex}
\end{table}

\begin{figure}[!t]
    \centering
    \includegraphics[width=.95\columnwidth]{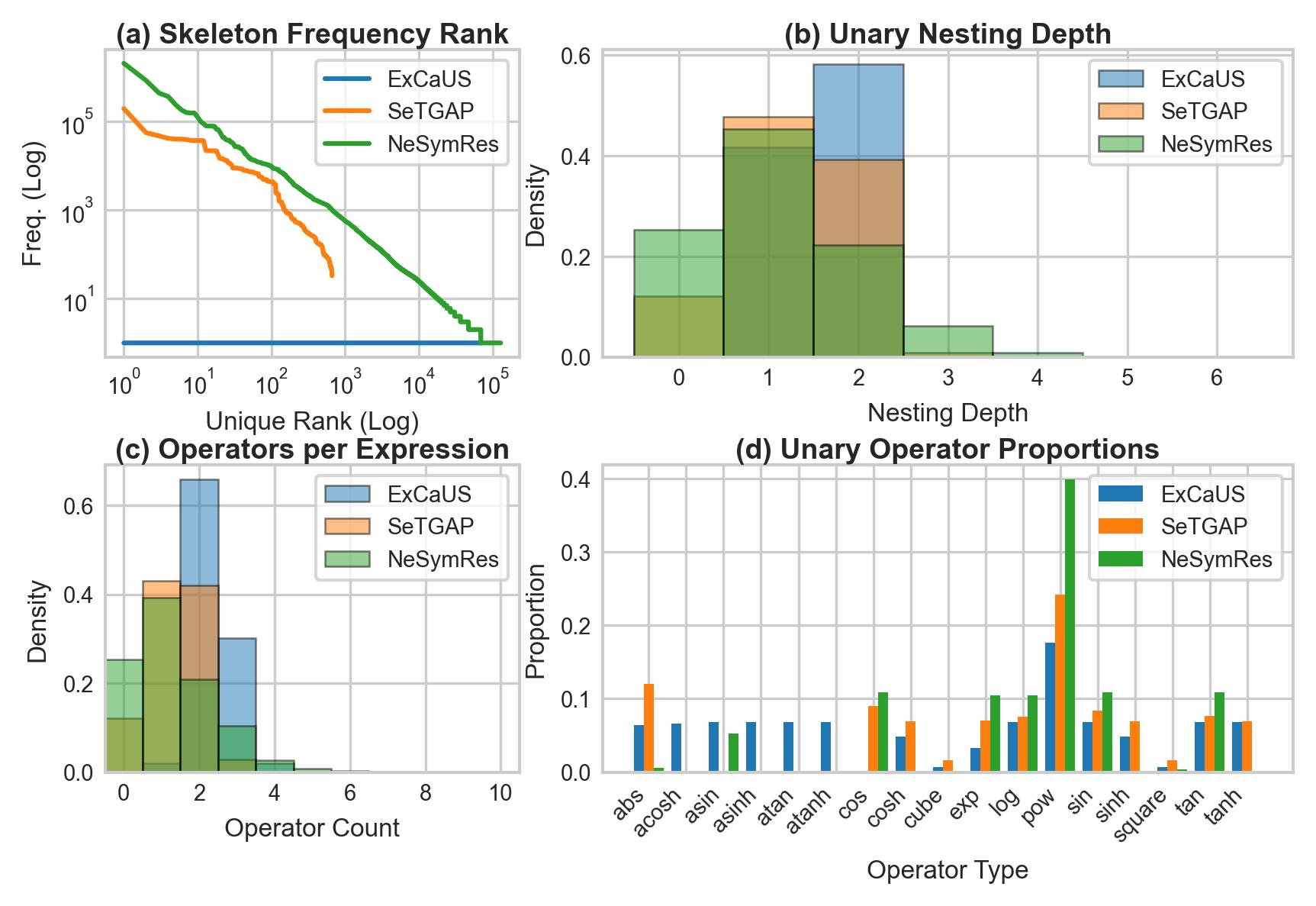}
    \vspace{-2ex}
    \caption{SR datasets comparison}
    \label{fig:datasets}
    \vspace{-1.5ex}
\end{figure}

To evaluate structural scale, redundancy, and complexity across datasets, 
in Table~\ref{tab:dataset_comparison}, we report the total raw expression count $N$, the number of unique canonical skeletons $U$, and the resulting duplication rate, $1\!-\!U/N$.
To evaluate functional complexity, we report the mean and maximum values for both operator count per expression and unary nesting depth.

Fig.~\ref{fig:datasets} depicts the datasets' distributional characteristics:

\begin{itemize}[leftmargin=1em, noitemsep,topsep=0pt]
\item \textbf{Frequency Rank (Fig.~\ref{fig:datasets}a):} Displays the unique skeletons' frequency sorted by rank on a log-log scale. 
This is used because scalar duplication rates fail to capture frequency concentration, revealing whether dataset expansion yields structural variety or merely repeats high-frequency heads.
\item \textbf{Unary Nesting Depth (Fig.~\ref{fig:datasets}b):} Shows the empirical probability density function of functional nesting depth. 
\item \textbf{Operators per Expression (Fig.~\ref{fig:datasets}c):} Probability density function of unary operator counts per expression.
\item \textbf{Unary Operator Proportions (Fig.~\ref{fig:datasets}d):} Illustrates the relative frequency distribution across individual operator types to evaluate operator usage balance across datasets.
\end{itemize}

\vspace{-1.5ex}
\subsection{Benchmarking}  \label{sec:resbench}

The dataset $\mathcal{S}$ also serves to construct a challenging benchmark, which we refer to as \textbf{ExCaUS-Bench}. 
Existing benchmarks often feature problems where evaluations readily achieve near-optimal performance. 
For instance, the Feynman dataset~\cite{feynman} mostly comprises additively separable equations, and the limited variable domains 
prevent the accurate identification of functional forms. 
For example, the narrow domains applied to equation \texttt{I.9.18} ($\frac{G\,m_1\, m_2}{(x_2 - x_1)^2 + (y_2 - y_1)^2 + (z_2 - z_1)^2}$) (specifically $[3, 4]$ for $x_1, y_1, z_1$ and $[1, 2]$ for $x_2, y_2, z_2$) restrict a model's capacity to infer the correct asymptotic behaviour.

Thus, we stratified $\mathcal{S}$ into three tiers of difficulty (Easy, Medium, and Hard) measured by the length of the skeleton string representation. 
We randomly sampled 50 skeletons per tier. 
Following the methodology outlined in Sec.~\ref{sec:sampling}, each skeleton was used to generate a single parametric equation and a corresponding dataset of 10,000 points. 
The resulting equations (denoted E1--E50, M1--M50, and H1--H50) and their generated curves are detailed in Tables~\ref{tab:eq_e}--\ref{tab:eq_h} and Figs.~\ref{fig:eq_e1}--\ref{fig:eq_h3} in \textbf{S.M.}~\ref{app:benchmark}. 
To evaluate generalization, each dataset was divided into an interpolation set (the middle 90\% of the data, bounded between the 5th and 95th percentiles) and an extrapolation set (comprising the bottom 5\% and top 5\% tails).

Using SRToolkit~\citep{SRToolkit} as a unified infrastructure, we evaluated 13 SR methods: 
AIFeynman~\citep{feynman}, Bayesian SR (BSR)~\citep{Jin2019_BSR}, Deep SR (DSR)~\citep{petersen2021deep}, GPlearn (\href{https://gplearn.readthedocs.io}{\text{gplearn.readthedocs.io}}), GP-GOMEA~\citep{Virgolin2021_GPGOMEA}, ITEA~\citep{deFranca2021_ITEA}, Operon~\citep{Kommenda2019_ParamID}, PySR~\citep{pysr2}, NeSymReS~\citep{SRthatscales}, E2E~\citep{SRtransformer}, TaylorGP~\citep{TaylorGP}, Transformer-based Planning for SR (TPSR)~\citep{shojaee_transformer-based_2023}, and unified DSR (uDSR)~\citep{uDSR}.
For hyperparameter tuning, we performed an exhaustive grid search using one representative equation per tier (E0, M9, and H9). 
For fairness, each candidate configuration was executed five times per problem, selecting the hyperparameter set yielding the 
average \ac{mse} across valid (non-$\mathrm{NaN}$) runs.
The optimal configurations are detailed in \textbf{S.M.}~\ref{app:hyperparameter}.

Using these tuned parameters, we evaluated all SR methods on the full benchmark across ten independent runs per equation, training on the interpolation datasets. 
Figs.~\ref{fig:mse} and~\ref{fig:r2} show extrapolation predictive performance in terms of MSE and coefficient of determination, $R^2$, respectively. 
For a given problem, we computed the average metric across valid runs. 
Each subplot summarizes the distribution of these per-problem averages across all equations in that tier, where box boundaries represent the interquartile range (IQR), solid lines indicate medians, and individual points denote outliers.
Tables~\ref{tab:easy_mse}--\ref{tab:hard_r2} of \textbf{S.M.}~\ref{app:hyperparameter} report the per-problem mean and standard deviation for interpolation and extrapolation MSE and $R^2$.

\vspace{-1.0ex}
\section{Discussion}

Table~\ref{tab:dataset_comparison} and Fig.~\ref{fig:datasets} show the efficiency contrast between our deterministic generation framework and stochastic sampling pipelines. 
While the NeSymRes and SeTGAP datasets require generating millions of expressions, 
post-hoc canonicalization reveals substantial structural redundancy, with duplication rates exceeding $98.7\%$ and $99.9\%$. 
As illustrated 
in Fig.~\ref{fig:datasets}(a), stochastic samplers 
over-expose neural SR models to a small subset of trivial functions (e.g., the most frequent skeleton appears over $10^6$ times in NeSymRes). 
In contrast, ExCaUS achieves a strictly flat frequency-rank curve with a $0\%$ duplication rate, containing $71,027$ unique canonical skeletons. 
This reduces training memory and computation requirements while providing complete coverage of the functional space.

\begin{figure}[!t]
    \centering
    \includegraphics[width=.9\columnwidth]{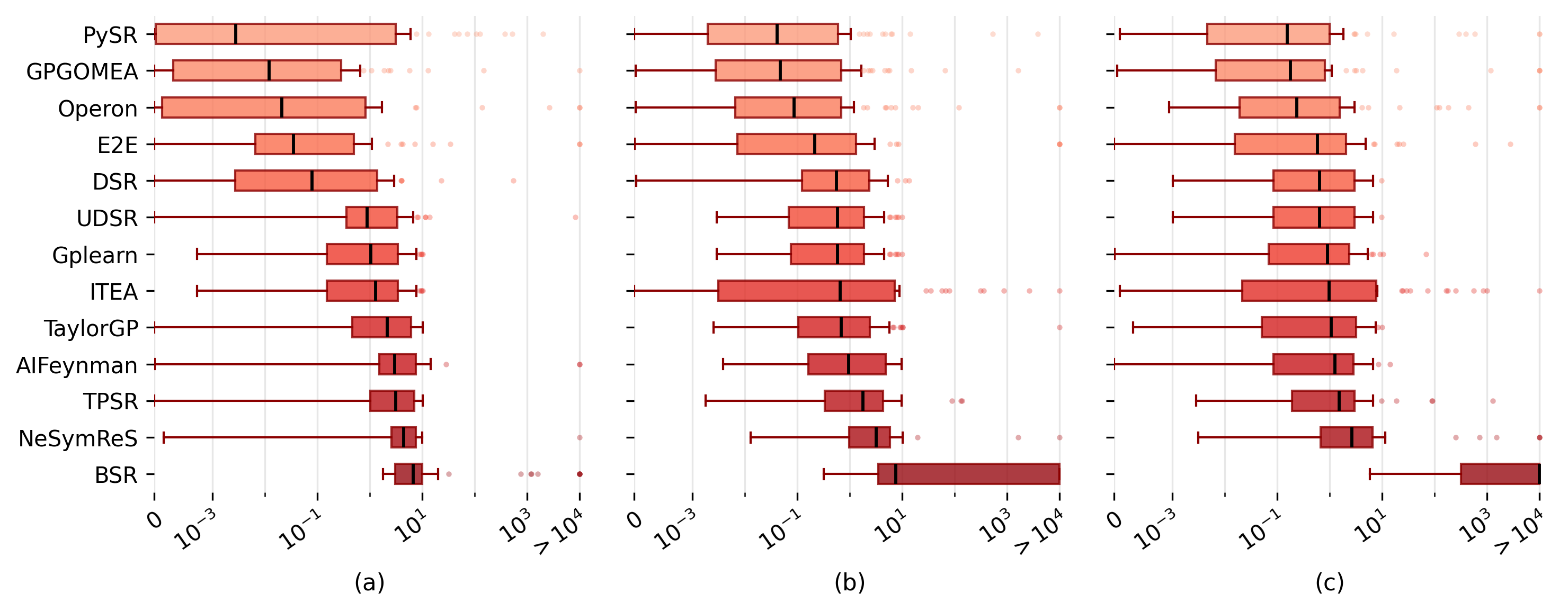}
    \vspace{-1.5ex}
    \caption{Extrapolation MSE on the \textbf{(a)} E, \textbf{(b)} M, and \textbf{(c)} H tiers}
    \label{fig:mse}
    \vspace{-3.5ex}
\end{figure}

\begin{figure}[!t]
    \centering
    \includegraphics[width=.9\columnwidth]{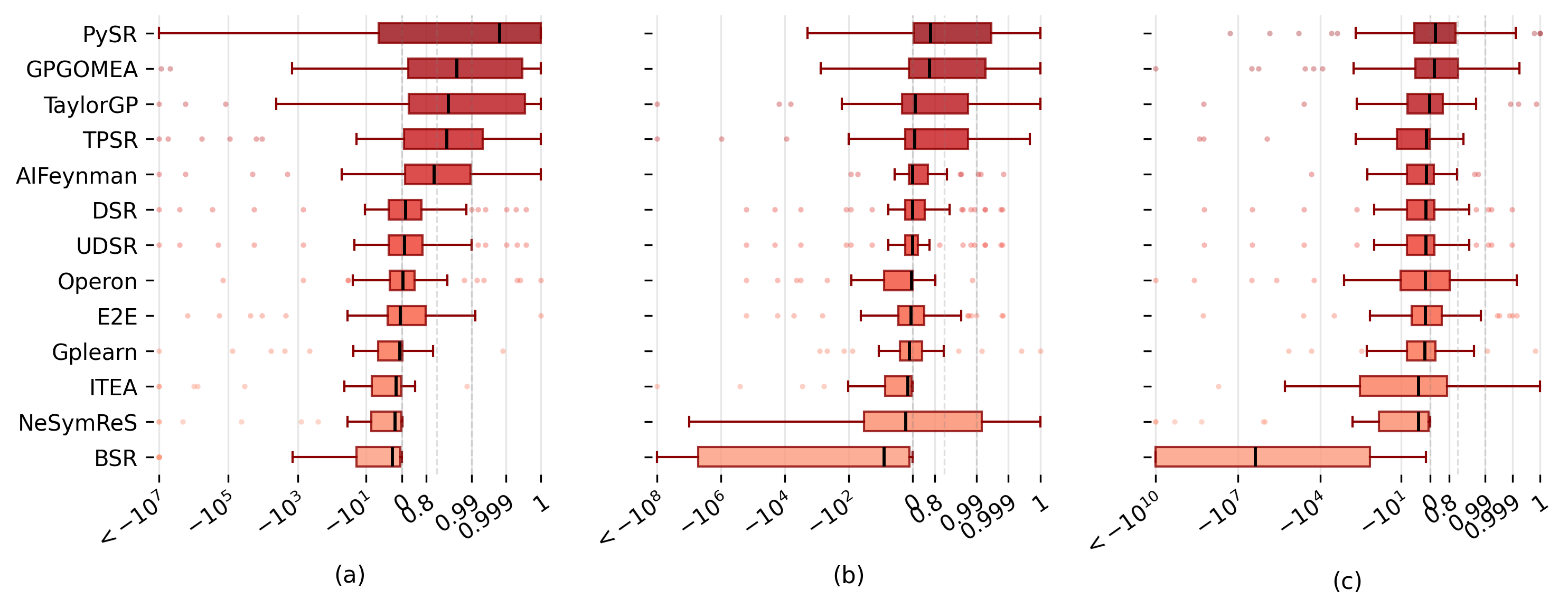}
    \vspace{-1.5ex}
    \caption{Extrapolation $R^2$ on the \textbf{(a)} E, \textbf{(b)} M, and \textbf{(c)} H tiers}
    \label{fig:r2}
    \vspace{-1ex}
\end{figure}

Furthermore, Figs.~\ref{fig:datasets}(b--d) highlight how ExCaUS effectively eliminates the simplicity and operator biases inherent to probabilistic sampling. 
As shown in Figs.~\ref{fig:datasets}(b) and (c), stochastic methods concentrate density heavily at low complexity levels (nesting depth $0$--$1$ and $\le 1$ operators per expression), leaving deeper structural topologies under-represented despite maintaining long tails (e.g., maximum operator counts up to $10$). 
In contrast, ExCaUS distributes expressions in a more balanced manner, reaching an average of $2.32$ operators per expression and a mean unary nesting depth of $1.58$. 
Crucially, Fig.~\ref{fig:datasets}(d) reveals extreme operator imbalance in the compared datasets, where \texttt{pow} dominates up to $40\%$ of all unary instances in NeSymRes. 
Conversely, ExCaUS spans a larger vocabulary ($16$ unique operators) with a notable uniform proportion across all functional families.
Although perfect operator balance is not strictly achieved due to the additional structural filtering rules introduced in \textbf{S.M.}~\ref{app:rules}, this well-apportioned coverage effectively prevents operator-level bias. 
This ensures that when neural SR models are pre-trained on ExCaUS in future work, they will learn unbiased representations across diverse mathematical domains.

Evaluation across ExCaUS-Bench (Figs.~\ref{fig:mse} and~\ref{fig:r2}) 
determines
that state-of-the-art SR methods struggle as structural complexity increases. 
On the Easy tier, the leading methods in extrapolation $R^2$ are PySR, GPGOMEA, and TaylorGP, whereas the top three in extrapolation MSE are PySR, GPGOMEA, and Operon. 
$R^2$ assesses an expression's capacity to capture the functional shape and variance, allowing methods like TaylorGP to achieve high $R^2$ despite small vertical shifts or scale mismatches that severely penalize MSE.
Conversely, algorithms like Operon specialize at fine-tuning numerical constants via nonlinear parameter optimization, minimizing residuals even if subtle trend mismatches slightly lower $R^2$.

Predictive capability degrades on the Medium and Hard tiers. 
Here, the extrapolation MSE distributions widen, producing a spread of extreme outliers. 
In addition, the median $R^2$ values shift heavily toward zero or achieve extreme negative values, indicating a complete failure to capture the underlying data variance outside the interpolation range. 
This drop in performance highlights the challenge introduced by ExCaUS-Bench: although modern algorithms can locally approximate numerical data within training domains, they systematically fail to deduce the underlying functional forms required for true mathematical extrapolation, even in univariate settings.

\section{Conclusions}

Neural SR models remain limited by the structural redundancy and simplicity bias of traditional stochastic sampling pipelines. 
To address this, we presented ExCaUS, an exhaustive dataset of univariate canonical skeletons generated via a deterministic constrained CFG, alongside ExCaUS-Bench, a 150-problem SR benchmark. 
ExCaUS represents a compact, operator-balanced dataset that eliminates functional redundancy and simplicity bias. 
Furthermore, by challenging algorithms that otherwise perform well on standard benchmarks, ExCaUS-Bench serves as a rigorous, non-redundant testbed to push future improvements in SR.
Building on these foundations, in the future we will focus on training neural SR models on datasets derived from ExCaUS and our proposed sampling strategy to achieve robust symbolic expression learning.



\printbibliography


 




\counterwithin{figure}{section}
\counterwithin{table}{section}
\counterwithin{algorithm}{section}
\counterwithin{equation}{section}

\clearpage
\appendices

\counterwithin{lemma}{section}

\section{Related Work} \label{app:related}

The performance of neural SR models is tied to the quality and diversity of their training datasets.
Thus far, the generation of these datasets has relied on stochastic tree-sampling methods, where mathematical expressions are constructed by randomly branching from a set of operators and terminals.
\citet{Lample2020Deep} pioneered this approach by generating millions of equations to train Transformer models for symbolic integration and \ac{ode} solving.
It uses a one-pass stochastic procedure designed to sample mathematical expression trees uniformly, so that every valid structure of a specific size has a similar probability of being selected.
While this allows for the generation of millions of samples, it remains a probabilistic exploration.
As the complexity of the expressions increases, the number of possible configurations grows exponentially, and uniform sampling can only capture a sparse fraction of the functional universe.

Building on this, \citet{SRthatscales} utilized a similar sampling method that incorporated a dataset of 10 million equations with a strict limit of five non-leaf nodes.
Their approach uses a post-processing step to simplify the resulting expression strings into skeletons.
However, since the initial sampling is stochastic, the process frequently generates trees that are structurally distinct but functionally redundant
(e.g., $x \cdot 1$ vs. $x+0$).
This calls for substantial data cleansing to remove invalid or trivial expressions, a challenge also faced by \citet{SRtransformer}.
To maximize variety, they apply random affine transformations ($ax+b$) to variables and operators.
While this prevents the model from seeing identical numerical realizations, it does not address the underlying collision problem.
That is, different stochastic sampling paths often converge on the same functional identity, leading to a significant simplicity bias where the model is over-exposed to over-represented forms.

\citet{SymbolicGPT} introduced a distinct approach.
Instead of sampling trees by the number of nodes, they use a template-based strategy in which equations are generated from a balanced binary tree of fixed depth.
This depth controls the number of functional nestings and, hence, expression complexity.
To allow for the representation of shallower or unbalanced forms, they use terminal nodes that truncate subtrees.
Constants are integrated via a probabilistic insertion ratio.
Nevertheless, the approach remains a stochastic decoration of fixed templates, which does not inherently prevent the generation of functionally redundant expressions.

Furthermore, \citet{MultiSetSR} introduced a more controlled stochastic framework for training a Multi-Set Transformer, using a larger set of operators than \citet{SRthatscales} and a probability-weighted sampling strategy to favor sound expressions over overly complex ones.
For example, they implemented a system to prevent algebraic redundancies and numerical instabilities.
They also utilized both additive and multiplicative placeholders at every expression subtree without restricting the total number of placeholders to generate more general skeletons. 
While these heuristics reduce the occurrence of junk expressions and improve structural diversity compared to prior random samplers, the underlying process remains probabilistic and relies on arbitrary sampling weights.

Grammar-based and constrained generation emerge as potential alternative methods.
\citet{Yu_2025} proposed a framework to generate libraries of 
50,000 skeleton expressions for \ac{ode} discovery.
The complexity of produced expressions is strictly limited by a maximum expression length, ranging from 40 to 65 elements.
For some benchmarks, they employ probabilistic \acp{cfg}, where lower probabilities are assigned to complex rules to bias the generation toward simpler functional forms.
While their approach ensures that generated skeletons conform to the structural requirements of differential equations, it still represents a sampling strategy designed for exploration.

\section{Lemma and Theorem Proofs} \label{app:proofs}

\subsection{Lemma 1: Recursive Canonical Form} \label{app:lemma1}

{\renewcommand{\thelemma}{}
\begin{lemma}
Every skeleton $s(x) \in S_{\text{skel}}$ is subsumed by a unique finite rooted tree $\mathcal{N}(s)$ with leaves $\in \{C, L, P\}$ and internal nodes that satisfy the recursive structural decomposition defined in Proposition~\ref{sec:prop1}.
That is, $s(x) = c_0 + \sum c_i \prod \nu_{ij}(T_{ij}(x))$, where $c_0, c_i$ are constant placeholders and $T_{ij}$ skeleton subtrees.
\end{lemma}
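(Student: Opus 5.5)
We argue by structural induction on the expression tree of $s$, using Proposition~\ref{sec:prop1} as the backbone and treating $\mathcal{N}$ as a terminating rewriting system. The proof has three parts: existence of the decomposition with subsumption, termination and finiteness, and uniqueness.

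\textbf{Existence and subsumption.} Write $s = \kappa(f)$ for some $f \in \mathcal{F}$. By Proposition~\ref{sec:prop1}, $f$ has the form $v_0 + \sum_i v_i \prod_j \nu_{ij}(v_{ij}(x))$. Subtraction and division are absorbed into negation and inversion in $\mathcal{U}$, and $\texttt{Pow}$ is treated as unary.

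We induct on the height of the tree.
\begin{itemize}[leftmargin=1em, noitemsep,topsep=0pt]
\item \emph{Base case.} The leaves are $x$ or a placeholder $c$. Applying $\mathcal{A}$ turns $x$ into $cx+c = L$ and $c$ into $C$ (using $C\cdot c \sim C$ and $C + c \sim C$).
\item \emph{Inductive step.} Apply the hypothesis to each $v_{ij}$, obtaining subtrees $T_{ij} = \mathcal{N}(v_{ij})$ with $T_{ij} \succeq v_{ij}$.
\item \emph{Top level.} Replace $v_0, v_i$ by fresh placeholders $c_0, c_i$. This gives $c_0 + \sum c_i \prod \nu_{ij}(T_{ij})$.
\end{itemize}

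Subsumption follows from three monotonicity facts.
\begin{itemize}[leftmargin=1em, noitemsep,topsep=0pt]
\item Each rewrite in Def.~\ref{def:equiv} and each affine expansion only enlarges the instantiable family. For example, $u = \mathcal{A}[u][[1,0]]$, and the polynomial closure rules hold because products and sums of polynomials are polynomials of bounded degree with free coefficients.
\item $\succeq$ is transitive.
\item $\succeq$ is compatible with substitution under $\nu$, $+$ and $\times$: given $\beta$, choose $\alpha$ componentwise on the subtrees.
\end{itemize}
Identities are required only on an open interval containing $0$, so compositions remain well defined wherever the original $s[\beta]$ is.

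\textbf{Termination and finiteness.} Affine expansion is applied once per node of the original finite tree. Closure rules of the form ``$X \circ Y \sim Z$'' strictly decrease the number of nodes. The unary rule $\nu(C) \sim C$ also decreases it. Hence rewriting terminates, and the result is a finite tree with leaves in $\{C, L, P\}$. $C$ can survive only as a coefficient placeholder.

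\textbf{Uniqueness (main obstacle).} The rewriting system is not obviously confluent, since different reduction orders might yield different trees. We handle this in three steps.
\begin{enumerate}[leftmargin=1.5em, noitemsep,topsep=0pt]
\item Define $\mathcal{N}$ by a fixed bottom-up strategy: normalize children first, then apply atom-closure rules at the parent until no rule applies.
\item Check local confluence by a critical-pair analysis on the finite rule table. All atom rules are closed in the lattice $C \prec L \prec P$, where any sum or product yields the join, except that $L\cdot L$ and $L \cdot x$ jump to $P$. Rule overlaps therefore always reduce to the same atom.
\item Termination together with local confluence gives confluence by Newman's lemma, so the normal form is unique.
\end{enumerate}
Commutativity of $+$ and $\times$ is handled by quotienting with respect to reordering of summands and factors. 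Canonical representatives are fixed by the ordering used later for symmetry breaking, so uniqueness is understood up to this identification.

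The delicate point is the critical-pair verification, particularly interactions between affine expansion and closure: $\mathcal{A}$ applied to a product versus to its factors. We will check that these interactions always collapse to the same atom class.
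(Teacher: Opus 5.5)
\textbf{Uniqueness step.} This is where your proof has a real gap, and not only because you defer the critical-pair check. The reason you give for why that check should succeed is false for the rules actually listed in Def.~\ref{def:equiv}. That definition has no rule for $P\cdot P$ (nor for $P+P$ or $L+L$), so the atom operations do not form the join table you describe.

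Because you quotient by AC, local confluence has to be checked for every way a rule can match inside a flattened product, and it fails there. Take $s=(x+c_1)(x+c_2)(x+c_3)(x+c_4)$. After leaf expansion the factors are $L\cdot L\cdot L\cdot L\to P\cdot L\cdot L$. From there, $P\cdot L\cdot L\to P\cdot L\to P$, but also $P\cdot L\cdot L\to P\cdot P$, which no rule reduces. So Newman's lemma gives you nothing. Concretely, your fixed bottom-up strategy sends $((x+c_1)(x+c_2))((x+c_3)(x+c_4))$ to a tree containing $P\cdot P$, and sends $(((x+c_1)(x+c_2))(x+c_3))(x+c_4)$ to $P$: two normal forms in one AC-class.

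The repair is to complete the table by adding $P\cdot P\sim P$, $P+P\sim P$ and $L+L\sim L$, all of which preserve $\succeq$. Then $+$ is the join on $C\prec L\prec P$, and $\times$ is degree addition capped at $P$ (reading $C,L,P$ as degrees $0,1,\ge 2$). Both are associative and commutative, so the atom-level overlaps do join. You must then say explicitly that $\mathcal{N}$ is computed in this completed system, not in Def.~\ref{def:equiv} as written.

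For comparison, the paper never attempts strategy independence. It builds $\mathcal{N}(s)$ deterministically bottom-up by structural induction on $s$ and adds an idempotence remark. Your step~1 alone already gives that weak sense of uniqueness; you need confluence only because you also quotient by AC.

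\textbf{Existence step.} This part has a second, independent problem. You construct the tree from a Proposition~\ref{sec:prop1} decomposition of a single instantiation $f$, and that creates three difficulties.
\begin{itemize}
\item The decomposition is not canonical. For example, $\sin(x)(e^x+1)$ can be kept as a product with an identity factor or distributed into two terms, and the two choices give different trees.
\item It need not decrease height unless you require the $v_{ij}$ to be proper subterms. Since $\nu_{ij}=I$ is allowed, $f=0+1\cdot I(f)$ is formally admissible.
\item Subsumption must hold for every instantiation $\beta$ of $s$, not just for the one $f$.
\end{itemize}
More importantly, you never identify the tree built here with the rewriting normal form whose uniqueness you argue afterwards. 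Your rewriting system has no distributivity rule, so the two can genuinely differ, and no single object $\mathcal{N}(s)$ is shown to have all the claimed properties at once.

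Follow the paper instead: induct directly on the tree of $s$ (leaf, unary root, $+$ root, $\times$ root) and define $\mathcal{N}(s)$ bottom-up. Then shape, leaf set, subsumption and uniqueness all refer to the same object, and Proposition~\ref{sec:prop1} serves only to motivate the target form.
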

}

\begin{proof}
We prove the lemma by structural induction on the height $h$ of the arbitrary skeleton tree $s$.
For the base case, the leaf nodes, we have either the variable $x$ or placeholders $c$. 
Applying the normalization operator: $\mathcal{N}(x) = \mathcal{A}[x] = c_1 x + c_2$; i.e., the atom $L$.
For a placeholder, $\mathcal{A}[c]$ reduces to the atom $C$.
In both cases, the leaves of $\mathcal{N}(s)$ belong to the set $\{C, L, P\}$, satisfying the case for a tree of height $h=0$.

For the inductive case, assume that for all trees $s'$ of height $\le h$, $\mathcal{N}(s')$ exists, is finite, and follows the recursive template structure.
We now consider a tree $s$ of height $h+1$:

\begin{itemize}[leftmargin=1em, noitemsep,topsep=0pt]
    \item Unary Root ($s = \nu(s')$): By the inductive hypothesis, $s'$ has a normal form $\mathcal{N}(s')$. 
    Applying the normalization to the root $\nu$: $\mathcal{N}(s) = \mathcal{A}[\nu(\mathcal{N}(s'))] = c_i \nu(\mathcal{N}(s')) + c_j$. 
    Since $\mathcal{N}(s')$ is already a valid sum-of-products from our hypothesis, the resulting expression $s$ remains a finite tree that matches the template structure for a single term.
    \item Binary Root ($s = b(s_1, s_2)$): We consider $b \in \{+, \times\}$. By the inductive hypothesis, $s_1$ and $s_2$ have canonical forms $\mathcal{N}(s_1)$ and $\mathcal{N}(s_2)$ that follow the expected template:
    \begin{itemize}
        \item Addition: $\mathcal{N}(s_1 + s_2) = \mathcal{A}[\mathcal{N}(s_1) + \mathcal{N}(s_2)]$. Since $\mathcal{N}(s_1)$ and $\mathcal{N}(s_2)$ are already sums of products, their sum is a larger sum introducing new placeholders; e.g., $c_1 \mathcal{N}(s_1) + c_2 + c_3 \mathcal{N}(s_2) + c_4$. 
        The closure rules ensure the redundant placeholders collapse into a single one.
        \item Multiplication: $\mathcal{N}(s_1 \times s_2) = \mathcal{A}[\mathcal{N}(s_1) \times \mathcal{N}(s_2)]$. 
        Multiplying two sum-of-products results in a new sum-of-products, which aligns with the expected template.
        If $\mathcal{N}(s_1)$ and $\mathcal{N}(s_2)$ are canonical atomic skeletons (e.g., $L$ and $P$), the closure rules (e.g., $P \cdot L \sim P$) are invoked to ensure that the product is projected onto the atomic leaf $P$ rather than remaining a complex tree.
    \end{itemize}
\end{itemize}

\noindent In all cases, the normalization operator $\mathcal{N}$ takes an arbitrary subtree in the tree and projects it into a sum-of-products form wrapped in an affine expansion. 
Because this process is applied bottom-up, and the equivalence rules ensure that the atoms $\{C, L, P\}$ are stable under these operations, the final tree $\mathcal{N}(s)$ is guaranteed to be a finite representative following the recursive structural decomposition.

Finally, normalization is idempotent: $\mathcal{N}(\mathcal{N}(s)) = \mathcal{N}(s)$. 
Because $\mathcal{A}[\mathcal{A}[u]]$ simplifies to $\mathcal{A}[u]$ via the merging of arbitrary placeholders, normalizing again produces no further changes. 
By construction, $\mathcal{N}(s)$ \textit{is the maximal version of the functional topology of} $s$. Since every leaf and node has been expanded to its most general affine form before being reduced by $\sim$, any instantiation of $s$ is necessarily a subset of the functions reachable by $\mathcal{N}(s)$, meaning $\mathcal{N}(s) \succeq s$.

\end{proof}

\subsection{Theorem 1} \label{app:theorem1}

{\renewcommand{\thetheorem}{}
\begin{theorem}
    Let $\mathcal{S} \!=\! \{ \mathcal{N}(s)\!\! :\! s \!\in\! \mathcal{S}_{\text{skel}} \}$ be the set of canonical univariate skeletons obtained from skeletons $s$ with a total unary operator count bounded by $N_U$ and a maximum unary nesting depth bounded by $N_D$. 
    Then, the set $\mathcal{S}$ is finite.
\end{theorem}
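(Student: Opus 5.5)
We proceed by strong induction on the nesting depth, showing that every canonical tree $\mathcal{N}(s)$ is built from finitely many "shapes". The key observation is that, after normalization, placeholders are anonymous and the atoms $C$, $L$, $P$ absorb all purely algebraic material. Hence the only information a canonical tree carries is its combinatorial structure: which unary operators appear, how they are nested, and how the resulting blocks are combined in sums and products. So we define, for $d \le N_D$ and $u \le N_U$, the set $\mathcal{S}_{d,u}$ of canonical trees with nesting depth at most $d$ and at most $u$ unary operators. We then prove that each $\mathcal{S}_{d,u}$ is finite; the theorem is the case $\mathcal{S} \subseteq \mathcal{S}_{N_D,N_U}$.

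\textbf{Base case $d=0$ (no unary operators).} By Lemma~\ref{lemma1}, $\mathcal{N}(s)$ is a sum of products of atoms. The closure rules of Def.~\ref{def:equiv} collapse any such combination: $L\cdot L\sim P$, $P\cdot L\sim P$, $P+L\sim P$, and constants are absorbed throughout. The result is therefore a single atom in $\{C,L,P\}$, so $|\mathcal{S}_{0,u}|\le 3$.

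\textbf{Inductive step.} Assume $\mathcal{S}_{d',u'}$ is finite for all $d'<d$ and all $u'\le N_U$. By Lemma~\ref{lemma1}, a tree in $\mathcal{S}_{d,u}$ has the form $c_0+\sum_i c_i\prod_j \Psi_{ij}$. Each factor $\Psi_{ij}$ is one of two kinds:
\begin{itemize}[leftmargin=1em, noitemsep,topsep=0pt]
    \item an atom in $\{L,P\}$;
    \item a block $\nu(T)$ with $\nu\in\mathcal{U}$ (a finite set, with $\texttt{Pow}$ counted once since its exponent is external to the skeleton) and $T\in\mathcal{S}_{d-1,u-1}$, which is finite by hypothesis.
\end{itemize}
So there are only finitely many possible factors.

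The remaining task is to bound the number of factors per product and the number of terms per sum.

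\emph{Factors per product.} Every non-atomic factor consumes at least one unary operator. At most one atomic factor survives in a product, since $L\cdot L$, $L\cdot P$ and $P\cdot P$ all reduce to $P$ under closure. Hence a product has at most $u+1$ factors.

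\emph{Terms per sum.} The terms are indexed by products. Two distinct terms that are purely atomic merge under $P+L\sim P$, $P+C\sim P$, and so on. Every other term contains at least one unary operator, and these operators are disjoint across terms. Hence the number of additive terms is at most $u+1$, plus the constant $c_0$.

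A sum of at most $u+1$ products, each of at most $u+1$ factors drawn from a finite alphabet, can take only finitely many forms. This remains true even before modding out by commutativity, so $\mathcal{S}_{d,u}$ is finite.

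\textbf{Main obstacle.} The delicate point is justifying that repeated identical atomic terms or factors genuinely collapse within normal form, since the bound relies on this. This is exactly what the closure rules and the affine-merging idempotence $\mathcal{A}[\mathcal{A}[u]]=\mathcal{A}[u]$ from the proof of Lemma~\ref{lemma1} provide. We would also need to note that $P$ is a single family regardless of degree, so polynomial degree does not proliferate skeletons.

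Explicitly, the recursion gives the bound
\[
|\mathcal{S}_{d,u}|\le \bigl(1+(2+|\mathcal{U}|\,|\mathcal{S}_{d-1,u-1}|)^{u+1}\bigr)^{u+1},
\]
which is finite.
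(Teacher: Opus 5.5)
Your strategy is the paper's, made explicit. The paper argues that canonical trees carry labels from a finite alphabet and have boundedly many nodes, but it only asserts the node bound; your per-product and per-sum counting actually tries to derive it.

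The gap is your first reduction, $\mathcal{S}\subseteq\mathcal{S}_{N_D,N_U}$, which you take for granted. The paper does the same when it says the number of $\nu_{ij}$ is bounded by $N_U$. The hypotheses bound the raw skeleton $s$, while $\mathcal{S}_{d,u}$ bounds the canonical tree, and normalization need not respect the budget. Take $s=\prod_{k=1}^{m}\bigl(\nu_k(x)+\mu_k(x)\bigr)$ with distinct $\nu_k,\mu_k\in\mathcal{U}$; it has $2m$ unary operators and depth $1$. Both readings of $\mathcal{N}$ then cause trouble:
\begin{itemize}[leftmargin=1em, noitemsep,topsep=0pt]
\item \emph{If $\mathcal{N}$ distributes products over sums}, as the proof of Lemma~\ref{lemma1} indicates (``multiplying two sum-of-products results in a new sum-of-products''), then $\mathcal{N}(s)$ has at least $2^m$ distinct products and at least $m\,2^m$ unary nodes. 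For $m\ge 3$ this breaks both your budget and your bound of $u+1$ terms. The disjointness you invoke holds for operator nodes of $\mathcal{N}(s)$, but the copies made by distribution are not paid for by $s$.
\item \emph{If $\mathcal{N}$ does not distribute}, then, since Proposition~\ref{sec:prop1} allows $\nu_{ij}=I$, the factors of $\mathcal{N}(s)$ are identity-wrapped sums at the \emph{same} depth. These lie outside your dichotomy ``atom, or $\nu(T)$ with $T\in\mathcal{S}_{d-1,u-1}$''. Without some flattening the theorem would even fail: $x\bigl(1+x(1+\cdots(1+\sin x))\bigr)$ yields infinitely many distinct trees with one unary operator.
\end{itemize}
So the delicate point is not the merging of repeated atoms that you flag, but what normalization does to the budget.

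To repair this, index by the raw skeleton and take $\mathcal{N}$ to distribute:
\[
\mathcal{S}_{d,u}=\{\mathcal{N}(s): s \text{ has depth} \le d \text{ and at most } u \text{ unary operators}\}.
\]
\begin{itemize}[leftmargin=1em, noitemsep,topsep=0pt]
\item Bottom-up normalization, $\mathcal{N}(\nu(s'))=c\,\nu(\mathcal{N}(s'))+c$, makes every non-atomic factor of the form $\nu(T)$ with $T\in\mathcal{S}_{d-1,u-1}$.
\item Distribution does not increase depth.
\item Each expanded product draws its unary factors from pairwise distinct operators of $s$. So it still has at most $u$ unary factors and, by closure, at most one atom.
\end{itemize}
What must go is the bound of $u+1$ terms per sum. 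Replace it by the observation that identical terms merge, so a sum corresponds to a subset of the finite set of admissible products. With these changes your recursion gives a rigorous version of the paper's finiteness argument.
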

}

\begin{proof}
    The proof follows from the combinatorial properties of the normal form $\mathcal{N}(s)$ and the constraints placed on the structural complexity of the skeletons in $\mathcal{S}_{\text{skel}}$.

    According to Lemma~\ref{lemma1}, the normalization process $\mathcal{N}$ maps any skeleton $s$ to a unique tree structure where every node is labeled from a strictly finite set. 
    Since these sets are finite by assumption, any possible canonical skeleton in $\mathcal{S}$ is composed of a finite alphabet of functional blocks.

    Furthermore, the constraints $N_U$ and $N_D$ define the space of the functional topologies. 
    Because $\mathcal{N}(s)$ follows the recursive structural decomposition, the number of $\nu_{ij}$ operators is bounded by $N_U$, the depth of the recursive composition is bounded by $N_D$, and the binary operators $(+, \times)$ serve only to link these unary terms in a sum-of-products form.    
    Crucially, $\mathcal{N}$ ensures that adding more binary operations or placeholders does not generate a new skeleton. 
    For instance, the affine and polynomial closure rules ensure that repeated additions or multiplications are absorbed into existing atomic skeletons. 
    Thus, two skeletons that differ only by the number of additive or multiplicative constants are projected onto the same unique representative in $\mathcal{S}$.

    Finally, since the number of nodes is bounded and the set of possible labels for each node is finite, the total number of distinct rooted trees that can be constructed is combinatorially finite. 
    Because $\mathcal{N}$ acts as a canonical map that projects the infinite space of raw skeletons into this bounded set of maximally expanded functional shapes, it follows that $\mathcal{S}$ must contain a finite number of elements.
\end{proof}

\section{Expressive Scope and Structural Extensions} \label{app:extension}

In this appendix, we justify the generalizability of the recursive structural decomposition (Proposition~\ref{sec:prop1}).
Specifically, we demonstrate how this formulation naturally accommodates piecewise-defined functions, complex-valued expressions, and differential equations.

\subsubsection{Piecewise-Defined Functions}

Piecewise functions are often perceived as problematic for continuous SR due to their localized rules. 
However, within our framework, a piecewise function can be seamlessly integrated through analytical operators.
Discontinuous or boundary-switching behaviour can be internalized into the unary operator set $\mathcal{U}$ using indicator functions, or approximated smoothly via continuous operators already present in $\mathcal{U}$, such as the hyperbolic tangent.

Consider a piecewise function split across a threshold $\tau$:
\[f(x) = \begin{cases} g(x) & \text{if } x < \tau \\ h(x) & \text{if } x \geq \tau \end{cases}
\]
Using the Heaviside step function $H(x) \in \mathcal{U}$ (where $H(x) = 1$ if $x \geq 0$ and $0$ otherwise), this expression is rewritten as an algebraic combination:
\[f(x) = g(x) (1 - H(x - \tau)) + h(x) H(x - \tau)\]
Expanding this product yields: 
\[
f(x) = g(x) - g(x)H(x - \tau) + h(x)H(x - \tau).
\]

If $g(x)$ and $h(x)$ are admissible expressions satisfying Proposition~\ref{sec:prop1}, their product with the unary transformations $H(x-\tau)$ maps directly onto the sum-of-products decomposition.
Therefore, piecewise boundaries can be treated as non-linear unary modulations of functional blocks.

\subsubsection{Complex-Valued Variables and Fields}

While Proposition~\ref{sec:prop1} is introduced using real-valued functions, its algebraic foundation extends naturally to the complex domain $\mathbb{C}$.
Since $\mathbb{C}$ constitutes an algebraic field (i.e., it satisfies all foundational axioms for addition and multiplication), the considered set of binary operators $\mathcal{B} = \{+, \times\}$ preserves the same associative, commutative, and distributive properties required by the recursive decomposition.

Let $f(x)$ be a complex-valued expression of a real variable $x$, mapped as $f: \mathbb{R} \to \mathbb{C}$, such that $f(x) = \text{RE}(x) + i\,\text{IM}(x)$.
We show that both $\text{RE}(x)$ and $\text{IM}(x)$ can be individually expressed using the recursive structural decomposition defined in Proposition~\ref{sec:prop1}. 
By definition, $f(x)$ is built from a finite composition of operators. 
We proceed inductively on the structure of the expression:

\begin{itemize}[leftmargin=1em, noitemsep,topsep=0pt]
    \item Base Case: Let $f(x) = a + ib$ for $a,b \in \mathbb{R}$. 
    Here, $\text{RE}(x) = a$ and $\text{IM}(x) = b$, both of which are real constants satisfying Proposition~\ref{sec:prop1} with $v_0 = a$ or $b$ and all other terms vanishing. 
    If $f(x) = x$, then $\text{RE}(x) = x$ and $\text{IM}(x) = 0$, which also trivially satisfies the proposition.

    \item Unary Operators: Let $f(x) = \nu(g(x))$, where $\nu$ is a complex unary operator acting on a subexpression $g(x)\in \mathbb{R}$. 
    Any such operator can be decomposed into its real and imaginary parts, $\nu(\cdot) = \nu_{\text{re}}(\cdot) + i\,\nu_{\text{im}}(\cdot)$, where $\nu_{\text{re}}, \nu_{\text{im}} \in \mathcal{U}$. 
    Thus, $\text{RE}(x) = \nu_{\text{re}}(g(x))$ and $\text{IM}(x) = \nu_{\text{im}}(g(x))$, preserving the unary-transformation mapping layer.

    \item Binary Operators:
    \begin{itemize}
        \item Addition: Let $f(x) = g(x) + h(x)$. Since complex addition is component-wise:
        \[\text{RE}(x) = \text{RE}_g(x) + \text{RE}_h(x),\]
        \[\text{IM}(x) = \text{IM}_g(x) + \text{IM}_h(x).\]
        If the real and imaginary components of $g(x)$ and $h(x)$ satisfy Proposition~\ref{sec:prop1}, their sums do as well, as addition appends terms to the outer linear summation $\sum_i v_i$.
        
        \item Multiplication: Let $f(x) = g(x) \times h(x)$. Exploiting the field axioms of $\mathbb{C}$, multiplication expands as:
        \[f(x) = (\text{RE}_g(x) + i\,\text{IM}_g(x)) \times (\text{RE}_h(x) + i\,\text{IM}_h(x)),\]
        \[\text{RE}(x) = \text{RE}_g(x)\text{RE}_h(x) - \text{IM}_g(x)\text{IM}_h(x),\]
        \[\text{IM}(x) = \text{RE}_g(x)\text{IM}_h(x) + \text{IM}_g(x)\text{RE}_h(x).\]
        Because subtraction is subsumed by addition via negation ($v_i \in \mathbb{R}$ allows negative coefficients), the real and imaginary components of the product are exclusively formed by the sums of products of the components of $g(x)$ and $h(x)$. By induction, multiplying these structures simply increases the number of terms inside the product operators ($\prod_j$) of the final architecture.
        \end{itemize}
\end{itemize}

Consequently, an SR model based on or trained on the real-valued univariate skeleton dataset proposed in this work could potentially be applied to identify complex-valued expressions. 
While this constitutes future work, we argue that an SR framework could be designed such that the trained model targets $\text{RE}(x)$ and $\text{IM}(x)$ using real-valued primitives while taking into account potential domain-specific constraints. For example, for holomorphic systems, the discovery process could enforce structural consistency by requiring that the derivatives of the recovered real and imaginary expressions satisfy the Cauchy-Riemann equations.

\subsubsection{Differential Equations}

Dynamical systems governed by Ordinary Differential Equations (ODEs) or Partial Differential Equations (PDEs) model the relationship between independent variables, state variables, and their respective derivatives. 
We show that these equations can be accommodated to comply with the recursive structural decomposition of Proposition~\ref{sec:prop1}.
As such, an SR framework featuring an SR model trained on the univariate skeleton dataset proposed in this work could be naturally extended to discover these dynamics. 
This can be achieved by adopting a data-augmentation strategy similar to the Sparse Identification of Nonlinear Dynamics (SINDy) paradigm~\cite{Brunton2016SINDy,PDESINDy}.

In this strategy, the relevant time-derivatives or spatial partial derivatives (e.g., $\dot{\mathbf{x}}$, $y'$, $y''$) are first numerically or analytically estimated directly from the raw experimental data. 
Once computed, they are appended to the dataset as additional independent feature columns.
The justification for this is that, once the system has been augmented with these derivative variables, they represent real-valued variable expressions operated on by the same set of operators $\mathcal{U}$ and $\mathcal{B}$. 
Because the underlying equation is built from these standard algebraic combinations, the functional blocks mapping the derivatives to the system dynamics preserve the exact recursive sum-of-products architecture required by Proposition~\ref{sec:prop1}. 
Therefore, Proposition~\ref{sec:prop1} holds over the augmented space.

Specifically, by treating the pre-computed derivatives as standalone variables, the task of learning a differential equation reduces to an SR problem that can be solved in a decomposable manner. 
Consider an underlying differential equation $\mathcal{G}(x, y, y', y'') = x (y'')^2 + y y' y'' + \cos(x) y' + 7 = 0$.
Let the derivatives be evaluated from the data and mapped to explicit variable tokens, such as $u_1 = y'$ and $u_2 = y''$, yielding the augmented input vector $\mathbf{x}_{\text{aug}} = \{x, y, u_1, u_2\}$. 
Because Proposition~\ref{sec:prop1} remains valid, the skeleton function $\kappa(\cdot)$ can isolate the variable-response relationships for each derivative track independently. 
For instance, $\mathbf{e}(u_1) = \kappa(\mathcal{G}; u_1) = c_1 u_1 + c_2,$ and $\mathbf{e}(u_2) = \kappa(\mathcal{G}; u_2) = c_3 u_2^2 + c_4 u_2 + c_5$.
The subsequent progressive merging process then recombines these isolated derivative skeletons with the state variable skeletons (e.g., those found for $x$ and $y$) to reconstruct the complete multivariate non-linear differential equation. 


\subsection{Additional Structural Filtering Rules} \label{app:rules}

To mitigate unnecessary computational overhead and prevent the expression space from expanding into functionally redundant or physically unrealistic structures, we enforce a set of additional filtering rules. 
These rules govern operator definitions, composition nesting, and algebraic simplifications for the skeleton generation process.

\subsubsection{Inverse and Power Operations}

\begin{itemize}
    \item The Inversion Operator: The \texttt{inv} operator is treated strictly as an algebraic rational operator rather than a transcendental transformation. Consequently, its presence does not contribute to the transcendental unary count $N_U$ or the maximum structural depth metric $N_D$.
    \item Power Domain: While $\texttt{Pow}(f(x))=f(x)^h$ is defined as a generic power function where $h \in \mathbb{Q}$, the operators $\sqrt{\cdot}$, $\texttt{square}$, and $\texttt{cube}$ are retained as independent primitives in $\mathcal{U}$ due to their high prevalence in physical systems. 
    To prevent structural overlap, the generic $\texttt{Pow}$ operator avoids values $h \notin \{0.5, 1, 2, 3\}$.
    \item Power Atom Cost: Because the atom $P$ explicitly involves the exponentiation of a variable, it is penalized by counting directly toward the unary operator budget $N_U$.
\end{itemize}

\subsubsection{Structural Closure and Nesting Boundaries}

\begin{itemize}
    \item Power Application to Atoms: The generic power functions cannot be applied directly to atoms $L$ or $P$. 
    This restriction is derived from the structural closure rules (Definition~\ref{def:equiv}) where $L^h \to P$ and $P^h \to P$.
    
    \item Prohibited Products with Inversion: Based on closure rules, multiplying an atom by the inverse of $L$ yields structural equivalences such as $L\, \texttt{inv}(L) \to C + C \,\texttt{inv}(L)$ and $P \,\texttt{inv}(L) \to P + C \,\texttt{inv}(L)$, already accounted for by our grammar. 
    Hence, structural configurations $L \,\texttt{inv}(L)$ and $P  \,\texttt{inv}(L)$ are forbidden.
    
    \item Additive Absorptions of $P \,\texttt{inv}(P)$: Combinations involving $P  \,\texttt{inv}(P)$ cannot be additively appended to canonical atoms. This prunes the redundant extensions $P \,\texttt{inv}(P) + L \to P  \,\texttt{inv}(P)$ and $P  \,\texttt{inv}(P) + P \to P$.

    \item Nesting and Operations of \texttt{inv}: The inversion operator cannot be nested within another inversion (e.g., $\texttt{inv}(C + C \, \texttt{inv}(\cdot))$) since such topologies compress algebraically into a single inversion layer. 
    Furthermore, the sum or product of independent inverse terms is prohibited. 
    For instance, the expression $\texttt{inv}(A) + \texttt{inv}(B)$ is blocked because its equivalent form $(A+B)\,\texttt{inv}(A\cdot B)$ is already expressible within the primary grammar.

    \item Nesting of $\texttt{Pow}$: $\texttt{Pow}$ cannot be nested within another $\texttt{Pow}$ operator or any similar power operation (i.e., $\exp$, $\sinh$, $\cosh$), and viceversa. 
    This prevents numerical explosion during evaluation, given that $\texttt{Pow}$ is restricted to exponents higher than 3. 
    In addition, compositions such as $\texttt{Pow}(P)$ are prohibited by canonical closure rules.
    
\end{itemize}

\subsubsection{Exponential Algebraic Simplifications}

\begin{itemize}
    \item Constant Exclusion: The argument of an exponential operator does not receive the general additive constant placeholder. 
    Under standard grammar expansions, a unary operator follows the rule $\nu(\tau) \to \nu(C + \Omega)$. 
    For the $\exp$ operator, this is modified to $\nu(\tau) \to \nu(\Omega)$ as an additive constant in the exponent is absorbed linearly into the outer constant placeholder via the algebraic identity: $\exp(C + f(x)) = \exp(C)\exp(f(x)) = C\exp(f(x))$.

    \item Interaction Restraints: To prevent redundant multi-term tracks, an $\exp(\cdot)$ operator cannot be multiplied by another $\exp(\cdot)$ operator, as the identity $\exp(A)\exp(B) = \exp(A+B)$ allows the architecture to represent the relationship within a single unified argument. 
    Finally, an $\exp(\cdot)$ block cannot be nested inside an $\texttt{inv}(\cdot)$ operator, as this transformation is functionally identical to mapping an exponential with a negated internal coefficient.
\end{itemize}

\section{Avoiding Invalid Operations} \label{app:invalidops}

For completeness, we detail the recursive tree-traversal pipeline for avoiding invalid operations (outlined in Algorithm~\ref{alg:avoidNaNs}), adapted from \citet{MultiSetSR} and extended with our derivative-driven singularity avoidance mechanism, as described in Sec.~\ref{sec:samplingdata}.

To systematically resolve domain violations and numerical overflows without relying on single-pass inspection, symbolic expressions are analyzed hierarchically as trees. 
Each tree node contains an operator ($f.\texttt{func}$) and a vector of arguments ($f.\texttt{args}$). 
The algorithm evaluates each argument independently (Line~\ref{alg:arg}), treating it as a nested sub-expression tree called $\texttt{arg}$. 
The function $\texttt{containSpecialF}(\text{arg})$ (Line~\ref{alg:contains}) traverses the subtree to determine whether it contains bounded or singular operators (e.g., $\ln$, $\sqrt{\cdot}$, $\arcsin$, $\tan$, or $\texttt{inv}$).
If special operators are detected, $\texttt{isSpecialF}(\text{arg}.\texttt{func})$ checks whether the top operator of the subtree itself is special (Line~\ref{alg:isspecial}). 
If the root is a standard operator (e.g., addition), the algorithm recursively descends deeper into the tree structure (Line~\ref{alg:deeper}). 
If the top operator is special, any deeper nested dependencies must be resolved first. 
Line~\ref{alg:anyspecial} checks if another special function resides within the inner argument; if so, a recursive call is triggered (Line~\ref{alg:deepfirst}) to clear deeper constraints before modifying the parent node.

To illustrate this bottom-up traversal, consider $f(x) = \arcsin\left(\log(x)\right) + x^2 + 3$. 
The top-level operator is an addition node with three arguments: $\left[\arcsin\left(\log(x)\right), x^2, 3\right]$. 
The recursive pipeline descends into the first argument, identifying $\arcsin$ as a special double-bounded operator. 
Before evaluating it, the algorithm parses its inner argument, discovering the nested single-bounded $\log(x)$ sub-tree. 
This bottom-up resolution ensures that the inner domain constraints (e.g., enforcing $x > 0$ for $\log(x)$) are satisfied and modified before the parent operator evaluates its incoming values (e.g., scaling the log values to fit within the $[-1, 1]$ domain of $\arcsin$).

Furthermore, because the special operators are always unary, the inner argument is extracted as $\text{arg}.\texttt{args}[0]$ (Line~\ref{alg:innArgext}). 
The subtree is evaluated on the current support points $\mathbf{x}$ to produce output array $\texttt{vals}$ (Line~\ref{alg:vals}). 
If $\texttt{containNaNs}(\texttt{vals})$ flags undefined outputs or numerical values exceeding $10^5$ (Line~\ref{alg:nan}), the inner function values are computed (Line~\ref{alg:innArg}) and adjusted according to the operator class:
\begin{itemize}[leftmargin=1.5em, noitemsep, topsep=2pt]
    \item \textbf{Single-Bounded Operators} (Line~\ref{alg:singleB}): Modified via $\texttt{modSBounded}$ by dynamically shifting or negating the inner domain to satisfy constraints (e.g., injecting horizontal offsets if $\sqrt{\cdot}$ receives negative inputs).
    \item \textbf{Double-Bounded Operators} (Line~\ref{alg:doubleB}): Modified via $\texttt{modDBounded}$ by linearly scaling internal values to map strictly within stable boundaries (e.g., $[-1, 1]$).
    \item \textbf{Operations with Singularities} (Line~\ref{alg:singularities}): Handled via $\texttt{getAdaptiveIntervals}$, which computes local derivative-driven exclusion windows based on the slope threshold $\alpha$ and appends them to a global list of forbidden subdomains ($\texttt{Iexcl}$).
\end{itemize}

Finally, the inner argument of the subtree node is updated with its modified, valid form $\text{innArg}$ (Line~\ref{alg:update}), and the altered arguments are recompiled back into the root expression tree (Line~\ref{alg:replace}), yielding a globally safe function alongside a set of forbidden sampling intervals.
Fig.~\ref{fig:singularity} illustrates a data generation example using $f(x) = \frac{1}{(x+4)^3} + e^{\frac{1}{x-4}}$ and $\alpha=0.9995$, demonstrating how $\texttt{getAdaptiveIntervals}$ assigns a tightly compressed exclusion window around the sharp asymptote at $x = 4$ while expanding the window to insulate the broader asymptote at $x = -4$. 
Fig.~\ref{fig:sets} depicts representative functions sampled from the same canonical skeleton, demonstrating consistent and reliable singularity avoidance across diverse parameter realizations.

\begin{figure}[!t]
    \centering
    \includegraphics[width=.85\columnwidth]{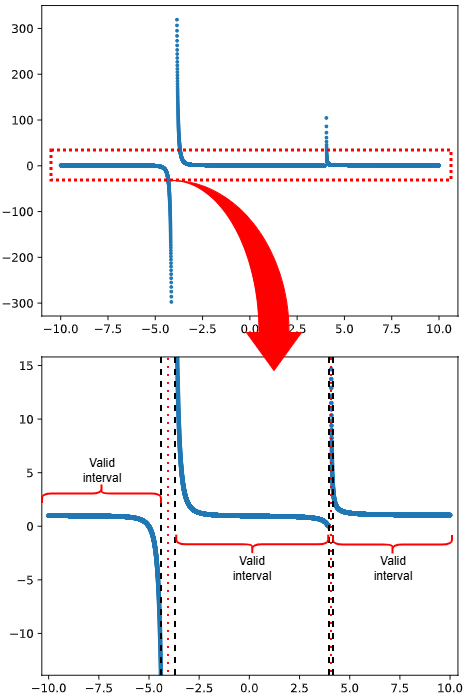}
    \caption{A generation example using $f(x) = \frac{1}{(x+4)^3}+e^{\frac{1}{x-4}}$. (Top) Generated data on the entire domain $[-10, 10]$. (Bottom) Detailed view of how singularities are avoided.}
    \label{fig:singularity}
\end{figure}

\begin{figure}[!t]
    \centering
    \includegraphics[width=1.\columnwidth]{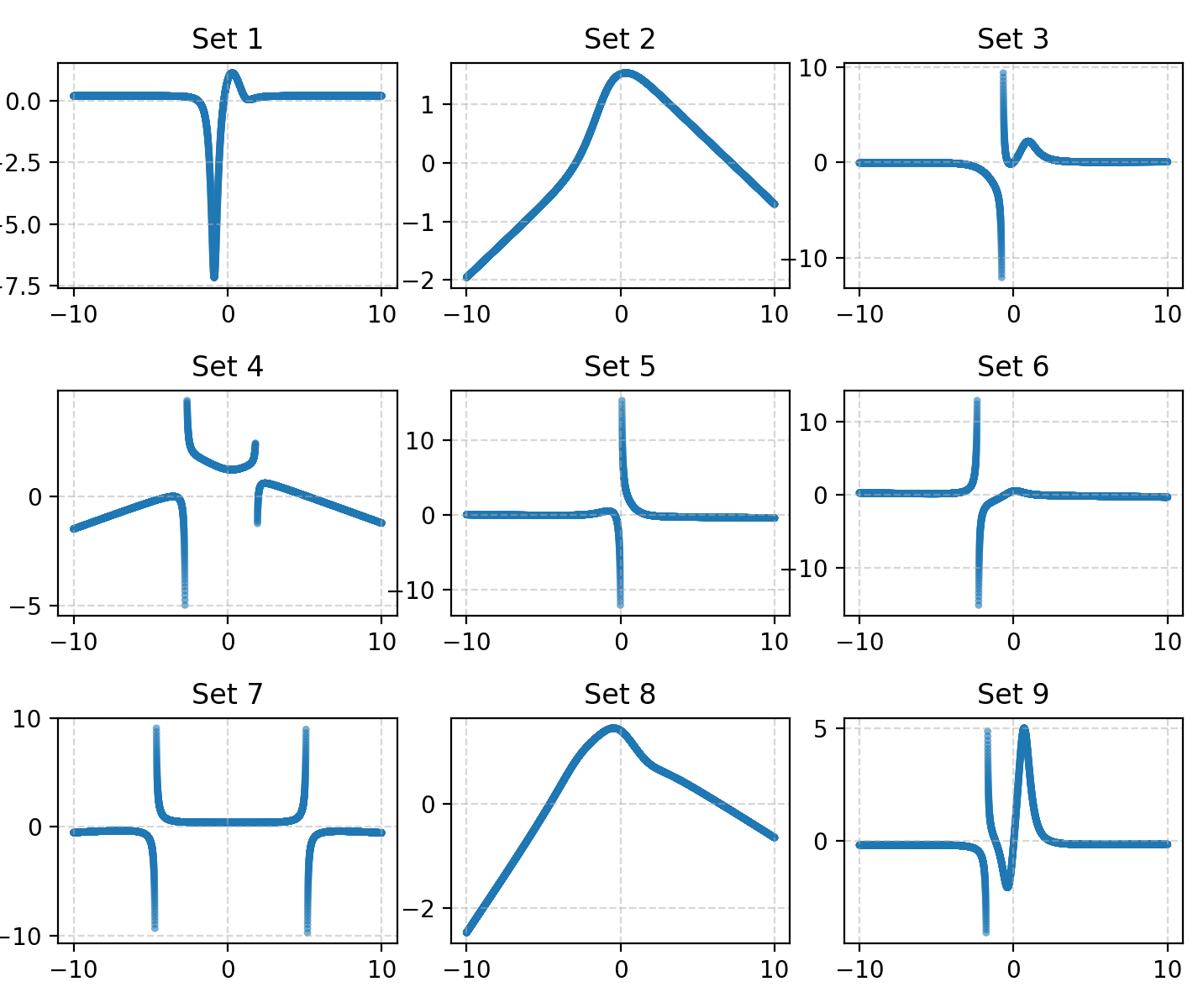}
    \caption{Various functions generated using the canonical skeleton $\mathbf{e}(x) = C + \frac{L}{C\, \arctan(P) + C} \tanh(L)$}
    \label{fig:sets}
    \vspace{-1ex}
\end{figure}

\begin{algorithm} [ht]
\scriptsize
\renewcommand{\algorithmicrequire}{\textbf{Input:}}  
\renewcommand{\algorithmicensure}{\textbf{Output:}} 
\begin{algorithmic}[1]
\Require Support points $\mathbf{x}$; function $f$; initial list of forbidden intervals \texttt{Iexcl}
\Ensure Updated $\mathbf{x}$, $f$, and accumulated \texttt{Iexcl}
    \vspace{-1.5ex}
    \Statex \makebox[0pt][l]{\hspace*{-\algorithmicindent}\rule{\dimexpr\linewidth+\algorithmicindent}{0.4pt}}
    
\Function{avoidNaNs}{$\mathbf{x}, f, \texttt{Iexcl} = [\,]$}
    \State $\text{args} \leftarrow f.\texttt{args}$ 
    \State $\text{newArgs} \leftarrow [\,]$  
    \For { each $\text{arg} \in \text{args}$}  \Comment{Analyze each sub-tree (argument) of $f$} \label{alg:arg}
            \If {$\texttt{containSpecialF}(\text{arg})$} \label{alg:contains}            
                    \If {$! \texttt{isSpecialF}(\text{arg}.\texttt{func})$} \label{alg:isspecial}
                            \State $\mathbf{x}, \text{arg}, \texttt{Iexcl} \leftarrow \texttt{avoidNaNs}(\mathbf{x}, \text{arg}, \texttt{Iexcl})$ \label{alg:deeper}
                    \Else 

                    \Statex \hspace*{5.8em} \# Is there another special func. inside the current arg function?
                            \If {$\texttt{containSpecialF}(\text{arg}.\texttt{args}[0])$}  \label{alg:anyspecial}
                                    \State $\mathbf{x}, \text{arg}, \texttt{Iexcl}  \leftarrow \texttt{avoidNaNs}(\mathbf{x}, \text{arg}, \texttt{Iexcl} )$  \label{alg:deepfirst}                                       
                            \EndIf
                            \State $\text{innArg} \leftarrow \text{arg}.\texttt{args}[0]$ \label{alg:innArgext}  
                            \State $\text{vals} \leftarrow \texttt{\text{arg}}(\mathbf{x})$  \label{alg:vals}  
                            \If{$\texttt{containNaNs}(\text{vals})$} \label{alg:nan}
                                    \State $\text{innVals} \leftarrow \texttt{innArg}(\mathbf{x})$ \label{alg:innArg}
                                    \If{$\texttt{isSingleBounded}(\text{arg})$}
                                            \State $\text{innArg} \leftarrow \texttt{modSBounded}(\text{arg}, \text{vals}, \text{innArg}, \text{innVals})$ \label{alg:singleB}
                                    \ElsIf{$\texttt{isDoubleBounded}(\text{arg})$}
                                            \State $\text{innArg} \leftarrow \texttt{modDBounded}(\text{arg}, \text{innArg}, \text{innVals})$ \label{alg:doubleB}
                                    \Else  \; \Comment{Operations with singularities}
                                            \State $\texttt{Iexcl} \leftarrow \texttt{getAdaptiveIntervals}(\text{arg}, \texttt{Iexcl})$ \label{alg:singularities}
                                    \EndIf
                                \EndIf
                            \State $\text{arg}.\texttt{args}[0] \leftarrow \text{innArg}$ \Comment{Update function} \label{alg:update}
                    \EndIf
            \EndIf
            \State $\text{newArgs}.\texttt{append}(\text{arg})$
    \EndFor
    \State $f.\texttt{args} \leftarrow \text{newArgs}$ \label{alg:replace}
    \State \Return $\mathbf{x}, f, \texttt{Iexcl}$
\EndFunction
\end{algorithmic}
\caption{Avoiding Invalid Operations}
\label{alg:avoidNaNs}
\end{algorithm}

\section{Benchmark Datasets} \label{app:benchmark}

This appendix provides the ExCaUS-Bench equations and the corresponding sampled datasets used to evaluate the SR methods listed in Sec.~\ref{sec:resbench}.
The benchmark is organized into three difficulty tiers: Easy (E1--E50), Medium (M1--M50), and Hard (H1--H50), with the generated equations reported in Tables~\ref{tab:eq_e}--\ref{tab:eq_h}. 
The corresponding plots of the sampled datasets are provided in Figs.~\ref{fig:eq_e1}--\ref{fig:eq_h3}, with each figure displaying the benchmark equations within its respective tier.

\begin{table*}[htbp]
\centering
\caption{Generated Equations for the E tier}
\label{tab:eq_e}
\vspace{-2ex}
\scriptsize
\resizebox{\textwidth}{!}{%
\begin{tabular}{cll}
\hline\hline
\textbf{ID} & \textbf{Skeleton} & \textbf{Equation} \\
\hline
E1 & $C |{L}| + L$ & $-1.31 + 5.79\text{E-2} |{4.54 x + 4.61}| + 7.67\text{E-2} x$ \\
E2 & $C e^{L} \sinh(P ) + C$ & $-0.30 + 2.17\text{E-2} e^{0.16 x} \sinh(-0.48 x + 0.11 + 3.11\text{E-2} x^{2} )$ \\
E3 & $C \operatorname{inv}(L ) \cosh(P ) + C \operatorname{asinh}(L ) + C$ & $-4.97\text{E-2} + 0.13 \operatorname{asinh}(x ) + \frac{5.06\text{E-2} \cosh(-5.54\text{E-3} + 3.05\text{E-2} x + 7.70\text{E-2} x^{2} )}{1.35 x + 1.63}$ \\
E4 & $C \operatorname{inv}(C \sin(L ) \operatorname{atan}(P ) + L ) + C$ & $\frac{-1.26}{0.85 x + 1.40 \sin(2.91 x ) \operatorname{atan}(-3.21 + 0.32 x^{3} + 0.64 x - x^{2} )} + -4.89\text{E-2}$ \\
E5 & $C + L \log(P )$ & $-0.42 + (0.13 + 3.54\text{E-2} x) \log(-2.06 x + 0.45 + 2.49 x^{2} )$ \\
E6 & $C \operatorname{Pow}(C + L \operatorname{asinh}(L ) ) \tanh(P ) + C$ & $\begin{aligned}[t] & -0.14 + -1.02\text{E-13} ((-3.12 + -4.11 x) \operatorname{asinh}(-0.65 + x ))^{\frac{13}{2}} \tanh(-8.37\text{E-3} x + 1.85\text{E-3} \\ & \quad + 4.91\text{E-3} x^{2} + 7.59\text{E-3} x^{3} ) \end{aligned}$ \\
E7 & $C + L \sin(C \sin(P ) + C )$ & $-6.32\text{E-2} + (-0.22 x + 0.20) \sin(-1.21 + 3.48 \sin(-8.48\text{E-2} x^{3} + 0.18 x^{2} + 0.90 + 0.98 x ) )$ \\
E8 & $C \tan(C \operatorname{inv}(L ) + C |{P}| + C ) + L$ & $-5.99\text{E-2} + 0.22 x + 0.49 \tan(\frac{-0.10}{x} + 1.26\text{E-2} |{-1.60 x + 0.48 x^{3} + 1.63 x^{2} + 4.55 + x^{4}}| )$ \\
E9 & $C + L \sin(C \operatorname{inv}(P ) + C ) \tanh(L )$ & $\begin{aligned}[t]  -7.79 \,x &\sin(\frac{0.32}{-0.16 x^{6} + -1.85 + -3.81 x^{3} + 2.17 x^{2} + 3.02 x + 4.60\text{E-2} x^{5} + x^{4}} ) \\ & \tanh(0.26 + 0  .77 x ) + 4.45\text{E-2} \end{aligned}$ \\
E10 & $L \tan(P ) + L$ & $-2.89\text{E-3} + -9.30\text{E-2} x - (-0.20 + 0.41 x) \tan(-0.36 x + 1.48 x^{2} )$ \\
E11 & $L |{P}| + L$ & $-0.44 + -3.80\text{E-3} x + (1.18\text{E-2} + 6.18\text{E-3} x) |{-3.64 + 0.24 x^{2} + 2.97 x}|$ \\
E12 & $C \operatorname{inv}(L ) + P$ & $-1.15\text{E-2} x + -5.67\text{E-3} x^{2} + \frac{-9.98\text{E-3}}{x} + 0.19 + 2.75\text{E-3} x^{3}$ \\
E13 & $C e^{C \operatorname{inv}(P )} \operatorname{asinh}(L ) + C$ & $0.11 + 0.45 e^{\frac{1.24\text{E-2}}{-2.77 x^{3} + 4.94 x + x^{2}}} \operatorname{asinh}(-0.85 + x )$ \\
E14 & $L + P |{L}|$ & $-0.80 + 1.79\text{E-3} x + (-3.26\text{E-3} + 1.48\text{E-3} x^{2} + 3.61\text{E-3} x) |{-4.41 + 2.53 x}|$ \\
E15 & $L \sin(L ) + P$ & $-1.08 + 1.48\text{E-2} x + 3.23\text{E-2} x^{2} - (-3.91\text{E-2} + 4.52\text{E-2} x) \sin(-2.97 + 4.60 x )$ \\
E16 & $C \sin(L ) + L$ & $-8.01\text{E-3} + 0.17 x + 0.17 \sin(2.83 x )$ \\
E17 & $C \cosh(P ) + C + L \operatorname{atanh}(L )$ & $\begin{aligned}[t] & -0.26 + 5.85\text{E-3} \cosh(-6.99\text{E-3} + 7.15\text{E-2} x^{2} + 8.65\text{E-2} x ) + (-2.20\text{E-2} x + 2.22\text{E-2}) \operatorname{atanh} \\ & \quad (2.68\text{E-4} + 9.99\text{E-2} x ) \end{aligned}$ \\
E18 & $C \operatorname{inv}(P ) + C + L \log(C \tan(L ) + C )$ & $\begin{aligned}[t] & \frac{-7.89\text{E-2}}{2.53 x^{2} + 3.30 x^{3} + x^{4} + x} + 0.27 + (-0.13 + -7.25\text{E-2} x) \log(-1.83 \tan(2.72 + x ) + 10.24 ) \end{aligned}$ \\
E19 & $C \sinh(C \sqrt{L} \operatorname{inv}(P ) + C ) + C$ & $0.65 + 3.76 \sinh(-0.32 + \frac{1.11 \sqrt{0.68 x + 6.89}}{2.66 + 3.42 x^{2} + x} )$ \\
E20 & $C \log(L ) + C$ & $-1.07 \log(-1.17 x + 11.78 ) + 2.32$ \\
E21 & $C \sin(L ) + P$ & $\begin{aligned}[t] & -0.58 + -4.49\text{E-6} x^{2} + -6.16\text{E-6} x^{5} + 1.08\text{E-6} x^{4} + 1.25\text{E-6} x + 2.29\text{E-6} x^{3} + 4.13\text{E-6} \\ & \quad x^{6} + 4.40\text{E-6} \sin(0.10 + 4.70 x ) \end{aligned}$ \\
E22 & $C \sin(L ) + C$ & $-1.40 \sin(1.23 x ) + 5.52\text{E-4}$ \\
E23 & $C \tan(L ) + P$ & $-2.34\text{E-2} \tan(2.21 x ) + -2.83\text{E-2} x^{2} + -9.12\text{E-2} x + 0.95$ \\
E24 & $C + P \operatorname{inv}(C \operatorname{atan}(L ) + C )$ & $-1.12 + \frac{-3.51 + 0.14 x^{2} + 1.60 x}{-0.66 + 4.85 \operatorname{atan}(3.40 x )}$ \\
E25 & $C \tanh(C e^{P} + C ) + C$ & $-4.13 \tanh(-1.67\text{E-4} + 3.63\text{E-4} e^{0.22 x + 7.78\text{E-2} x^{2}} ) + 0.38$ \\
E26 & $P$ & $\begin{aligned}[t] & -1.19\text{E-6} x + -2.01\text{E-6} x^{2} + -2.25\text{E-6} x^{5} + -4.21\text{E-6} x^{6} + 0.60 + 4.02\text{E-6} x^{4} + 4.50\text{E-6} x^{3} \end{aligned}$ \\
E27 & $C \sinh(L ) \operatorname{atan}(C \operatorname{inv}(L ) + C ) + P$ & $\begin{aligned}[t]  -0.30 + 2.44\text{E-3} x^{3} + 2.92\text{E-3} x + 3.85\text{E-3} x^{2} + &6.15\text{E-3} \sinh(0.12 + 0.79 x ) \\ & \operatorname{atan} (\frac{4.31}{3.15 x + 4.41} ) \end{aligned}$ \\
E28 & $C \sinh(C + P \operatorname{asin}(L ) ) + C$ & $\begin{aligned}[t]  3.06\text{E-2} + 8.04\text{E-3} \sinh(-1.51\text{E-2} + (-8.07\text{E-3} x + 5.16\text{E-2} x^{2}) \operatorname{asin}(&-1.00\text{E-4}  + 9.99\text{E-2} x ) ) \end{aligned}$ \\
E29 & $C + P e^{L} \operatorname{asin}(L )$ & $\begin{aligned}[t]  0.48 + (-1.30\text{E-3} x^{3} + -2.14\text{E-3} x + -2.44\text{E-3} + -6.14\text{E-4} x^{2} + &3.74\text{E-5} x^{4}) e^{-8.80\text{E-2} x} \\ & \operatorname{asin}(-4.45\text{E-5} + 9.99\text{E-2} x ) \end{aligned}$ \\
E30 & $L + P e^{L}$ & $\begin{aligned}[t] & -0.26 + 2.67\text{E-9} x + (1.73\text{E-9} x^{5} + 1.73\text{E-9} x^{4} + 4.24\text{E-9} x + 5.18\text{E-10} x^{6} + 5.76\text{E-9} \\ & \quad + 6.55\text{E-9} x^{2} + 8.44\text{E-9} x^{3}) e^{0.93 x} \end{aligned}$ \\
E31 & $C \operatorname{Pow}(C \sin(L ) + C ) \operatorname{atanh}(P ) + L$ & $\begin{aligned}[t] & -0.13 (-0.21 + 2.31 \sin(-0.60 + 3.55 x ))^{\frac{13}{5}} \operatorname{atanh}(-1 + 1.59\text{E-2} x + 1.84\text{E-2} x^{2} ) + 0.15 + 6.13\text{E-2} x \end{aligned}$ \\
E32 & $C + P \tan(L )$ & $2.47\text{E-2} + (-2.25\text{E-3} + -2.60\text{E-3} x^{3} + 5.97\text{E-4} x^{2} + 5.97\text{E-4} x) \tan(-0.42 + 4.88 x )$ \\
E33 & $C \sqrt{P} + C + L \sinh(L )$ & $\begin{aligned}[t] & -0.52 + -4.96\text{E-5} \sqrt{-7.68\text{E-4} x^{4} + 1.93\text{E-2} x^{2} + 10.12 + 4.20\text{E-3} x^{3} + 7.32\text{E-3} x} + ( \\ & \quad 1.65\text{E-4} + 3.28\text{E-4} x) \sinh(0.80 x ) \end{aligned}$ \\
E34 & $L$ & $-0.17 x + -2.07\text{E-2}$ \\
E35 & $C |{P}| + L$ & $-5.51\text{E-3} |{-3.33 x^{2} + -4.10 + 0.35 x^{3} + 3.99 x}| + 0.65 + 5.01\text{E-3} x$ \\
E36 & $C \log(L ) + P$ & $-0.19 x + -0.23 \log(-1.22 x + 12.29 ) + 0.11 + 1.20\text{E-2} x^{2}$ \\
E37 & $C \sqrt{L} \operatorname{inv}(C + L \cosh(P ) ) + C$ & $-2.15\text{E-3} + \frac{0.64 \sqrt{-1.13 x + 11.43}}{(-1.25 + -1.96 x) \cosh(-1.53\text{E-2} x^{2} + 1.78\text{E-2} x + 6.29\text{E-3} x^{3} )}$ \\
E38 & $C \operatorname{Pow}(C \tanh(L ) \operatorname{acosh}(L ) + C ) + P$ & $\begin{aligned}[t]  -0.90 + -1.87\text{E-4} x + 1.04\text{E-4} x^{2} + 1.05\text{E-4} (0.92 + &\tanh(0.80 x ) \operatorname{acosh}(0.94 x + 10.53 ))^{6} + \\ &   2.86\text{E-4} x^{3} + 2.87\text{E-4} x^{4} \end{aligned}$ \\
E39 & $C \operatorname{inv}(P ) + C$ & $-0.33 + \frac{-7.63}{-0.15 x + -4.08 + -4.12 x^{3} + -7.40\text{E-2} x^{2} + 2.74 x^{5} + x^{4}}$ \\
E40 & $C \operatorname{inv}(C \log(P ) + L ) \cosh(L ) + C$ & $0.29 + \frac{4.59\text{E-2} \cosh(-3.12\text{E-3} + 0.80 x )}{-2.53 x + -9.60\text{E-2} + 2.04 \log(0.19 + 2.82 x^{2} - x )}$ \\
E41 & $L e^{L} + P$ & $\begin{aligned}[t] & -1.68\text{E-5} x^{3} + 0.41 + 2.90\text{E-5} x^{4} + 2.90\text{E-5} x + 3.08\text{E-5} x^{5} + 7.94\text{E-6} x^{2} + (-4.92\text{E-5} x + 1.49\text{E-5}) e^{x} \end{aligned}$ \\
E42 & $C + L e^{P}$ & $-0.51 + (-1.49\text{E-2} + 8.71\text{E-3} x) e^{-0.29 x^{2} + 2.41 x}$ \\
E43 & $C \log(P ) + L$ & $-0.99 + 0.15 x + 0.15 \log(-1.10 x + -3.63 x^{2} + 3.69 x^{4} + 4.72 + 4.76 x^{3} )$ \\
E44 & $C \operatorname{acosh}(C \operatorname{atan}(L ) + P ) + L$ & $-0.89 + 0.16 x + 0.19 \operatorname{acosh}(-2.46 \operatorname{atan}(2.97 x + 4.63 ) + 0.82 x + 2.81 x^{2} + 4.46 )$ \\
E45 & $C \sin(P ) + L$ & $-2.39\text{E-3} + 0.17 x + 4.90\text{E-2} \sin(0.11 x + 7.40\text{E-2} x^{3} + 9.36\text{E-3} x^{2} )$ \\
E46 & $C + L \log(L )$ & $-0.69 + (0.16 + 7.89\text{E-2} x) \log(10.10 + x )$ \\
E47 & $C \operatorname{Pow}(C \tanh(P ) + L ) \cosh(L ) + C$ & $\begin{aligned}[t] & -0.33 + 3.20\text{E-6} (-0.33 + -0.48 x + -1.16 \tanh(3.12\text{E-2} + 5.60\text{E-2} x + 7.41\text{E-2} x^{2} ))^{4} \cosh(0.80x ) \end{aligned}$ \\
E48 & $C e^{L} + C$ & $-0.39 + 2.94\text{E-3} e^{0.76 x}$ \\
E49 & $C \cosh(C + L \operatorname{inv}(P ) ) + L$ & $-1.13 + 0.17 x + 0.78 \cosh(-0.90 + \frac{-0.27 + -0.93 x}{-0.29 x^{3} + 1.98 + 4.67 x^{2} + x} )$ \\
E50 & $C \operatorname{inv}(C + L \tanh(L ) ) + P$ & $-1.12 + 3.15\text{E-2} x + 3.29\text{E-2} x^{2} + \frac{3.94\text{E-2}}{-3.42 + (1.02 + 3.23 x) \tanh(-0.90 + 0.53 x )}$ \\
\hline\hline
\end{tabular}%
}
\end{table*}

\begin{table*}[htbp]
\centering
\caption{Generated Equations for the M tier}
\label{tab:eq_m}
\vspace{-2ex}
\scriptsize
\resizebox{\textwidth}{!}{%
\begin{tabular}{@{}cll@{}}
\hline\hline
\textbf{ID} & \textbf{Skeleton} & \textbf{Equation} \\
\hline
M1 & $C \operatorname{inv}(C |{L}| + L ) + C \sin(P ) + C$ & $-2.51\text{E-2} + 0.78 \sin(0.60 + 0.98 x + 1.40 x^{2} ) + \frac{0.78}{-4.23 x + 4.05 + |{-1.90 + 3.63 x}|}$ \\
M2 & $C \operatorname{inv}(C \log(P ) + C ) + C |{L}| + C$ & $\begin{aligned}[t] & -1.66 + \frac{0.31}{3.09 + 4.66 \log(-0.30 x^{5} + -4.93 x^{2} + 0.59 x + 0.93 x^{4} + 1.87 + 4.33 x^{6} - x^{3} )} + 0.32 |{1.92 + x}| \end{aligned}$ \\
M3 & $\begin{aligned}[t] & C \operatorname{cube}(C \operatorname{inv}(L ) \operatorname{asin}(P ) + C ) + C \operatorname{inv}( L ) + C \end{aligned}$ & $\begin{aligned}[t] & -0.42 + 1.26 (\frac{-1.51 \operatorname{asin}(-2.00\text{E-3} x^{2} + 0.20 + 4.10\text{E-3} x + 9.58\text{E-4} x^{3} )}{-2.70 x + -2.81}  + 0.66)^{3} + \frac{3.09}{-2.95 + 3.17 x} \end{aligned}$ \\
M4 & $C \operatorname{Pow}(C + L \operatorname{asinh}(L ) ) + C \cosh(P ) + C$ & $\begin{aligned}[t] & -0.45 + 1.54\text{E-5} (-4.16 + (-3.34 + 0.58 x) \operatorname{asinh}(-0.65 + x ))^{4} + 1.54\text{E-5}  \cosh(4.61\text{E-2} x + 6.66\text{E-2} + 7.47\text{E-2} x^{2} ) \end{aligned}$ \\

M5 & $C \operatorname{acosh}(L ) \operatorname{acosh}(C \operatorname{inv}(P ) + L ) + C$ & $\begin{aligned}[t] & 
-2.23 + 0.43 \operatorname{acosh}{\left(-0.97 x + 10.75 \right)} \operatorname{acosh}{\left(-0.21 x + 3.19 - \frac{1}{-1.94 x^{2} + -4.34 x^{5} + 2.77 + 4.17 x^{4} + x^{3} + x} \right)}
\end{aligned}$ \\
M6 & $C \sqrt{C \operatorname{inv}(L ) + C} \sinh(C + L \operatorname{inv}(P ) ) + C$ & $\begin{aligned}[t] & 1.35 \sqrt{10.11 + \frac{7.08\text{E-4}}{-2.28 + -3.14 x}} \sinh(-7.20\text{E-2} + \frac{0.39 + 2.98 x}{-3.16 x + 3.17 + 4.67 x^{2}}  ) + 5.44\text{E-2} \end{aligned}$ \\
M7 & $C \log(L ) + C + L \sinh(C \operatorname{inv}(P ) + C )$ & $\begin{aligned}[t] & -0.19 x \sinh(0.75 + \frac{2.13}{-1.83 x^{2} + -2.04 + -3.27 x^{4} + -4.71 x^{3} + 0.63 x} ) + -0.25 + 0.13 \log(-1.10 x + 11.06 ) \end{aligned}$ \\
M8 & $C \sqrt{C \operatorname{inv}(L ) + C} \operatorname{asin}(C \operatorname{inv}(L ) + C ) + C$ & $\begin{aligned}[t] & 
-5.05 \sqrt{\frac{-1.34}{-2.03 + -3.19 x} + 4.32} \operatorname{asin}{\left(\frac{-0.35}{-3.80 x + -4.49} + 2.23\text{E-3} \right)} + 1.63\text{E-2}
\end{aligned}$ \\
M9 & $C \operatorname{Pow}(C \operatorname{atanh}(L ) + C ) + C + L \operatorname{atan}(P )$ & $\begin{aligned}[t] & 
-0.60 + 7.07\text{E-2} \operatorname{atanh}^{4}{(1.09\text{E-4} + 9.99\text{E-2} x )} + (-0.19 + 6.17\text{E-2} x) \operatorname{atan}{(-0.47 + -4.25 x + x^{3} + x^{2} )}
\end{aligned}$ \\
M10 & $C \tan(L ) + C \tan(C \operatorname{inv}(P ) + C ) + C$ & $\begin{aligned}[t] & 0.83 \tan(2.93 x ) + 1.23 \tan(\frac{1.01}{-2.06 + -2.92 x^{3} + -4.10 x^{5} + 0.17 x^{2} + 3.53 x + 4.44 x^{4}} ) + 4.97\text{E-2} \end{aligned}$ \\
M11 & $C \sqrt{P} \operatorname{inv}(L ) e^{C \operatorname{inv}(L )} + C$ & $\frac{-0.35 \sqrt{0.54 + 1.31 x + x^{2}} e^{\frac{1}{-2.86 x + -4.36}}}{3.67 + x} + -8.54\text{E-2}$ \\
M12 & $C \operatorname{inv}(L ) + C \tan(C \tan(L ) + C ) + C$ & $\begin{aligned}[t] & \frac{-3.99}{3.20\text{E-2} + 4.96 x} + -6.03 \tan(1.52\text{E-2} \tan(-3.14 + 2.51 x ) + 4.52\text{E-3}  ) + 2.71\text{E-2} \end{aligned}$ \\
M13 & $C \operatorname{Pow}(C \operatorname{acosh}(P ) + L ) + C \sin(L ) + C$ & $\begin{aligned}[t]  -0.11 \sin(-1.49 + 3.59 x ) + -0.56 + 2.42\text{E-2} (-0.94 \operatorname{acosh}(&-1.88\text{E-2} x + \\ &  -7.23\text{E-3} x^{2} + -9.09\text{E-3} x^{3} + 11.10 ) + 1.62 + 5.90\text{E-2} x)^{6} \end{aligned}$ \\
M14 & $C |{C \operatorname{inv}(L ) + C \sin(L ) + C}| + P$ & $\begin{aligned}[t] & -0.66 + -8.19\text{E-2} |{\frac{2.38}{0.49 x + 4.02} + 3.48 - \sin(3.69 x )}| + 3.13\text{E-2} x^{2} + 4.26\text{E-2} x \end{aligned}$ \\
M15 & $C |{P}| + C + L \operatorname{inv}(C \log(L ) + C )$ & $-1.46\text{E-2} |{-2.28 x^{2} + 2.48 x + 4.55}| + 1.03 + \frac{8.24\text{E-3} x}{\log(10.10 - x )}$ \\
M16 & $C \sin(C \log(L ) + C + L \operatorname{inv}(P ) ) + C$ & $\begin{aligned}[t]  0.20 + 1.40 \sin(1.62 + &3.12 \log(-1.06 x + 10.66 ) + \frac{x}{-1.41 x^{6} + -1.70 x + -3.62 x^{5} + 1.87 x^{3} + 2.66 x^{2} + 4.54 + 4.94 x^{4}}  ) \end{aligned}$ \\
M17 & $C \operatorname{inv}(L ) + C + L \log(C \operatorname{acosh}(P ) + C )$ & $\begin{aligned}[t] & -0.12 + \frac{-2.59\text{E-2}}{-3.03 + x} + (-0.10 x + 7.44\text{E-2}) \log(-0.39 + 1.10 \operatorname{acosh}  (-1.30 x + 1.31 + 2 x^{2} ) ) \end{aligned}$ \\
M18 & $C \operatorname{inv}(C \tan(L ) + C ) \operatorname{Pow}(C \tanh(L ) + C ) + C$ & $1.21 + \frac{5.07 \tanh^{4}{(0.80 x )}}{-3.19 + \tan(1.27 + 4.67 x )}$ \\
M19 & $C \operatorname{inv}(C |{L}| + P ) \operatorname{Pow}(C |{L}| + C ) + C$ & $-0.19 + \frac{0.23 (-2.80 + 1.31 |{x}|)^{4}}{-0.82 x^{2} + -4.24 x + -4.85 x^{4} + 0.32 x^{3} + 1.90 |{x}| + 2.53}$ \\
M20 & $C \operatorname{asinh}(L ) \operatorname{asinh}(C \operatorname{inv}(L ) + P ) + C$ & $\begin{aligned}[t] & -0.11 \operatorname{asinh}(0.48 + 1.42 x ) \operatorname{asinh}(-1.72 x^{3} + \frac{-1.80}{-1.70 x + -2.41}  + 0.92 + 1.65 x^{2} + 1.65 x ) + -1.61 \end{aligned}$ \\
M21 & $C \sin(P ) + C + L \operatorname{inv}(C |{L}| + C )$ & $\begin{aligned}[t] & -0.37 \sin(-7.52\text{E-2} x^{3} + 1.55\text{E-2} x + 2.70\text{E-2} + 6.59\text{E-2} x^{2} ) + 0.40  + \frac{-0.26 + 0.42 x}{|{2.28 + x}|} \end{aligned}$ \\
M22 & $C \log(C \operatorname{inv}(L ) + C + P e^{L} ) + C$ & $\begin{aligned}[t]  -0.14 \log(\frac{-1.53}{-0.17 x + -4.35} + 4.38 + (&-0.48 x^{3} + -0.48 x^{2} + -0.77 x^{5} + -1.54 + \\ &  -1.80 x^{4} + -6.79\text{E-2} x + 0.80 x^{6}) e^{-1 x} ) + 1.14 \end{aligned}$ \\
M23 & $C \operatorname{inv}(L ) + C \sin(C \sin(P ) + L ) + C$ & $\begin{aligned}[t] & \frac{-0.30}{2.35 + x} + 1.47 \sin(-1.63 + 1.65 x + \sin(3.33\text{E-3} x^{2} + 5.53\text{E-2} + 7.2 9\text{E-2} x + 7.43\text{E-2} x^{3} ) ) + 3.16\text{E-2} \end{aligned}$ \\
M24 & $C |{C \operatorname{asin}(L ) + C}| + C + L \operatorname{inv}(P )$ & $\begin{aligned}[t] & -0.49 |{2.94 \operatorname{asin}(9.99\text{E-2} x ) + 4.70}| + \frac{-0.56 x}{-3.75 x^{5} + -4.68 + 2.93 x^{4} + 5 x^{3} + x^{2} + x}  + 2.32 \end{aligned}$ \\
M25 & $C \operatorname{inv}(L ) + C \log(C \operatorname{asin}(P ) + L ) + C$ & $\begin{aligned}[t] & \frac{-0.24}{-0.88 + -4.62 x} + -1.09 \log(-0.13 \operatorname{asin}(-0.99 + 1.72\text{E-2} x + 1.82  \text{E-2} x^{2} ) + 0.99 x + 10.10 ) + 2.23 \end{aligned}$ \\
M26 & $C \operatorname{Pow}(C \operatorname{atanh}(L ) + L ) \cosh(P ) + C$ & $\begin{aligned}[t] & -1.90\text{E-9} (-2.18 + -3.45 x + 3.55 \operatorname{atanh}(1.38\text{E-4} + 9.99\text{E-2} x ))^{\frac{14}{3}} \cosh(-7.98\text{E-2} x + 7.13\text{E-2} x^{2} + 7.92\text{E-2} ) + 0.12 \end{aligned}$ \\
M27 & $C \operatorname{inv}(C |{L}| + C ) \operatorname{Pow}(C \log(P ) + C ) + C$ & $-1.69 + \frac{6.10\text{E-4} (-2.26 + -2.48 \log(-0.85 x + 2.54 + x^{2} ))^{5}}{-2.20 |{3.16 + 4.45 x}| + 8.80\text{E-2}}$ \\
M28 & $C \operatorname{inv}(L ) + C \operatorname{acosh}(C + P \sin(L ) ) + C$ & $\begin{aligned}[t] & -2.63 \operatorname{acosh}(11.10 + (-1.48\text{E-2} x^{2} + -9.43\text{E-3} x^{3} + 3.05\text{E-2} x ) \sin(4.22 x + 4.22 ) ) + \frac{0.61}{x} + 8.05 \end{aligned}$ \\
M29 & $C \operatorname{inv}(C \operatorname{acosh}(L ) \operatorname{asinh}(P ) + L ) + C$ & $\begin{aligned}[t] & \frac{-3.58}{x - \operatorname{acosh}(0.86 x + 9.69 ) \operatorname{asinh}(-0.89 + -2.19 x^{2} + 1.41 x + 2.26 x^{3} )}  + 0.17 \end{aligned}$ \\
M30 & $C \operatorname{asinh}(L ) \operatorname{asinh}(C + L \operatorname{inv}(P ) ) + C$ & $\begin{aligned}[t] & -0.96 \operatorname{asinh}(-0.44 + \frac{-0.71 + -1.48 x}{-3.02 x^{4} + -4.88 x^{5} + 1.74 + 2.09 x^{3} + 3.58 x^{2} + x}  ) \operatorname{asinh}(-0.65 + 1.27 x ) + 6.87\text{E-2} \end{aligned}$ \\
M31 & $C \operatorname{inv}(L ) \operatorname{atanh}(C \operatorname{inv}(P ) + C \tan(L ) + C ) + C$ & $-1.95\text{E-2} + \frac{-6.68 \operatorname{atanh}(-0.19 + 3.36\text{E-4} \tan(2.16 + 3.33 x ) + \frac{8.17\text{E-4}}{-0.29 x^{2} + -1.07 x^{3} + 2.38 + x^{5} + x^{4} + x} )}{1.63 + x}$ \\
M32 & $C \sin(P ) + C \tan(C \operatorname{inv}(L ) + L ) + C$ & $-0.41 \tan(\frac{-3.84}{-4.26 x + 0.17} + x ) + -1.02 \sin(-1.49 x^{2} + 0.10 x + 1.86 ) + 6.24\text{E-2}$ \\
M33 & $C + L \tan(C \operatorname{inv}(L ) + C \operatorname{asinh}(P ) + C )$ & $\begin{aligned}[t] & 0.26 + (-0.18 + -8.60\text{E-2} x) \tan(\frac{-0.69}{-0.21 x + 2.51} + 2.54 \operatorname{asinh}(-0.12 + 1.92 x^{2} + 2.49 x ) ) \end{aligned}$ \\
M34 & $C \operatorname{inv}(P ) e^{C \operatorname{inv}(L )} \operatorname{atan}(L ) + C$ & $-0.35 + \frac{-2.81 e^{\frac{-2.50\text{E-4}}{-0.58 + 1.08 x}} \operatorname{atan}(0.80 + 2.50 x )}{-1.02 x + -1.19 + -3.05 x^{3} + 0.72 x^{4} + 3.46 x^{2}}$ \\
M35 & $C \operatorname{Pow}(C \operatorname{asin}(L ) \operatorname{atanh}(P ) + L ) + C$ & $\begin{aligned}[t] & -1.48\text{E-5} (2.29 \operatorname{asin}(5.50\text{E-5} + 9.99\text{E-2} x ) \operatorname{atanh}(- 1.86\text{E-2} x^{2} + 1 + 1.36\text{E-2} x ) + 2.46 x)^{4} + 0.89 \end{aligned}$ \\
M36 & $C \sin(C \operatorname{inv}(L ) + C ) + C + P e^{L}$ & $\begin{aligned}[t] & -0.39 + -1.68\text{E-4} \sin(\frac{-0.11}{1.13 x + 3.11} + 1.60\text{E-2} ) + (-3.78\text{E-4} x + 3.11\text{E-4} x^{2} + 5.93\text{E-4}) e^{-0.51 x} \end{aligned}$ \\
M37 & $C \operatorname{inv}(L ) \cosh(C \operatorname{inv}(L ) + C ) + C |{P}| + C$ & $-5.85\text{E-3} |{3.38 x^{2} + 4.78 - x}| + 0.70 + \frac{3.12\text{E-2} \cosh(\frac{0.96}{x} + 2.80\text{E-2} )}{x}$ \\
M38 & $C \operatorname{Pow}(C |{L}| + C ) \tanh(C + L \operatorname{inv}(P ) ) + C$ & $\begin{aligned}[t] & 
-0.12 + 6.20\text{E-5} (0.14 + 9.79 \left|{x}\right|)^{5} \tanh{\left(\frac{1.46\text{E-2} x + 4.38\text{E-3}}{-0.83 + -2.89 x^{3} + 0.27 x^{2} + 1.41 x + 3.72 x^{4}} \right)}
\end{aligned}$ \\
M39 & $C \log(C \operatorname{inv}(L ) + C ) + C + P \tan(L )$ & $\begin{aligned}[t] & -4.15\text{E-3} + 2.21\text{E-3} \log(\frac{-1.64\text{E-4}}{-0.49 x + -2.53} + 10.10 ) + (1.55\text{E-2}  + 2.07\text{E-2} x^{2} + 3.40\text{E-3} x) \tan(-4.39 + 4.23 x ) \end{aligned}$ \\
M40 & $C \operatorname{inv}(L ) \cosh(C \operatorname{inv}(L ) |{P}| + C ) + C$ & $0.18 + \frac{0.23 \cosh(\frac{3.10\text{E-4} |{-1.70 x^{3} + 2.27 x^{5} + 3.28 x^{4} + 4.18 + 4.32 x - x^{2}}|}{x} )}{-3.72 x + 4.84}$ \\
M41 & $C \operatorname{Pow}(C \tanh(P ) \operatorname{asinh}(L ) + C ) + C$ & $\begin{aligned}[t] & -1.11\text{E-2} (-2.72 \tanh(-9.93\text{E-3} + 1.46\text{E-2} x^{2} + 6.49\text{E-3} x^{3} + 6.83\text{E-3}  x ) \operatorname{asinh}(-0.88 + 0.54 x ) + 3.89)^{4} + 0.91 \end{aligned}$ \\
M42 & $C \operatorname{inv}(L ) + C + P |{C \log(L ) + C}|$ & $\begin{aligned}[t] & -0.61 + \frac{-1.39\text{E-3}}{4.36 x + 4.92} + (1.14\text{E-3} x + 1.22\text{E-3} x^{3} + 1.63\text{E-4} \\ & \quad + 3.12\text{E-4} x^{4} + 3.12\text{E-4} x^{2}) |{-0.28 + 0.73 \log(-1.21 x + 12.17 )}| \end{aligned}$ \\
M43 & $C \operatorname{square}(C + L \operatorname{inv}(P ) ) \operatorname{atan}(L ) + C$ & $\begin{aligned}[t] & -0.10 + -6.45\text{E-2} (\frac{1.23 x}{-3.88 + 1.12 x^{2} + x} + 3.21)^{2} \operatorname{atan}(-1.69 + x ) \end{aligned}$ \\
M44 & $C \cosh(P ) + C + L \operatorname{inv}(C \operatorname{atan}(L ) + C )$ & $-5.24\text{E-3} \cosh(-0.13 x + 6.54\text{E-2} + 6.65\text{E-2} x^{2} ) + 0.31 + \frac{-9.21\text{E-3} x + 4.49\text{E-3}}{-3.90 \operatorname{atan}(2.82 + x ) + 2.31}$ \\
M45 & $C + L \log(C \operatorname{inv}(L ) + C \sin(P ) + C )$ & $\begin{aligned}[t] & 0.78 + (-0.34 + 7.39\text{E-2} x) \log(\frac{-6.39\text{E-2}}{3.04 + 4.63 x} + 1.18 \sin(0.31 x + 3.79\text{E-2} x^{2} + 7.20\text{E-2} x^{3} + 7.76\text{E-2} ) + 10.46 ) \end{aligned}$ \\
M46 & $C \operatorname{inv}(C \tan(P ) + C |{L}| + C ) + C$ & $-2.07 + \frac{22.43}{-0.35 \tan(-3.26\text{E-2} x^{2} + 2.30\text{E-3} + 7.43\text{E-2} x^{3} + 7.43\text{E-2} x ) + 4.71 + |{-1.02 + 1.62 x}|}$ \\
M47 & $\begin{aligned}[t] & C \operatorname{inv}(L ) + C \operatorname{asin}(P ) \operatorname{atan}(C \operatorname{inv}(L ) +  C ) + C \end{aligned}$ & $\begin{aligned}[t] & 0.16 + \frac{0.54}{-1.56 + -4.55 x} + 1.88 \operatorname{asin}(-0.11 + -4.33\text{E-4} x + 1.00\text{E-3}  x^{3} + 1.06\text{E-3} x^{2} ) \operatorname{atan}(2.48 + \frac{4.69}{-1.70 x + -3.73} ) \end{aligned}$ \\
M48 & $C \operatorname{inv}(L ) \operatorname{atanh}(L ) + C \operatorname{atanh}(P ) + C$ & $-0.18 + 0.68 \operatorname{atanh}(-1 + 1.96\text{E-2} x^{2} + 4.24\text{E-3} x ) + \frac{5.13 \operatorname{atanh}(9.99\text{E-2} x )}{-0.69 + x}$ \\
M49 & $C \operatorname{Pow}(C \operatorname{atanh}(P ) + C ) \operatorname{acosh}(L ) + C$ & $\begin{aligned}[t] & -0.51 + 1.51\text{E-4} (-2.08 \operatorname{atanh}(-1.57\text{E-2} x^{2} + 0.97 + 4.00\text{E-2} x ) + -2.46)^{4} \operatorname{acosh}(0.96 x + 10.74 ) \end{aligned}$ \\
M50 & $C \tanh(C \operatorname{inv}(P ) + C + L \tan(L ) ) + C$ & $\begin{aligned}[t] & -2.38\text{E-2} + 17.92 \tanh(1.62\text{E-4} + \frac{1.63\text{E-4}}{-3.09 x + 1.99 + x^{2}} + (-2.55\text{E-5} + -4.79\text{E-4} x) \tan(-2.77 + 3.97 x ) ) \end{aligned}$ \\
\hline\hline
\end{tabular}%
}
\end{table*}

\begin{table*}[htbp]
\centering
\caption{Generated Equations for the H tier}
\label{tab:eq_h}
\scriptsize
\resizebox{\textwidth}{!}{
\begin{tabular}{cll}
\hline\hline
\textbf{ID} & \textbf{Skeleton} & \textbf{Equation} \\
\hline
H1 & $C \operatorname{inv}{\left(L \right)} \operatorname{Pow}{\left(C \log{\left(L \right)} + C \operatorname{asin}{\left(P \right)} + C \right)} + C$ & $-0.53 + \frac{8.41\text{E-2} \left(-1.16 + -2.64 \operatorname{asin}{\left(-0.97 + 1.52\text{E-2} x^{2} + 4.42\text{E-2} x \right)} + 0.29 \log{\left(1.14 x + 11.49 \right)}\right)^{4}}{-1.02 x + 3.34}$ \\
H2 & $C \operatorname{Pow}{\left(C \tanh{\left(L \right)} + C \right)} \operatorname{atan}{\left(C \operatorname{inv}{\left(L \right)} + C \right)} + P$ & 
$\begin{aligned}[t]-0.53 +& -9.11\text{E-6} x^{5} + 1.19\text{E-5} x + 1.39\text{E-5} x^{4} + 2.99\text{E-6} x^{6} + 2.99\text{E-6} x^{3} + \\ &3.28\text{E-6} x^{2} + 6.97\text{E-6} \left(-4.81 \tanh{\left(0.80 x \right)} + 1.39\right)^{6} \operatorname{atan}{(2.03 + \frac{3.71}{-0.51 x + -2.78} )} \end{aligned}$ \\
H3 & $C \operatorname{inv}{\left(L \right)} \operatorname{Pow}{\left(C \sin{\left(L \right)} + C \right)} + C \operatorname{Pow}{\left(C \operatorname{atan}{\left(P \right)} + C \right)} + C$ & $-0.18 \operatorname{atan}^{6}{\left(-3.57 x + 0.46 + x^{2} \right)} + 1.94 + \frac{5.09 \sin^{\frac{11}{5}}{\left(-1.63 + 2.17 x \right)}}{1.54 + 3.67 x}$ \\
H4 & $C \operatorname{inv}{\left(C + L \log{\left(L \right)} \right)} + C \operatorname{atanh}{\left(C \operatorname{inv}{\left(P \right)} + C \right)} + C$ & $-2.99 \operatorname{atanh}{\left(\frac{0.57}{-2.19 + -4.58 x^{3} + 0.45 x^{2} + 0.91 x + x^{4}} + 7.95\text{E-2} \right)} + 0.20 + \frac{2.01}{2.13 + \left(-1.12 x + 2.93\right) \log{\left(-1.31 x + 13.25 \right)}}$ \\
H5 & $C \sqrt{C \operatorname{inv}{\left(P \right)} \operatorname{atan}{\left(L \right)} + C} + C \operatorname{inv}{\left(L \right)} + C$ & $-8.82 + \frac{1.18}{x} + 2.74 \sqrt{\frac{-3.81 \operatorname{atan}{\left(1.83 x + 4.25 \right)}}{-4.57 x^{2} + 3.04 x + 4.18} + 10.37}$ \\
H6 & $C \operatorname{inv}{\left(C \operatorname{atanh}{\left(L \right)} + C \right)} + C \cosh{\left(C \operatorname{inv}{\left(P \right)} + L \right)} + C$ & $\frac{-1.33\text{E-4}}{-1.92 + 3.47 \operatorname{atanh}{\left(-2.45\text{E-3} + 0.10 x \right)}} + -1.76\text{E-2} \cosh{\left(0.60 x + \frac{1.48}{-0.21 + -1.17 x + x^{2}} \right)} + 0.70$ \\
H7 & $C \operatorname{Pow}{\left(C \operatorname{atan}{\left(L \right)} + C \right)} \operatorname{asin}{\left(C \operatorname{inv}{\left(L \right)} + C \right)} + C$ & $0.27 \left(-1.28 \operatorname{atan}{\left(-4.29 + 0.79 x \right)} + -3.98\right)^{4} \operatorname{asin}{\left(\frac{0.37}{-2.69 + -4.35 x} + 1.63\text{E-3} \right)} + 0.44$ \\
H8 & $C \operatorname{inv}{\left(C \cosh{\left(P \right)} + C \right)} + C \operatorname{Pow}{\left(C + L \log{\left(L \right)} \right)} + C$ & $-6.82\text{E-2} + \frac{0.40}{-1.45 + -3.08 \cosh{\left(-1.51\text{E-3} x + 1.65\text{E-2} x^{2} + 5.71\text{E-3} + 6.36\text{E-3} x^{3} \right)}} + 0.70 \sqrt[6]{-1.38 + \left(0.60 + x\right) \log{\left(1.11 x + 11.22 \right)}}$ \\
H9 & $C \operatorname{Pow}{\left(C \operatorname{inv}{\left(L \right)} + L \right)} + C \log{\left(C \operatorname{atanh}{\left(P \right)} + C \right)} + C$ & $-2.76 + 0.47 \sqrt[3]{-1.28 x + \frac{3.09}{3.32 + x}} + 1.27 \log{\left(-2.27 \operatorname{atanh}{\left(-0.11 + 1.04\text{E-3} x + 1.14\text{E-3} x^{2} + 9.89\text{E-4} x^{3} \right)} + 8.72 \right)}$ \\
H10 & $C \operatorname{inv}{\left(C \tan{\left(L \right)} + P \right)} \operatorname{acosh}{\left(C \operatorname{inv}{\left(L \right)} + C \right)} + C$ & $-0.28 + \frac{2.19 \operatorname{acosh}{\left(\frac{0.79}{x} + 5.53 \right)}}{-1.38 x + -4.48 \tan{\left(3.73 + 4.34 x \right)} + x^{2}}$ \\
H11 & $C + L \operatorname{inv}{\left(C \operatorname{acosh}{\left(P \right)} + C \right)} \tan{\left(C \operatorname{inv}{\left(L \right)} + C \right)}$ & $4.86\text{E-2} + \frac{\left(49.63 + 8.71 x\right) \tan{\left(\frac{-2.57\text{E-2}}{x} + 1.49\text{E-2} \right)}}{-3.19 \operatorname{acosh}{\left(-0.25 x + 1.11 + 2.63 x^{2} \right)} + 4.37}$ \\
H12 & $C \operatorname{inv}{\left(C \cosh{\left(P \right)} + C \right)} \sinh{\left(C \operatorname{inv}{\left(L \right)} + C \right)} + L$ & $-0.16 x + \frac{0.39 \sinh{\left(\frac{0.81}{1.43 x + 4.25} + 5.00\text{E-3} \right)}}{-1.74 \cosh{\left(-6.64\text{E-2} + 7.18\text{E-2} x^{2} + 8.82\text{E-2} x \right)} + 3.37} + 3.97\text{E-3}$ \\
H13 & $C \operatorname{atanh}{\left(C \operatorname{inv}{\left(L \right)} + C \right)} + C + P \operatorname{inv}{\left(C \operatorname{asinh}{\left(L \right)} + C \right)}$ & $-0.23 \operatorname{atanh}{\left(\frac{0.28}{-2.48 x + -3.83} + 3.04\text{E-3} \right)} + 0.32 + \frac{0.14 x^{2} + 0.21 + 2.07\text{E-3} x}{-2.12 \operatorname{asinh}{\left(-0.74 + 2.57 x \right)} + 2.81}$ \\
H14 & $C \operatorname{inv}{\left(P \right)} \tanh{\left(C \operatorname{inv}{\left(L \right)} + C \tanh{\left(L \right)} + C \right)} + C$ & $-7.28\text{E-2} + \frac{149.58 \tanh{\left(-5.35\text{E-3} + \frac{7.14\text{E-3}}{-2.57 + x} + 9.20\text{E-3} \tanh{\left(0.19 + 0.78 x \right)} \right)}}{-0.79 x + -1.78 x^{3} + -4.46 x^{2} + 2.33 x^{4}}$ \\
H15 & $C \operatorname{inv}{\left(C \sin{\left(L \right)} + C \right)} \operatorname{Pow}{\left(C \operatorname{atan}{\left(L \right)} + C \right)} + C$ & $0.23 + \frac{9.95\text{E-6} \left(-3.73 \operatorname{atan}{\left(0.82 + 4.59 x \right)} + 0.32\right)^{\frac{11}{2}}}{-0.25 + 0.19 \sin{\left(1.91 + x \right)}}$ \\
H16 & $C \operatorname{Pow}{\left(C \tanh{\left(P \right)} + C \right)} \left|{C \operatorname{inv}{\left(L \right)} + C}\right| + L$ & $-0.57 + 2.83\text{E-3} x + 2.83\text{E-3} \left(-4.26 + 1.37 \tanh{\left(7.27\text{E-2} x^{2} + 7.27\text{E-2} x \right)}\right)^{4} \left|{1.21 + \frac{3.76}{-2.70 + 1.70 x}}\right|$ \\
H17 & $C \operatorname{Pow}{\left(C + L \sin{\left(P \right)} \right)} + C \operatorname{asin}{\left(C \operatorname{inv}{\left(L \right)} + C \right)} + C$ & 
$\begin{aligned}[t]-6.76\text{E-5} \operatorname{asin}(&\frac{0.36}{-4.24 + 4 x} + 7.56\text{E-4} ) + \\ &1.03\text{E-4} \left(\left(7.00\text{E-2} + x\right) \sin{\left(-2.34\text{E-2} + 7.43\text{E-2} x^{3} + 7.43\text{E-2} x^{2} + 7.43\text{E-2} x \right)}\right)^{\frac{24}{5}} + 8.58\text{E-3}\end{aligned}$ \\
H18 & $C \operatorname{Pow}{\left(C \operatorname{atan}{\left(L \right)} + C \right)} \operatorname{atanh}{\left(C \operatorname{inv}{\left(L \right)} + P \right)} + C$ & $-5.41\text{E-2} + 0.75 \operatorname{atan}^{\frac{5}{3}}{\left(-1.35 + 1.41 x \right)} \operatorname{atanh}{\left(-0.94 + -1.08\text{E-2} x + \frac{1.46\text{E-2}}{0.44 + 1.28 x} + 1.83\text{E-2} x^{2} \right)}$ \\
H19 & $C \operatorname{Pow}{\left(C \operatorname{acosh}{\left(L \right)} + C \right)} + C + L \cosh{\left(C \operatorname{inv}{\left(P \right)} + C \right)}$ & 
$\begin{aligned}[t]-0.11 (-2.82 + 0.84 &\operatorname{acosh}(0.95 x + 10.64 ))^{4} + \\ &0.30 + \left(-9.11\text{E-2} + 7.94\text{E-2} x\right) \cosh{\left(0.66 + \frac{2.07}{-1.01 x + -2.37 x^{2} + 0.92 x^{4} + 1.78 + 2.77 x^{3}} \right)}\end{aligned}$ \\
H20 & $C e^{C \operatorname{inv}{\left(P \right)}} + C + L \operatorname{inv}{\left(C \operatorname{atanh}{\left(L \right)} + C \right)}$ & $-0.32 + \frac{-0.44 x}{-3.56 \operatorname{atanh}{\left(9.99\text{E-2} x \right)} + 0.29} + -0.56 e^{\frac{1.06}{-1.41 + 8.40\text{E-2} x + x^{2}}}$ \\
H21 & $C \operatorname{inv}{\left(C \operatorname{atan}{\left(L \right)} + C \right)} \cosh{\left(C \operatorname{inv}{\left(P \right)} + C \right)} + L$ & $0.12 x + \frac{0.51 \cosh{\left(0.14 + \frac{1}{-0.44 x^{3} + -2.62 x^{4} + -3.24 x^{5} + 1.69 x^{2} + 1.86 + 1.94 x} \right)}}{\operatorname{atan}{\left(-1.94 + 4.15 x \right)}} + 2.48\text{E-2}$ \\
H22 & $C \log{\left(C \operatorname{inv}{\left(P \right)} + C \right)} \operatorname{acosh}{\left(C \operatorname{inv}{\left(L \right)} + C \right)} + L$ & $-0.16 x + -1.06 \log{\left(\frac{2.23}{-2.55 x + -2.68 x^{2} + x^{4} + x^{3}} + 9.62 \right)} \operatorname{acosh}{\left(\frac{0.37}{-2.15 x + 1.05} + 2.53 \right)} + 3.74$ \\
H23 & $C \operatorname{inv}{\left(C \sin{\left(P \right)} + C \right)} \operatorname{Pow}{\left(C \operatorname{atan}{\left(L \right)} + L \right)} + C$ & $-1.74\text{E-2} + \frac{2.30\text{E-4} \left(-2.54 + -3.70 \operatorname{atan}{\left(-1.34 + 3.57 x \right)} + x\right)^{5}}{\sin{\left(-1.45 x^{2} + 0.48 x + 1.31 \right)}}$ \\
H24 & $C \operatorname{inv}{\left(P \right)} + C \operatorname{Pow}{\left(C \tanh{\left(L \right)} + C \right)} \operatorname{atan}{\left(L \right)} + C$ & $\frac{0.57}{0.60 x^{5} + 1.25 x^{3} + 1.41 x + 2.37 + 4.17 x^{2} + x^{4}} + 0.99 + 1.10 \sqrt[6]{0.13 - \tanh{\left(-0.15 + 0.79 x \right)}} \operatorname{atan}{\left(-4.50 + 1.12 x \right)}$ \\
H25 & $C \operatorname{Pow}{\left(C + L \operatorname{asin}{\left(L \right)} \right)} + C \operatorname{asin}{\left(C \operatorname{inv}{\left(P \right)} + C \right)} + C$ & $-1.38 + 0.36 \left(0.33 + \left(0.84 + 3.50\text{E-2} x\right) \operatorname{asin}{\left(9.99\text{E-2} x \right)}\right)^{4} + 1.11 \operatorname{asin}{\left(\frac{-6.05\text{E-2}}{0.18 x + 2.09 x^{2} + 3.71 x^{3}} + 0.77 \right)}$ \\
H26 & $C \sqrt{P} + C \operatorname{Pow}{\left(C \operatorname{inv}{\left(L \right)} \sin{\left(L \right)} + C \right)} + C$ & 
$\begin{aligned}[t]-4.13 + 0.69 (&-1.07 + \frac{3.49 \sin{\left(0.45 + x \right)}}{1.58 + 4.21 x})^{6} + \\ &1.01 \sqrt{-1.86\text{E-5} x + -3.61\text{E-5} x^{3} + 1.09\text{E-4} x^{5} + 10.11 + 6.55\text{E-5} x^{2} + 8.11\text{E-5} x^{4}}\end{aligned}$ \\
H27 & $C \operatorname{inv}{\left(C \operatorname{asinh}{\left(P \right)} + C \right)} + C \operatorname{Pow}{\left(C \left|{L}\right| + C \right)} + C$ & $-0.57 + \frac{0.24}{-1.12 \operatorname{asinh}{\left(-0.68 + 1.98 x + x^{3} + x^{2} \right)} + 4.03} + 0.78 \left(-0.22 \left|{x}\right| + 0.72\right)^{4}$ \\
H28 & $C \operatorname{inv}{\left(L \right)} + C \operatorname{Pow}{\left(C \tan{\left(L \right)} + C \right)} \log{\left(C \operatorname{inv}{\left(P \right)} + C \right)} + C$ & $-0.73 + \frac{-3.96\text{E-3}}{-4.89 x + 1.92} + 8.92\text{E-4} \left(-0.85 \tan{\left(1.53 x + 4.46 \right)} + 3.66\right)^{4} \log{\left(\frac{-2.69}{x^{2} + x} + 13.07 \right)}$ \\
H29 & $C \operatorname{inv}{\left(C + P \left|{L}\right| \right)} \operatorname{Pow}{\left(C \operatorname{atanh}{\left(L \right)} + C \right)} + C$ & $\frac{-1.97 \left(-2.44 + 0.51 \operatorname{atanh}{\left(9.99\text{E-2} x \right)}\right)^{\frac{3}{2}}}{-1.17 + \left(-0.65 x^{2} + -1.66 + 0.41 x^{3} + 1.10 x + x^{4}\right) \left|{-0.25 + 3.69 x}\right|} + 0.23$ \\
H30 & $C \operatorname{inv}{\left(C \left|{P}\right| + C \right)} \operatorname{Pow}{\left(C \operatorname{atan}{\left(L \right)} + C \right)} + L$ & $\frac{-0.43 \operatorname{atan}^{5}{\left(1.67 + 4.11 x \right)}}{\left|{-3.75 x + -4.07 x^{2} + 3.10 + 3.75 x^{3}}\right|} + 0.13 x + 0.15$ \\
H31 & $C \operatorname{Pow}{\left(C + L \operatorname{inv}{\left(P \right)} \right)} + C \tanh{\left(C \sin{\left(L \right)} + C \right)} + C$ & $-1.22 + 2.68\text{E-2} \tanh{\left(0.32 + 1.26 \sin{\left(-4.02 + 1.52 x \right)} \right)} + 6.30\text{E-2} \left(-1.64 + \frac{-1.02 + 2.90 x}{-2.16 x^{2} + -3.83 x + -4.01}\right)^{6}$ \\
H32 & $C \operatorname{inv}{\left(L \right)} \operatorname{asinh}{\left(C \operatorname{inv}{\left(L \right)} + C \right)} + C \log{\left(P \right)} + C$ & $\frac{-0.70 \operatorname{asinh}{\left(0.47 + \frac{1.93}{x} \right)}}{-2.08 + x} + -1.94 + 0.20 \log{\left(-2.83 x^{4} + 0.76 x^{5} + 3.74 x^{6} + 3.96 x^{3} + 4.21 - x^{2} - x \right)}$ \\
H33 & $C \operatorname{inv}{\left(C \operatorname{acosh}{\left(L \right)} + C \right)} + C \operatorname{Pow}{\left(C \operatorname{inv}{\left(L \right)} \operatorname{asin}{\left(P \right)} + C \right)} + C$ & $6.71 \left(-4.65 - \frac{\operatorname{asin}{\left(-1.45\text{E-3} x + -8.71\text{E-2} + 1.01\text{E-3} x^{3} + 8.71\text{E-4} x^{2} \right)}}{-3.34 + 0.99 x}\right)^{\frac{3}{2}} + \frac{6.71}{-4.39 + 0.41 \operatorname{acosh}{\left(-0.58 x + 6.90 \right)}} + 70.35$ \\
H34 & $C \operatorname{Pow}{\left(C + L \operatorname{inv}{\left(P \right)} \right)} \sin{\left(C \sqrt{L} + C \right)} + C$ & $6.53\text{E-3} \left(3.44 + \frac{2.40 x + 2.85}{0.56 x + 2.08 + 2.47 x^{3} + 3.02 x^{2}}\right)^{4} \sin{\left(3.29 \sqrt{1 x + 10.10} \right)} + 8.29\text{E-2}$ \\
H35 & $C \operatorname{asin}{\left(C \operatorname{inv}{\left(P \right)} + C \right)} \operatorname{atan}{\left(C \operatorname{inv}{\left(L \right)} + C \right)} + C$ & $-2.25 + 1.34 \operatorname{asin}{\left(\frac{-0.61}{-0.72 x^{4} + -3.02 x + 1.35 + 2.77 x^{6} + 3.05 x^{5} + x^{3} + x^{2}} + 1 \right)} \operatorname{atan}{\left(\frac{3.55}{-1.43 x + 1.89} + 3.60 \right)}$ \\
H36 & $C \operatorname{inv}{\left(C \sin{\left(L \right)} + C \right)} \operatorname{Pow}{\left(C \operatorname{atan}{\left(P \right)} + C \right)} + L$ & $-1.84\text{E-4} x + \frac{1.23\text{E-2} \left(-0.96 + -1.38 \operatorname{atan}{\left(-2.88 + -2.98 x^{2} + 1.43 x + x^{3} \right)}\right)^{4}}{\sin{\left(3.88 x + 4.80 \right)}} + 3.97\text{E-2}$ \\
H37 & $C \operatorname{square}{\left(C \operatorname{inv}{\left(L \right)} + L \right)} \operatorname{asinh}{\left(C \operatorname{inv}{\left(P \right)} + C \right)} + C$ & $-1.19 + 9.22\text{E-3} \left(-2 x + -2.50 + \frac{3.98}{-0.46 x + -1.94}\right)^{2} \operatorname{asinh}{\left(\frac{-0.46}{-0.19 x^{4} + -3.46 + -3.85 x + 0.65 x^{3} + 3.57 x^{2}} + 0.67 \right)}$ \\
H38 & $C \operatorname{Pow}{\left(C \operatorname{atanh}{\left(L \right)} + C \right)} + C \operatorname{Pow}{\left(C \operatorname{inv}{\left(L \right)} \operatorname{acosh}{\left(P \right)} + C \right)} + C$ & $-0.86 + 1.04\text{E-3} \left(-2.66 \operatorname{atanh}{\left(9.99\text{E-2} x \right)} + 1.27\right)^{4} + 6.79\text{E-4} \left(\frac{-2.50 \operatorname{acosh}{\left(1.50 + 2.01 x + 2.54 x^{2} \right)}}{0.62 + x} + -3.80\right)^{4}$ \\
H39 & $C \operatorname{Pow}{\left(C \operatorname{inv}{\left(L \right)} + C \tan{\left(L \right)} + C \right)} \sinh{\left(P \right)} + C$ & $0.24 + 2.49\text{E-7} \left(-4.41 - \tan{\left(-2.80 + 2.69 x \right)} + \frac{1}{-0.85 + 3.82 x}\right)^{6} \sinh{\left(-7.29\text{E-2} x^{2} + 5.52\text{E-2} + 8.27\text{E-2} x \right)}$ \\
H40 & $C \operatorname{Pow}{\left(C \operatorname{inv}{\left(L \right)} + C \operatorname{atan}{\left(P \right)} + C \right)} + C \tan{\left(L \right)} + C$ & $1.44\text{E-2} \left(-3.12 + \operatorname{atan}{\left(-4.46 x^{2} + 2.05 + x \right)} + \frac{1}{-0.50 + -2.05 x}\right)^{\frac{11}{3}} + 2.50\text{E-2} \tan{\left(-0.74 + 4.82 x \right)} + 3.76$ \\
H41 & $C \operatorname{inv}{\left(L \right)} \operatorname{asin}{\left(C \operatorname{inv}{\left(P \right)} + C \cosh{\left(L \right)} + C \right)} + C$ & $1.30\text{E-3} + \frac{1.35 \operatorname{asin}{\left(-1.33\text{E-3} \cosh{\left(0.80 x \right)} + 0.99 + \frac{1.60\text{E-3}}{-0.38 x + -0.84 x^{3} + 1.73 x^{2}} \right)}}{-1.57 x + 1.83}$ \\
H42 & $C \operatorname{inv}{\left(P \right)} + C \operatorname{Pow}{\left(C \tan{\left(L \right)} + C \right)} \operatorname{atan}{\left(L \right)} + C$ & $\frac{-8.20\text{E-3}}{-4.58 x^{5} + 2.35 x^{3} + 4.75 x + x^{4} + x^{2}} + 1.76\text{E-2} \tan^{6}{\left(4.18 x \right)} \operatorname{atan}{\left(-1.61 + x \right)} + 5.75\text{E-2}$ \\
H43 & $C \operatorname{inv}{\left(L \right)} + C \cosh{\left(C \operatorname{inv}{\left(L \right)} + C \right)} \operatorname{asin}{\left(P \right)} + C$ & $0.28 + \frac{0.76}{0.53 x + 4.91} + 0.89 \cosh{\left(\frac{0.24}{-4.11 + 3.18 x} + 0.28 \right)} \operatorname{asin}{\left(-0.98 + 1.65\text{E-2} x^{2} + 3.32\text{E-2} x \right)}$ \\
H44 & $C \operatorname{Pow}{\left(C + L \operatorname{inv}{\left(P \right)} \right)} + C \log{\left(C \sqrt{L} + C \right)} + C$ & $-5.35 + 0.14 \left(4.24 + \frac{-3.58 + 3.39 x}{-2.48 + -2.52 x + 0.57 x^{3} + 2.62 x^{2}}\right)^{\frac{6}{5}} + 1.86 \log{\left(4.90 \sqrt{0.72 x + 7.32} \right)}$ \\
H45 & $C \operatorname{atan}{\left(C \operatorname{inv}{\left(L \right)} + C \right)} \operatorname{atan}{\left(C \operatorname{inv}{\left(P \right)} + C \right)} + C$ & $-0.46 + 1.94 \operatorname{atan}{\left(0.23 + \frac{3.85}{-1.40\text{E-2} x^{3} + -2.57 x + -3.55 + 0.76 x^{5} + 1.60 x^{4} + x^{2}} \right)} \operatorname{atan}{\left(3.50 - \frac{1}{-1.88 + 4.64 x} \right)}$ \\
H46 & $C \operatorname{inv}{\left(C \operatorname{asin}{\left(L \right)} + C \right)} \operatorname{Pow}{\left(C \left|{L}\right| + C \right)} + P$ & $-0.24 + -3.27\text{E-4} x^{2} + -6.15\text{E-4} x^{3} + 2.61\text{E-4} x + \frac{6.42\text{E-4} \left(-3.10 + \left|{x}\right|\right)^{5}}{-0.81 + 2.01 \operatorname{asin}{\left(9.99\text{E-2} x \right)}}$ \\
H47 & $C \operatorname{inv}{\left(P \right)} \operatorname{Pow}{\left(C \log{\left(L \right)} + C \right)} + C \operatorname{Pow}{\left(C \left|{L}\right| + C \right)} + C$ & $-0.67 + \frac{-1.26\text{E-6} \left(-1.65 + 4.92 \log{\left(-1.21 x + 12.17 \right)}\right)^{5}}{-1.51 x + -3.28 x^{2} + -3.66 x^{3} + -3.87 x^{5} + 3.27 x^{4} + 4.40} + 2.40\text{E-7} \left(-1.35 \left|{x}\right| + -2.43\right)^{6}$ \\
H48 & $C \sqrt{C \operatorname{inv}{\left(P \right)} + C} \operatorname{inv}{\left(C \cosh{\left(L \right)} + C \right)} + L$ & $0.17 x + \frac{0.17 \sqrt{\frac{-4.24}{-0.28 x^{3} + -2.68 x + -4.30 x^{4} + -4.69 x^{6} + 2.46 x^{2} + 3.54 + x^{5}} + 6.68}}{-3.44 + \cosh{\left(-0.77 + 0.72 x \right)}} + 3.49\text{E-2}$ \\
H49 & $C \operatorname{Pow}{\left(C \operatorname{inv}{\left(L \right)} + C \tanh{\left(L \right)} + C \right)} \cosh{\left(P \right)} + C$ & $-4.48\text{E-4} \left(-1.98 \tanh{\left(0.80 x \right)} + \frac{0.51}{0.98 + 1.35 x}\right)^{4} \cosh{\left(-7.11\text{E-3} + 1.31\text{E-2} x^{2} + 1.31\text{E-2} x + 6.57\text{E-3} x^{3} \right)} + 0.26$ \\
H50 & $C \operatorname{inv}{\left(C \tan{\left(L \right)} + C \right)} + C \operatorname{Pow}{\left(C \operatorname{inv}{\left(P \right)} + C \operatorname{atan}{\left(L \right)} + C \right)} + C$ & $-1.90 + \frac{-5.48\text{E-5}}{-2.12 \tan{\left(3.16 + 4.99 x \right)} + 1.98} + 2.63\text{E-5} \left(-0.41 + 4.61 \operatorname{atan}{\left(1.56 x + 3.48 \right)} + \frac{1}{-2.85 x + 1.51 x^{2} + 3.50}\right)^{6}$ \\
\hline\hline
\end{tabular}
}
\end{table*}

\clearpage

\begin{figure*}
    \centering
    \includegraphics[width=\linewidth]{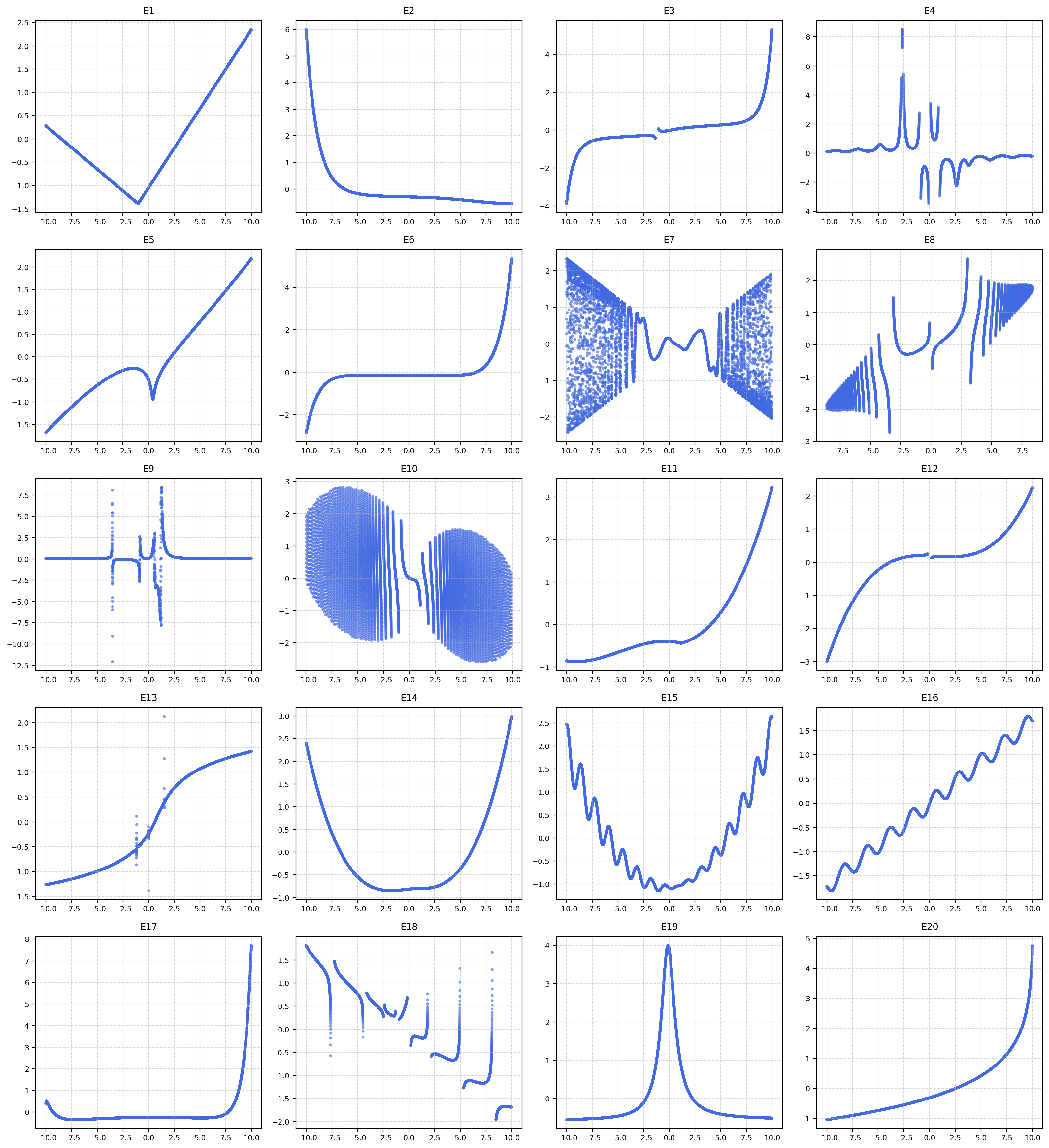}
    \caption{Plots for Benchmark Equations --- E Tier (E1--E20)}
    \label{fig:eq_e1}
\end{figure*}

\begin{figure*}
    \centering
    \includegraphics[width=\linewidth]{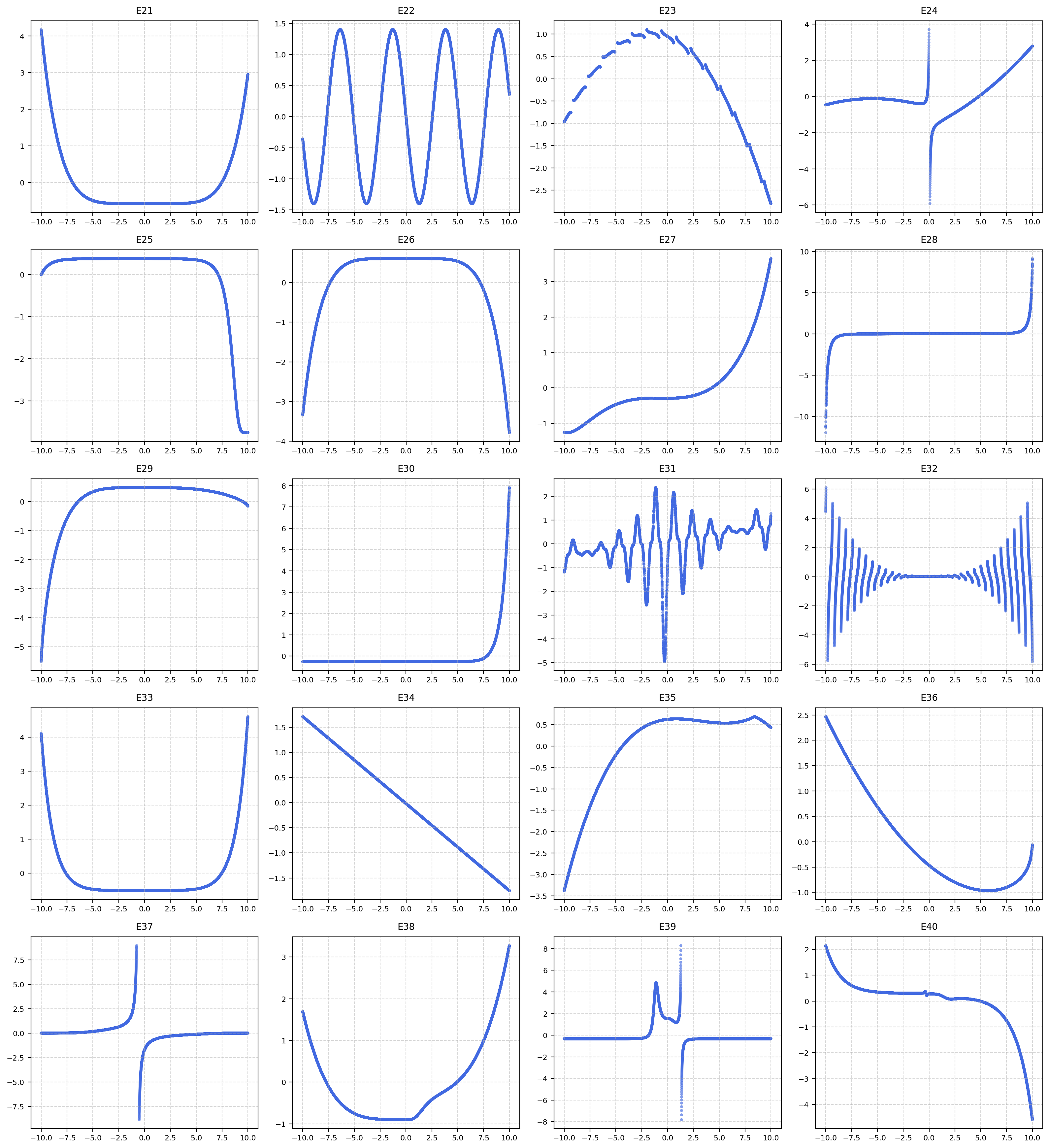}
    \caption{Plots for Benchmark Equations --- E Tier (E21--E40)}
    \label{fig:eq_e2}
\end{figure*}

\begin{figure*}
    \centering
    \includegraphics[width=\linewidth]{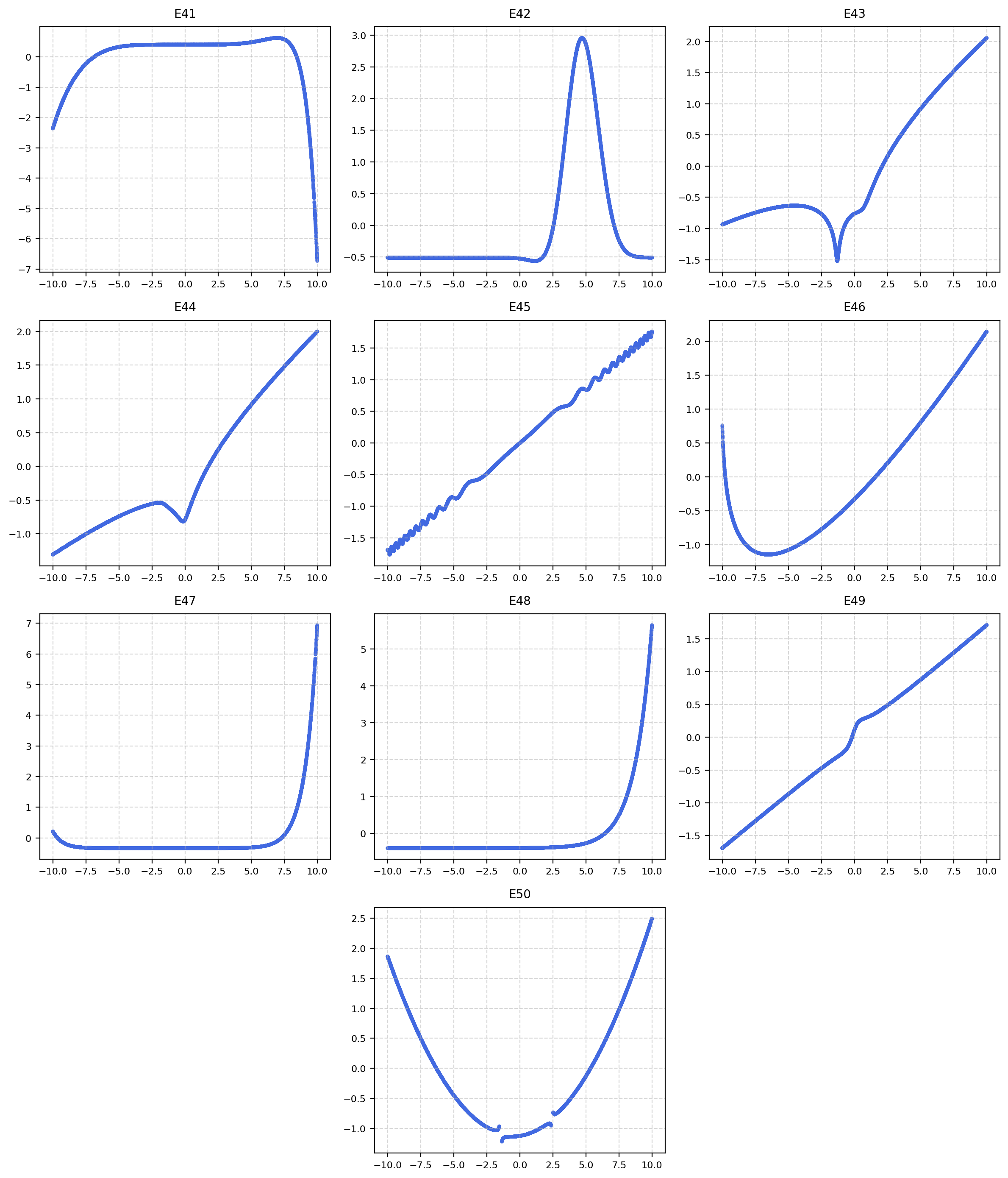}
    \caption{Plots for Benchmark Equations --- E Tier (E41--E50)}
    \label{fig:eq_e3}
\end{figure*}

\begin{figure*}
    \centering
    \includegraphics[width=\linewidth]{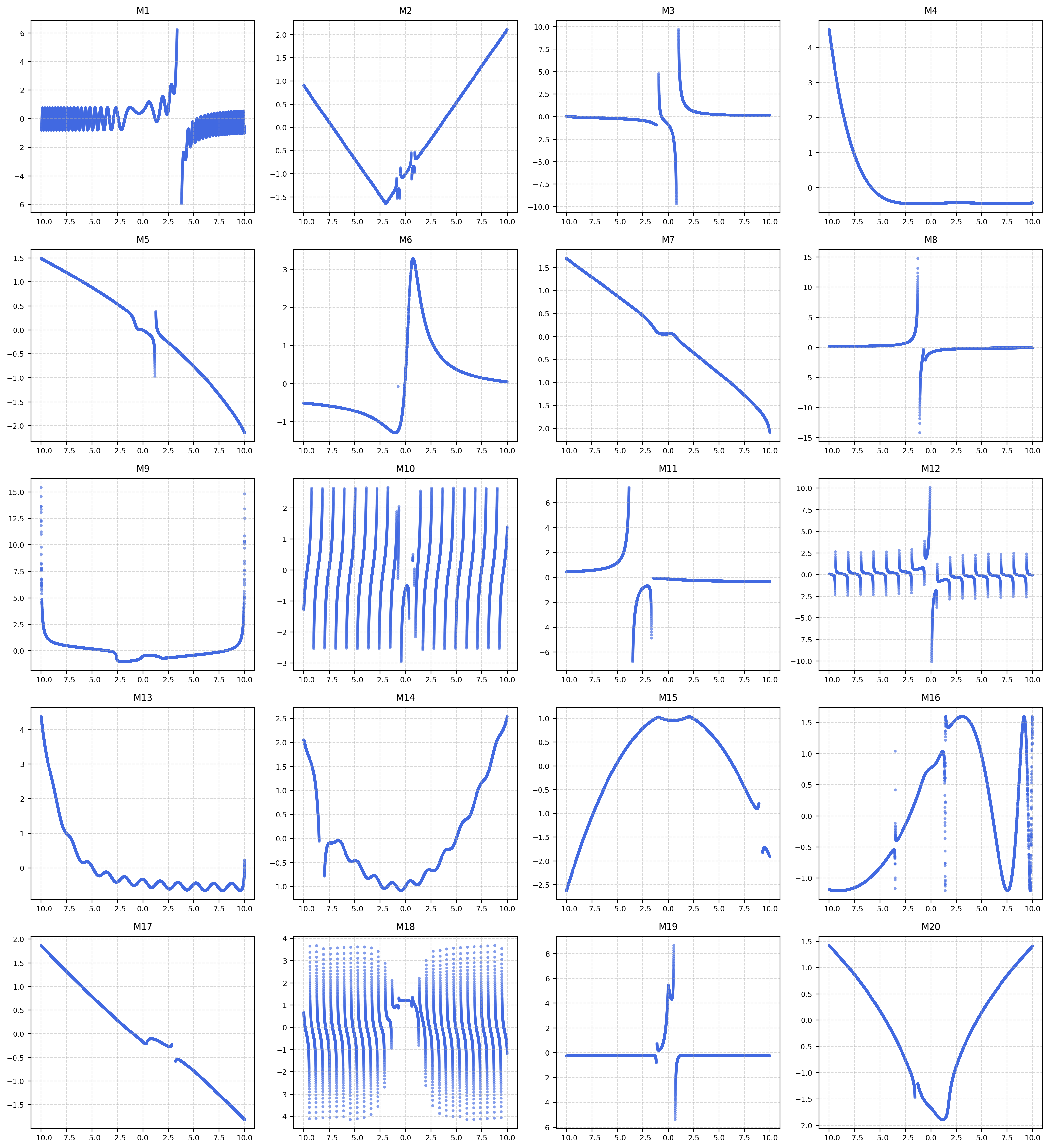}
    \caption{Plots for Benchmark Equations --- M Tier (M1--M20)}
    \label{fig:eq_m1}
\end{figure*}

\begin{figure*}
    \centering
    \includegraphics[width=\linewidth]{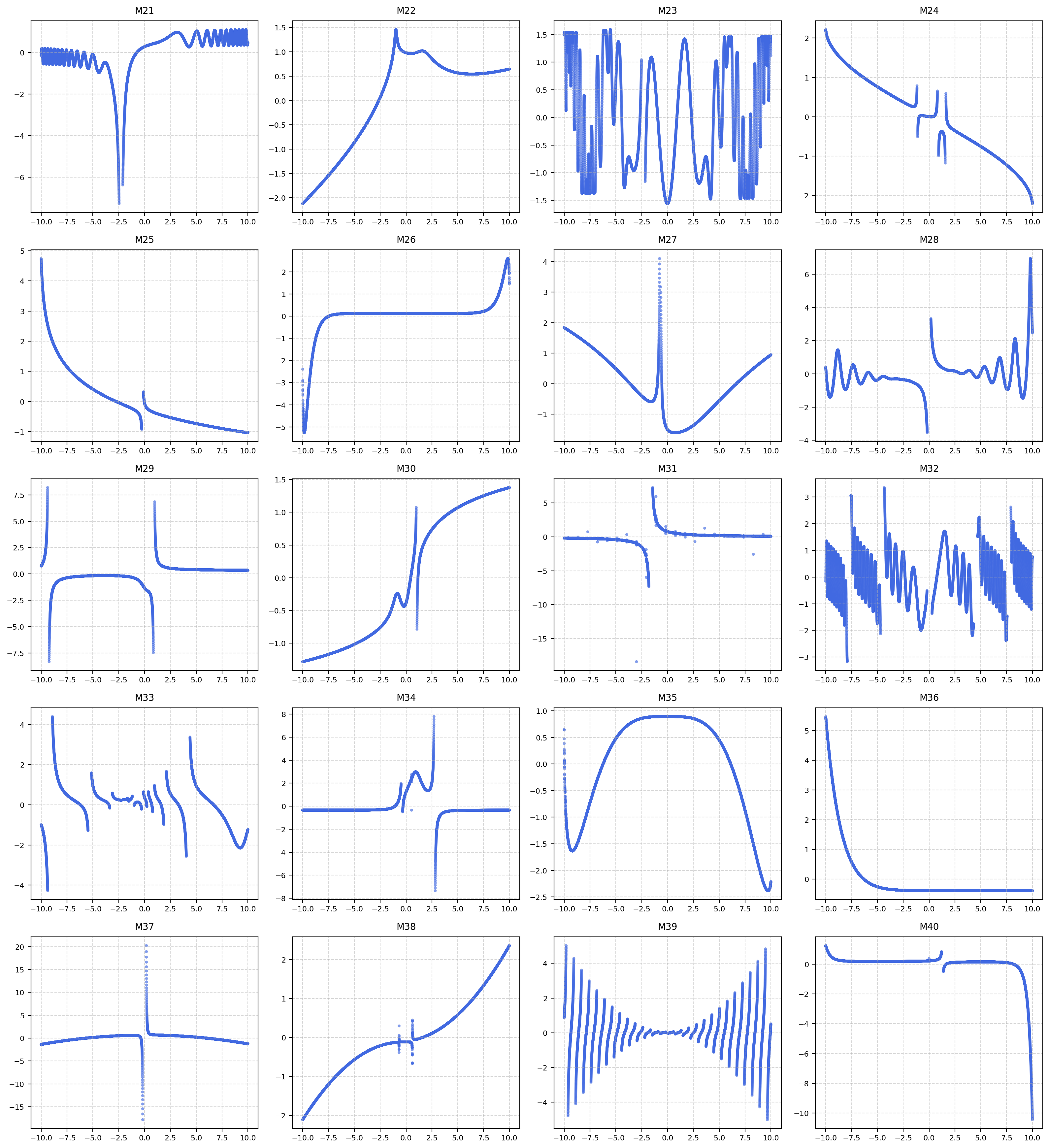}
    \caption{Plots for Benchmark Equations --- M Tier (M21--M40)}
    \label{fig:eq_m2}
\end{figure*}

\begin{figure*}
    \centering
    \includegraphics[width=\linewidth]{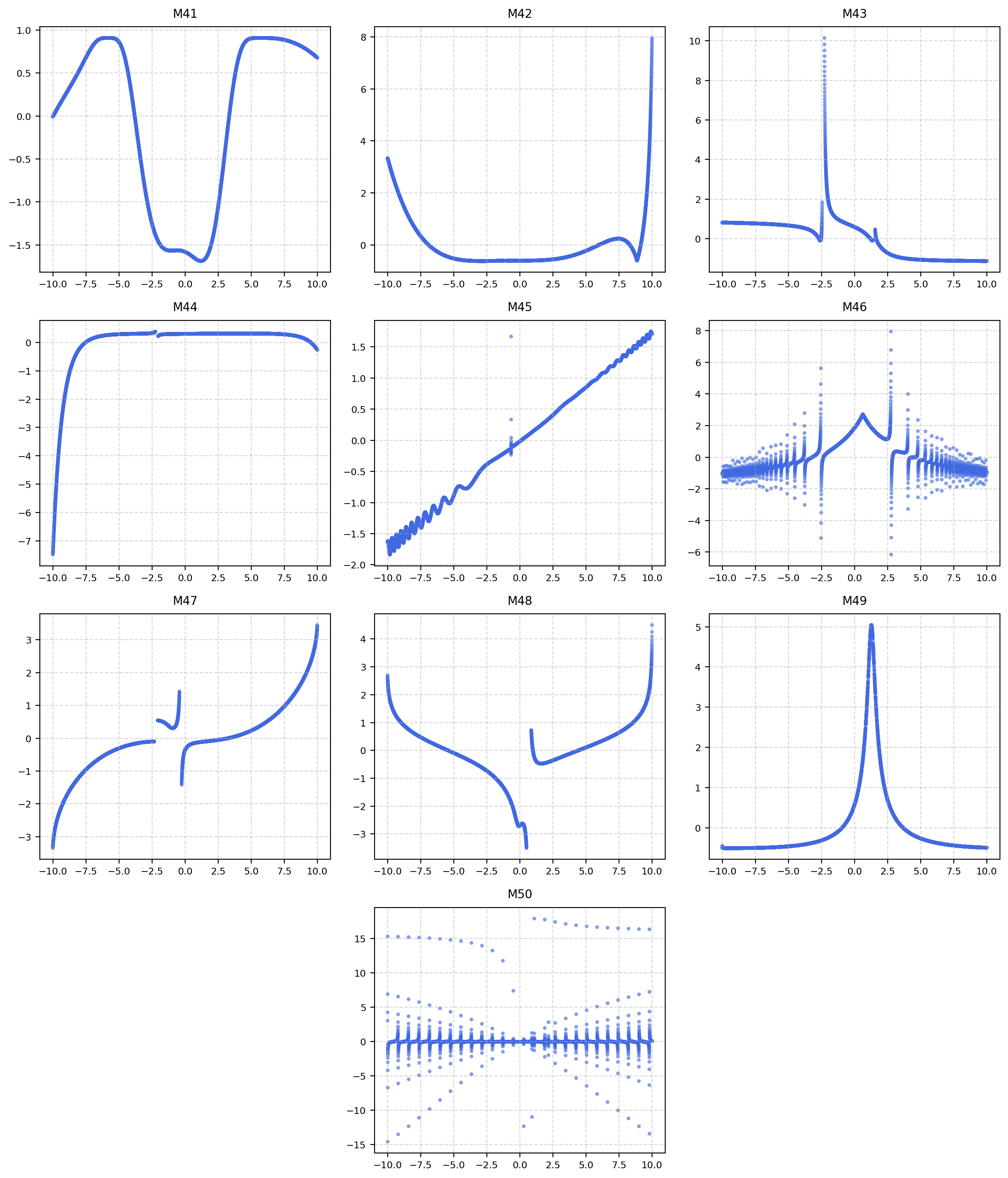}
    \caption{Plots for Benchmark Equations --- M Tier (M41--M50)}
    \label{fig:eq_m3}
\end{figure*}

\begin{figure*}
    \centering
    \includegraphics[width=\linewidth]{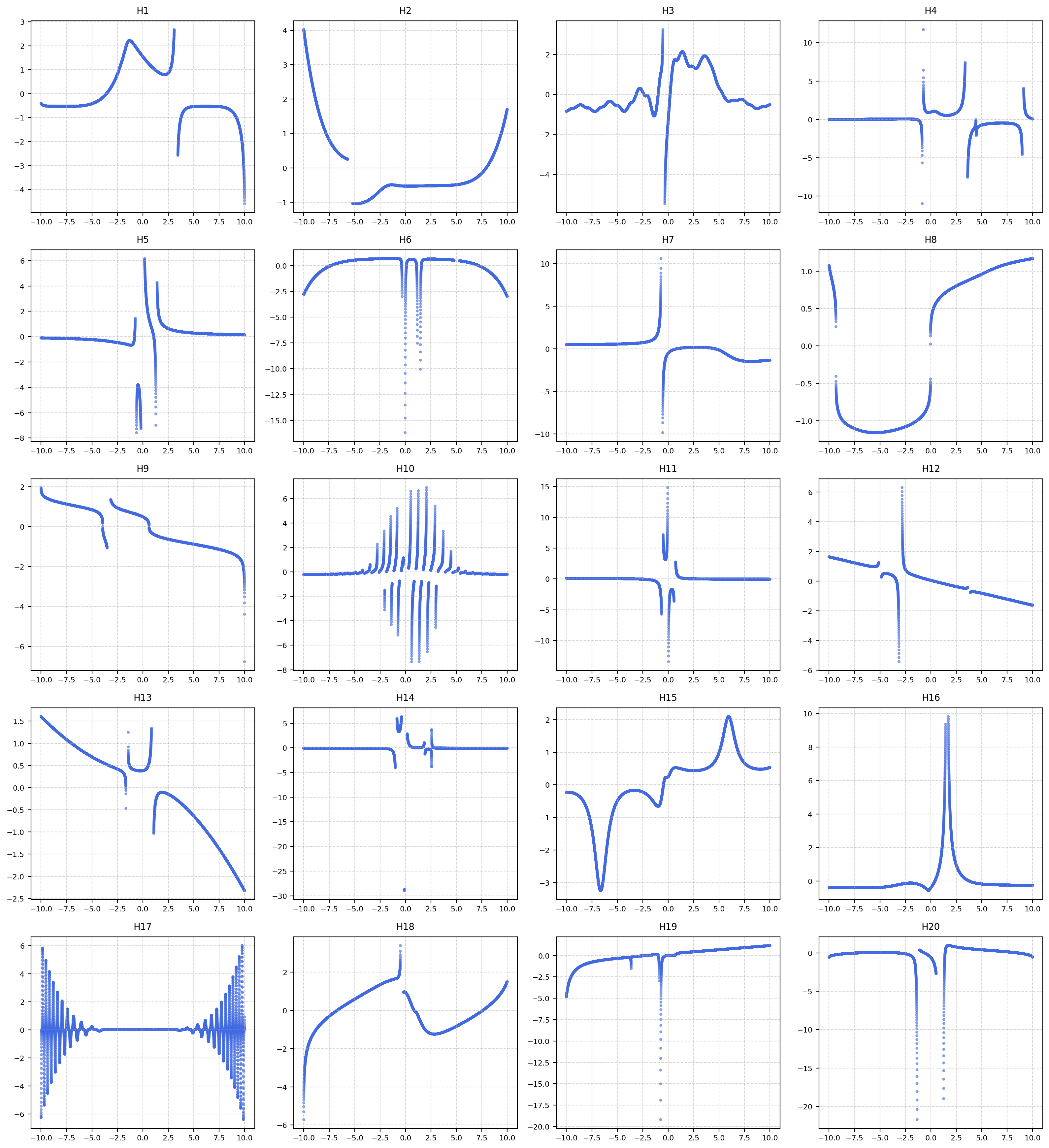}
    \caption{Plots for Benchmark Equations --- H Tier (H1--H20)}
    \label{fig:eq_h1}
\end{figure*}

\begin{figure*}
    \centering
    \includegraphics[width=\linewidth]{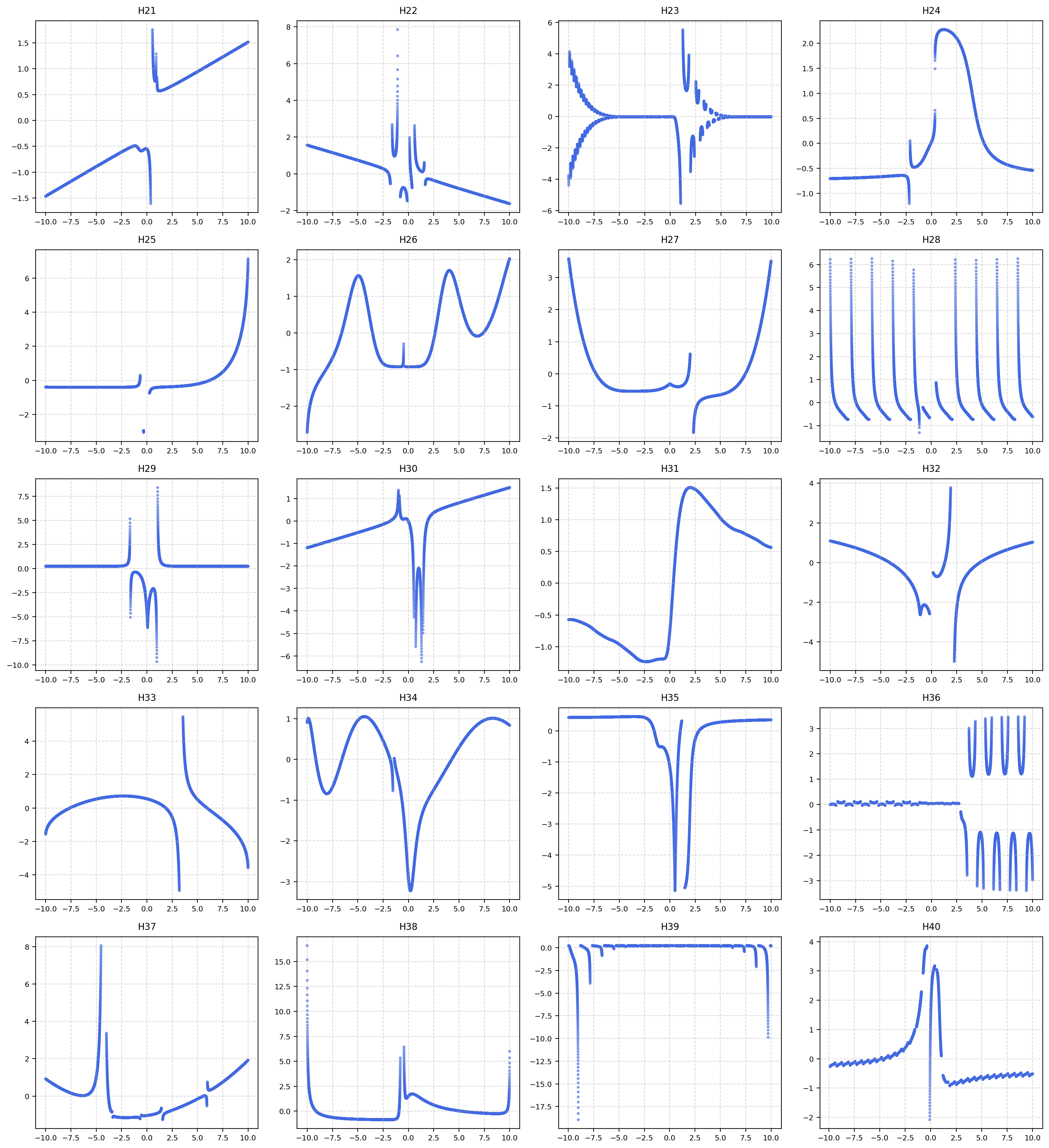}
    \caption{Plots for Benchmark Equations --- H Tier (H21--H40)}
    \label{fig:eq_h2}
\end{figure*}

\begin{figure*}
    \centering
    \includegraphics[width=\linewidth]{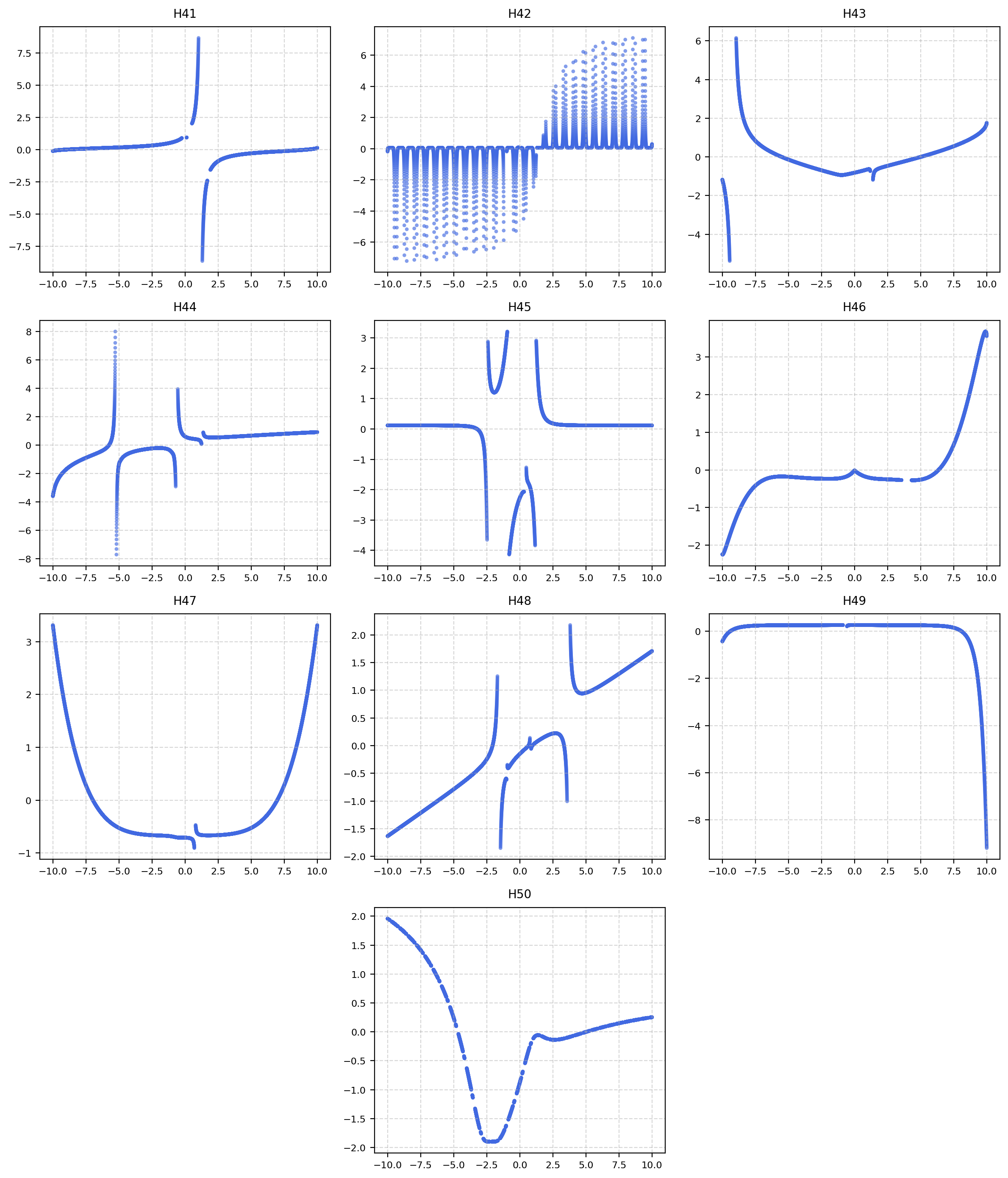}
    \caption{Plots for Benchmark Equations --- H Tier (H41--H50)}
    \label{fig:eq_h3}
\end{figure*}

\FloatBarrier

\stepcounter{section} 
\setcounter{table}{0}  

\begin{table}[!ht]
\centering
\caption{Tuning hyperparameter grids}
\label{tab:hyperparameter_grid}
\scriptsize
\setlength{\tabcolsep}{3pt}
\begin{tabularx}{\columnwidth}{l l X}
\toprule
\textbf{Method} & \textbf{Hyperparameter} & \textbf{Values} \\
\midrule

\multirow{2}{*}{AIFeynman}
& Polynomial fitting degree & 2, 3, 4, 5 \\
& Neural network training epochs & 500, 1000, 2000 \\
\midrule

\multirow{3}{*}{BSR}
& Number of trees & 2, 4, 8 \\
& Number of iterations & 50, 100, 200, 500 \\
& Regularization parameter & -0.1, -1, -2, -5 \\
\midrule

\multirow{3}{*}{DSR}
& Number of samples & 100,000, 500,000, 1,000,000 \\
& Learning rate & 0.0003, 0.0005, 0.001, 0.01 \\
& Risk parameter & 0.05, 0.10, 0.15, 0.5 \\
\midrule

\multirow{3}{*}{E2E}
& Number of bags & 50, 100, 200 \\
& Beam size & 1, 5, 10 \\
& Number of top candidates & 1, 5, 10 \\
\midrule

\multirow{3}{*}{GP-GOMEA}
& Population size & 100, 250, 500, 1000 \\
& Number of generations & 50, 100, 200, 500 \\
& Maximum tree size & 15, 31, 63, 127 \\
\midrule

\multirow{6}{*}{gplearn}
& Population size & 200, 500, 1000, 2000 \\
& Number of generations & 100, 200, 500 \\
& Tournament size & 10, 20, 50 \\
& Crossover probability & 0.6, 0.7, 0.8 \\
& Subtree mutation probability & 0.05, 0.1, 0.2 \\
& Parsimony coefficient & 0.0001, 0.001, 0.01, 0.1 \\
\midrule

\multirow{4}{*}{ITEA}
& Population size & 100, 250, 500, 1000 \\
& Number of generations & 50, 100, 200, 500 \\
& Maximum number of terms & 5, 10, 15, 20 \\
& Maximum exponent & 2, 3, 4 \\
\midrule

\multirow{2}{*}{NeSymReS}
& Beam size & 2, 5, 10 \\
& Number of BFGS restarts & 2, 4, 8, 16 \\
\midrule

\multirow{5}{*}{Operon}
& Population size & 250, 500, 1000, 2000 \\
& Evaluation budget & 50,000, 100,000, 250,000, 500,000 \\
& Crossover probability & 0.6, 0.7, 0.8 \\
& Mutation probability & 0.1, 0.15, 0.2, 0.25 \\
& Maximum tree length & 15, 25, 50 \\
\midrule

\multirow{6}{*}{PySR}
& Population size & 30, 100, 250, 500 \\
& Cycles per iteration & 500, 1000, 2000 \\
& Maximum expression depth & 5, 10, 15 \\
& Parsimony coefficient & 0.0001, 0.001, 0.01 \\
& Number of iterations & 100, 200, 500 \\
\midrule

\multirow{4}{*}{TaylorGP}
& Population size & 100, 200, 500 \\
& Number of generations & 50, 100, 200, 500 \\
& Crossover probability & 0.6, 0.7, 0.8 \\
& Mutation probability & 0.05, 0.1, 0.15, 0.2 \\
\midrule

\multirow{4}{*}{TPSR}
& Regularization strength & 0.01, 0.1, 1.0, 5.0 \\
& Network width & 2, 3, 4 \\
& Rollout length & 1, 3, 6, 9 \\
& Number of beams & 1, 3 \\
\midrule

\multirow{3}{*}{uDSR}
& Number of samples & 100,000, 500,000, 1,000,000 \\
& Learning rate & 0.0003, 0.0005, 0.001, 0.01 \\
& Risk parameter & 0.05, 0.10, 0.15, 0.5 \\

\bottomrule
\end{tabularx}
\end{table}

\addtocounter{section}{-1} 

\section{SR Benchmarking: Tuning and Results}  \label{app:hyperparameter}

This appendix provides the hyperparameter tuning details and per-problem results for the SR benchmarking experiments described in Sec.~\ref{sec:resbench}. 
Table~\ref{tab:hyperparameter_grid} reports the exhaustive hyperparameter grids evaluated for each SR method, while Table~\ref{tab:hyperparameter_selected} lists the corresponding optimal configurations selected through an exhaustive grid search. 
Tables~\ref{tab:easy_mse}--\ref{tab:hard_r2} provide the per-problem mean and standard deviation of the interpolation and extrapolation MSE and $R^2$ results.
Cells labeled with -- indicate cases where the algorithm produced invalid expressions or yielded NaN outputs during evaluation for all 10 repetitions.

\setcounter{table}{1} 
\begin{table}[!ht]
\centering
\caption{Selected optimal hyperparameter configurations}
\label{tab:hyperparameter_selected}
\scriptsize
\begin{tabularx}{\columnwidth}{l l c c c}
\toprule
\textbf{Method} & \textbf{Hyperparameter} & \textbf{Easy} & \textbf{Medium} & \textbf{Hard} \\
\midrule

\multirow{2}{*}{AIFeynman}
& Polynomial fitting degree & 3 & 2 & 3 \\
& Neural network training epochs & 2000 & 2000 & 1000 \\
\midrule

\multirow{3}{*}{BSR}
& Number of trees & 2 & 2 & 2\\
& Number of iterations & 50 & 200 & 100\\
& Regularization parameter & -0.1 & -0.1& -2\\
\midrule

\multirow{3}{*}{DSR}
& Number of samples & 1000000 & 1000000 & 100000\\
& Learning rate & 0.0003& 0.0003 & 0.0003\\
\bottomrule
\end{tabularx}
\end{table}

\begin{table*}[htbp]
\centering
\caption{Easy Tier — MSE Performance}
\label{tab:easy_mse}
\resizebox{\textwidth}{!}{%
\setlength{\tabcolsep}{3pt}
\begin{tabular}{@{}ccccccccccccccc@{}}
\toprule
\textbf{ID} & \textbf{Set} & \textbf{AIFeynman} & \textbf{BSR} & \textbf{DSR} & \textbf{E2E} & \textbf{GPGOMEA} & \textbf{Gplearn} & \textbf{ITEA} & \textbf{NeSymReS} & \textbf{Operon} & \textbf{PySR} & \textbf{TaylorGP} & \textbf{TPSR} & \textbf{UDSR} \\
\midrule
\multirow{2}{*}{E1} & Int. & 1.72 $\pm$ 0 & 0.846 $\pm$ 0 & 0.089 $\pm$ 0 & 3.4e-5 $\pm$ 0 & 2.2e-7 $\pm$ 1.9e-7 & 0.317 $\pm$ 0.374 & 1.6e-8 $\pm$ 7.6e-9 & 0.831 $\pm$ 9.8e-3 & 8.0e-4 $\pm$ 9.4e-4 & 1.1e-12 $\pm$ 2.8e-12 & 0.846 $\pm$ 0 & 0.846 $\pm$ 0 & 0.089 $\pm$ 0 \\
 & Ext. & 2.07 $\pm$ 0 & 2.38 $\pm$ 0 & 0.604 $\pm$ 0 & 2.0e-5 $\pm$ 0 & 7.3e-7 $\pm$ 7.5e-7 & 1.25 $\pm$ 0.814 & 5.6e-7 $\pm$ 1.1e-6 & 2.71 $\pm$ 0.024 & 3.0e-3 $\pm$ 4.3e-3 & 0 $\pm$ 1.5e-12 & 2.38 $\pm$ 0 & 2.38 $\pm$ 0 & 0.604 $\pm$ 0 \\
\cline{2-15}
\multirow{2}{*}{E2} & Int. & 1.36 $\pm$ 0 & 0.223 $\pm$ 0 & 0.115 $\pm$ 0 & 3.3e-4 $\pm$ 0 & 0.022 $\pm$ 0.067 & 0.164 $\pm$ 0.060 & 1.8e-10 $\pm$ 7.5e-11 & 0.207 $\pm$ 0.016 & 6.8e-5 $\pm$ 5.1e-5 & 3.5e-5 $\pm$ 1.3e-5 & 0.092 $\pm$ 0 & 0.223 $\pm$ 0 & 0.115 $\pm$ 0 \\
 & Ext. & 5.88 $\pm$ 0 & 8.00 $\pm$ 0 & 5.93 $\pm$ 0 & 0.087 $\pm$ 0 & 12.94 $\pm$ 36.21 & 7.96 $\pm$ 0.881 & 2.7e-5 $\pm$ 1.5e-6 & 8.31 $\pm$ 0.318 & 0.824 $\pm$ 2.31 & 0.090 $\pm$ 0.065 & 6.41 $\pm$ 0 & 8.00 $\pm$ 0 & 5.93 $\pm$ 0 \\
\cline{2-15}
\multirow{2}{*}{E3} & Int. & 1.09 $\pm$ 0 & 0.216 $\pm$ 0 & 0.020 $\pm$ 0 & 3.3e-3 $\pm$ 2.3e-4 & 9.9e-4 $\pm$ 1.1e-3 & 0.164 $\pm$ 0.090 & 2.6e-5 $\pm$ 5.7e-6 & 0.214 $\pm$ 5.9e-4 & 2.3e-3 $\pm$ 1.4e-3 & 0.022 $\pm$ 0.065 & 0.021 $\pm$ 0 & 0.216 $\pm$ 0 & 0.020 $\pm$ 0 \\
 & Ext. & 8.28 $\pm$ 0 & 8.06 $\pm$ 0 & 4.44 $\pm$ 0 & 1.69 $\pm$ 0.171 & 1.1e4 $\pm$ 1.8e4 & 8.06 $\pm$ 0 & 0.020 $\pm$ 7.7e-3 & 8.08 $\pm$ 8.5e-3 & 4.6e6 $\pm$ 1.2e7 & 3.3e4 $\pm$ 9.1e4 & 4.42 $\pm$ 0 & 8.06 $\pm$ 0 & 4.44 $\pm$ 0 \\
\bottomrule
\end{tabular}
}
\end{table*}

\begin{table*}[htbp]
\centering
\caption{Medium Tier — MSE Performance}
\label{tab:medium_mse}
\resizebox{\textwidth}{!}{%
\setlength{\tabcolsep}{3pt}
\begin{tabular}{@{}ccccccccccccccc@{}}
\toprule
\textbf{ID} & \textbf{Set} & \textbf{AIFeynman} & \textbf{BSR} & \textbf{DSR} & \textbf{E2E} & \textbf{GPGOMEA} & \textbf{Gplearn} & \textbf{ITEA} & \textbf{NeSymReS} & \textbf{Operon} & \textbf{PySR} & \textbf{TaylorGP} & \textbf{TPSR} & \textbf{UDSR} \\
\midrule
\multirow{2}{*}{M1} & Int. & 1.02 $\pm$ 0 & 177.3 $\pm$ 325.1 & 0.381 $\pm$ 0 & 0.181 $\pm$ 0.076 & 0.355 $\pm$ 0.240 & --- & 1.18 $\pm$ 0.329 & 1.07 $\pm$ 1.4e-3 & 0.382 $\pm$ 0.131 & 0.268 $\pm$ 0.087 & --- & 1.07 $\pm$ 0 & 0.381 $\pm$ 0 \\
 & Ext. & 0.351 $\pm$ 0 & 380.6 $\pm$ 770.2 & 0.321 $\pm$ 0 & 0.428 $\pm$ 0.265 & 0.317 $\pm$ 0.012 & 0.395 $\pm$ 0.075 & 360.8 $\pm$ 1025.7 & 0.365 $\pm$ 0.042 & 0.370 $\pm$ 0.044 & 0.286 $\pm$ 0.094 & --- & 0.351 $\pm$ 0 & 0.321 $\pm$ 0 \\
\cline{2-15}
\multirow{2}{*}{M2} & Int. & 9.2e-4 $\pm$ 0 & 0.916 $\pm$ 0.119 & 0.148 $\pm$ 0 & 8.8e-4 $\pm$ 1.4e-4 & 7.7e-4 $\pm$ 2.8e-4 & 0.281 $\pm$ 0.355 & 4.1e-4 $\pm$ 2.2e-5 & 1.11 $\pm$ 0.404 & 8.2e-4 $\pm$ 1.1e-4 & 2.7e-4 $\pm$ 8.8e-5 & --- & 0.161 $\pm$ 0 & 0.148 $\pm$ 0 \\
 & Ext. & 3.7e-5 $\pm$ 0 & 1.96 $\pm$ 0.572 & 0.588 $\pm$ 0 & 6.7e-4 $\pm$ 4.7e-4 & 2.5e-5 $\pm$ 2.3e-5 & 1.03 $\pm$ 1.15 & 1.3e-4 $\pm$ 1.7e-4 & 2.72 $\pm$ 0.171 & 2.8e-5 $\pm$ 2.1e-5 & 3.0e-5 $\pm$ 1.1e-5 & --- & 0.254 $\pm$ 0 & 0.588 $\pm$ 0 \\
\cline{2-15}
\multirow{2}{*}{M3} & Int. & 1.15 $\pm$ 0 & 2.1e4 $\pm$ 4.7e4 & 0.101 $\pm$ 0 & 0.024 $\pm$ 0.025 & 0.037 $\pm$ 0.035 & 0.079 $\pm$ 7.2e-5 & 7.4e-5 $\pm$ 1.9e-5 & 1.11 $\pm$ 1.7e-3 & 0.094 $\pm$ 0.070 & 5.0e-4 $\pm$ 2.1e-4 & 1.11 $\pm$ 0 & 0.923 $\pm$ 0 & 0.101 $\pm$ 0 \\
 & Ext. & 0.024 $\pm$ 0 & 16.77 $\pm$ 33.93 & 3.9e-3 $\pm$ 0 & 3.4e-3 $\pm$ 7.7e-3 & 7.6e-3 $\pm$ 5.2e-3 & 2.9e-3 $\pm$ 2.7e-4 & 4.9e-4 $\pm$ 1.4e-4 & 0.013 $\pm$ 6.4e-4 & 0.013 $\pm$ 7.7e-3 & 5.3e-3 $\pm$ 1.1e-3 & 0.013 $\pm$ 0 & 1.8e-3 $\pm$ 0 & 3.9e-3 $\pm$ 0 \\
\bottomrule
\end{tabular}
}
\end{table*}

\begin{table*}[htbp]
\centering
\caption{Hard Tier — MSE Performance}
\label{tab:hard_mse}
\resizebox{\textwidth}{!}{%
\setlength{\tabcolsep}{3pt}
\begin{tabular}{@{}ccccccccccccccc@{}}
\toprule
\textbf{ID} & \textbf{Set} & \textbf{AIFeynman} & \textbf{BSR} & \textbf{DSR} & \textbf{E2E} & \textbf{GPGOMEA} & \textbf{Gplearn} & \textbf{ITEA} & \textbf{NeSymReS} & \textbf{Operon} & \textbf{PySR} & \textbf{TaylorGP} & \textbf{TPSR} & \textbf{UDSR} \\
\midrule
\multirow{2}{*}{H1} & Int. & 1.63 $\pm$ 0 & 80.60 $\pm$ 183.1 & 0.308 $\pm$ 0 & 0.089 $\pm$ 0.024 & 0.024 $\pm$ 0.029 & 0.147 $\pm$ 0.060 & 0.064 $\pm$ 1.5e-3 & 0.897 $\pm$ 0.016 & 0.065 $\pm$ 0.025 & 0.100 $\pm$ 0.270 & --- & 0.910 $\pm$ 0 & 0.308 $\pm$ 0 \\
 & Ext. & 4.14 $\pm$ 0 & 15.47 $\pm$ 27.12 & 2.45 $\pm$ 0 & 1.13 $\pm$ 0.576 & 0.924 $\pm$ 0.081 & 1.28 $\pm$ 0.808 & 24.42 $\pm$ 19.54 & 2.13 $\pm$ 0.120 & 0.934 $\pm$ 0.032 & 0.981 $\pm$ 0.284 & --- & 1.81 $\pm$ 0 & 2.45 $\pm$ 0 \\
\cline{2-15}
\multirow{2}{*}{H2} & Int. & 2.02 $\pm$ 0 & 1.3e4 $\pm$ 2.9e4 & 0.217 $\pm$ 0 & 0.171 $\pm$ 0.215 & 0.014 $\pm$ 0.015 & 0.254 $\pm$ 0.181 & 5.1e-3 $\pm$ 1.1e-4 & 0.448 $\pm$ 2.9e-3 & 0.016 $\pm$ 9.2e-3 & 0.056 $\pm$ 0.147 & 0.389 $\pm$ 0 & 1.85 $\pm$ 0 & 0.217 $\pm$ 0 \\
 & Ext. & 6.44 $\pm$ 0 & 1.3e5 $\pm$ 2.8e5 & 2.86 $\pm$ 0 & 1.83 $\pm$ 2.41 & 0.235 $\pm$ 0.246 & 5.31 $\pm$ 1.47 & 23.70 $\pm$ 42.29 & 6.55 $\pm$ 0.040 & 1.06 $\pm$ 2.68 & 0.841 $\pm$ 1.57 & 5.86 $\pm$ 0 & 18.42 $\pm$ 0 & 2.86 $\pm$ 0 \\
\cline{2-15}
\multirow{2}{*}{H3} & Int. & 0.451 $\pm$ 0 & 9.0e+12 $\pm$ 2.2e+13 & 0.505 $\pm$ 0 & 0.771 $\pm$ 0.304 & 0.104 $\pm$ 0.025 & 0.599 $\pm$ 0.333 & 0.045 $\pm$ 5.6e-3 & 1.06 $\pm$ 7.5e-4 & 0.206 $\pm$ 0.053 & 0.094 $\pm$ 9.8e-3 & --- & 1.06 $\pm$ 0 & 0.505 $\pm$ 0 \\
 & Ext. & 0.015 $\pm$ 0 & 4.2e+15 $\pm$ 1.0e+16 & 0.644 $\pm$ 0 & 0.456 $\pm$ 0.090 & 0.022 $\pm$ 0.020 & 0.298 $\pm$ 0.111 & 29.23 $\pm$ 46.72 & 0.547 $\pm$ 0.016 & 0.056 $\pm$ 0.075 & 0.017 $\pm$ 0.012 & --- & 0.441 $\pm$ 0 & 0.644 $\pm$ 0 \\
\bottomrule
\end{tabular}
}
\end{table*}

\begin{table*}[htbp]
\centering
\caption{Easy Tier — $R^2$ Performance}
\label{tab:easy_r2}
\resizebox{\textwidth}{!}{%
\setlength{\tabcolsep}{3pt}
\begin{tabular}{@{}ccccccccccccccc@{}}
\toprule
\textbf{ID} & \textbf{Set} & \textbf{AIFeynman} & \textbf{BSR} & \textbf{DSR} & \textbf{E2E} & \textbf{GPGOMEA} & \textbf{Gplearn} & \textbf{ITEA} & \textbf{NeSymReS} & \textbf{Operon} & \textbf{PySR} & \textbf{TaylorGP} & \textbf{TPSR} & \textbf{UDSR} \\
\midrule
\multirow{2}{*}{E1} & Int. & -1.07 $\pm$ 0 & -0.021 $\pm$ 0 & 0.892 $\pm$ 0 & 1.000 $\pm$ 0 & 1.000 $\pm$ 2.3e-7 & 0.617 $\pm$ 0.451 & 1.000 $\pm$ 9.2e-9 & -2.4e-3 $\pm$ 0.012 & 0.999 $\pm$ 1.1e-3 & 1.000 $\pm$ 3.4e-12 & -0.021 $\pm$ 0 & -0.021 $\pm$ 0 & 0.892 $\pm$ 0 \\
 & Ext. & -1.09 $\pm$ 0 & -1.41 $\pm$ 0 & 0.388 $\pm$ 0 & 1.000 $\pm$ 0 & 1.000 $\pm$ 7.6e-7 & -0.263 $\pm$ 0.825 & 1.000 $\pm$ 1.1e-6 & -1.74 $\pm$ 0.024 & 0.997 $\pm$ 4.4e-3 & 1.000 $\pm$ 1.5e-12 & -1.41 $\pm$ 0 & -1.41 $\pm$ 0 & 0.388 $\pm$ 0 \\
\cline{2-15}
\multirow{2}{*}{E2} & Int. & -6.15 $\pm$ 0 & -0.173 $\pm$ 0 & 0.396 $\pm$ 0 & 0.998 $\pm$ 0 & 0.883 $\pm$ 0.352 & 0.134 $\pm$ 0.316 & 1.000 $\pm$ 3.9e-10 & -0.088 $\pm$ 0.085 & 1.000 $\pm$ 2.7e-4 & 1.000 $\pm$ 6.6e-5 & 0.518 $\pm$ 0 & -0.173 $\pm$ 0 & 0.396 $\pm$ 0 \\
 & Ext. & -0.100 $\pm$ 0 & -0.496 $\pm$ 0 & -0.109 $\pm$ 0 & 0.984 $\pm$ 0 & -1.42 $\pm$ 6.78 & -0.490 $\pm$ 0.165 & 1.000 $\pm$ 2.9e-7 & -0.556 $\pm$ 0.059 & 0.846 $\pm$ 0.433 & 0.983 $\pm$ 0.012 & -0.200 $\pm$ 0 & -0.496 $\pm$ 0 & -0.109 $\pm$ 0 \\
\cline{2-15}
\multirow{2}{*}{E3} & Int. & -4.09 $\pm$ 0 & -6.2e-3 $\pm$ 0 & 0.905 $\pm$ 0 & 0.985 $\pm$ 1.1e-3 & 0.995 $\pm$ 5.3e-3 & 0.237 $\pm$ 0.421 & 1.000 $\pm$ 2.7e-5 & -4.2e-4 $\pm$ 2.7e-3 & 0.989 $\pm$ 6.7e-3 & 0.897 $\pm$ 0.301 & 0.903 $\pm$ 0 & -6.2e-3 $\pm$ 0 & 0.905 $\pm$ 0 \\
 & Ext. & -0.041 $\pm$ 0 & -0.014 $\pm$ 0 & 0.442 $\pm$ 0 & 0.787 $\pm$ 0.021 & -1442.9 $\pm$ 2247.9 & -0.014 $\pm$ 0 & 0.997 $\pm$ 9.7e-4 & -0.016 $\pm$ 1.1e-3 & -5.8e5 $\pm$ 1.5e6 & -4093.2 $\pm$ 1.1e4 & 0.444 $\pm$ 0 & -0.014 $\pm$ 0 & 0.442 $\pm$ 0 \\
\bottomrule
\end{tabular}
}
\end{table*}

\begin{table*}[htbp]
\centering
\caption{Medium Tier — $R^2$ Performance}
\label{tab:medium_r2}
\resizebox{\textwidth}{!}{%
\setlength{\tabcolsep}{3pt}
\begin{tabular}{@{}ccccccccccccccc@{}}
\toprule
\textbf{ID} & \textbf{Set} & \textbf{AIFeynman} & \textbf{BSR} & \textbf{DSR} & \textbf{E2E} & \textbf{GPGOMEA} & \textbf{Gplearn} & \textbf{ITEA} & \textbf{NeSymReS} & \textbf{Operon} & \textbf{PySR} & \textbf{TaylorGP} & \textbf{TPSR} & \textbf{UDSR} \\
\midrule
\multirow{2}{*}{M1} & Int. & 0.048 $\pm$ 0 & -164.4 $\pm$ 303.3 & 0.645 $\pm$ 0 & 0.831 $\pm$ 0.071 & 0.669 $\pm$ 0.224 & --- & -0.103 $\pm$ 0.307 & -1.4e-3 $\pm$ 1.3e-3 & 0.644 $\pm$ 0.123 & 0.750 $\pm$ 0.081 & --- & -2.6e-4 $\pm$ 0 & 0.645 $\pm$ 0 \\
 & Ext. & -0.070 $\pm$ 0 & -1159.8 $\pm$ 2349.3 & 0.022 $\pm$ 0 & -0.307 $\pm$ 0.809 & 0.035 $\pm$ 0.037 & -0.206 $\pm$ 0.230 & -1099.6 $\pm$ 3128.8 & -0.114 $\pm$ 0.128 & -0.128 $\pm$ 0.135 & 0.128 $\pm$ 0.287 & --- & -0.070 $\pm$ 0 & 0.022 $\pm$ 0 \\
\cline{2-15}
\multirow{2}{*}{M2} & Int. & 0.999 $\pm$ 0 & -0.084 $\pm$ 0.141 & 0.824 $\pm$ 0 & 0.999 $\pm$ 1.7e-4 & 0.999 $\pm$ 3.3e-4 & 0.667 $\pm$ 0.420 & 1.000 $\pm$ 2.6e-5 & -0.312 $\pm$ 0.478 & 0.999 $\pm$ 1.3e-4 & 1.000 $\pm$ 1.0e-4 & --- & 0.809 $\pm$ 0 & 0.824 $\pm$ 0 \\
 & Ext. & 1.000 $\pm$ 0 & -4.23 $\pm$ 1.53 & -0.572 $\pm$ 0 & 0.998 $\pm$ 1.3e-3 & 1.000 $\pm$ 6.2e-5 & -1.76 $\pm$ 3.07 & 1.000 $\pm$ 4.6e-4 & -6.28 $\pm$ 0.457 & 1.000 $\pm$ 5.6e-5 & 1.000 $\pm$ 3.0e-5 & --- & 0.320 $\pm$ 0 & -0.572 $\pm$ 0 \\
\cline{2-15}
\multirow{2}{*}{M3} & Int. & -0.032 $\pm$ 0 & -1.9e4 $\pm$ 4.3e4 & 0.909 $\pm$ 0 & 0.978 $\pm$ 0.023 & 0.967 $\pm$ 0.032 & 0.929 $\pm$ 6.5e-5 & 1.000 $\pm$ 1.7e-5 & -4.8e-4 $\pm$ 1.5e-3 & 0.915 $\pm$ 0.063 & 1.000 $\pm$ 1.9e-4 & -3.8e-5 $\pm$ 0 & 0.168 $\pm$ 0 & 0.909 $\pm$ 0 \\
 & Ext. & -1.65 $\pm$ 0 & -1833.9 $\pm$ 3711.7 & 0.577 $\pm$ 0 & 0.630 $\pm$ 0.841 & 0.172 $\pm$ 0.565 & 0.686 $\pm$ 0.029 & 0.946 $\pm$ 0.015 & -0.424 $\pm$ 0.070 & -0.395 $\pm$ 0.845 & 0.418 $\pm$ 0.121 & -0.378 $\pm$ 0 & 0.805 $\pm$ 0 & 0.577 $\pm$ 0 \\
\bottomrule
\end{tabular}
}
\end{table*}

\begin{table*}[htbp]
\centering
\caption{Hard Tier — $R^2$ Performance}
\label{tab:hard_r2}
\resizebox{\textwidth}{!}{%
\setlength{\tabcolsep}{3pt}
\begin{tabular}{@{}ccccccccccccccc@{}}
\toprule
\textbf{ID} & \textbf{Set} & \textbf{AIFeynman} & \textbf{BSR} & \textbf{DSR} & \textbf{E2E} & \textbf{GPGOMEA} & \textbf{Gplearn} & \textbf{ITEA} & \textbf{NeSymReS} & \textbf{Operon} & \textbf{PySR} & \textbf{TaylorGP} & \textbf{TPSR} & \textbf{UDSR} \\
\midrule
\multirow{2}{*}{H1} & Int. & -0.821 $\pm$ 0 & -89.03 $\pm$ 204.6 & 0.656 $\pm$ 0 & 0.900 $\pm$ 0.027 & 0.973 $\pm$ 0.032 & 0.836 $\pm$ 0.067 & 0.929 $\pm$ 1.7e-3 & -2.0e-3 $\pm$ 0.018 & 0.927 $\pm$ 0.028 & 0.889 $\pm$ 0.302 & --- & -0.016 $\pm$ 0 & 0.656 $\pm$ 0 \\
 & Ext. & -5.45 $\pm$ 0 & -23.13 $\pm$ 42.29 & -2.82 $\pm$ 0 & -0.762 $\pm$ 0.897 & -0.442 $\pm$ 0.126 & -1.00 $\pm$ 1.26 & -37.08 $\pm$ 30.47 & -2.32 $\pm$ 0.188 & -0.456 $\pm$ 0.049 & -0.530 $\pm$ 0.443 & --- & -1.83 $\pm$ 0 & -2.82 $\pm$ 0 \\
\cline{2-15}
\multirow{2}{*}{H2} & Int. & -3.55 $\pm$ 0 & -3.0e4 $\pm$ 6.6e4 & 0.511 $\pm$ 0 & 0.616 $\pm$ 0.483 & 0.969 $\pm$ 0.034 & 0.428 $\pm$ 0.407 & 0.988 $\pm$ 2.5e-4 & -8.0e-3 $\pm$ 6.5e-3 & 0.965 $\pm$ 0.021 & 0.873 $\pm$ 0.332 & 0.126 $\pm$ 0 & -3.17 $\pm$ 0 & 0.511 $\pm$ 0 \\
 & Ext. & -4.77 $\pm$ 0 & -1.1e5 $\pm$ 2.5e5 & -1.56 $\pm$ 0 & -0.638 $\pm$ 2.16 & 0.789 $\pm$ 0.221 & -3.76 $\pm$ 1.32 & -20.24 $\pm$ 37.90 & -4.87 $\pm$ 0.035 & 0.054 $\pm$ 2.40 & 0.246 $\pm$ 1.41 & -4.25 $\pm$ 0 & -15.51 $\pm$ 0 & -1.56 $\pm$ 0 \\
\cline{2-15}
\multirow{2}{*}{H3} & Int. & 0.574 $\pm$ 0 & -8.5e+12 $\pm$ 2.1e+13 & 0.522 $\pm$ 0 & 0.270 $\pm$ 0.287 & 0.902 $\pm$ 0.024 & 0.433 $\pm$ 0.315 & 0.957 $\pm$ 5.3e-3 & -9.3e-4 $\pm$ 7.1e-4 & 0.805 $\pm$ 0.050 & 0.911 $\pm$ 9.3e-3 & --- & -5.1e-3 $\pm$ 0 & 0.522 $\pm$ 0 \\
 & Ext. & -1.13 $\pm$ 0 & -5.8e+17 $\pm$ 1.4e+18 & -88.33 $\pm$ 0 & -62.28 $\pm$ 12.43 & -1.98 $\pm$ 2.74 & -40.39 $\pm$ 15.41 & -4054.1 $\pm$ 6482.3 & -74.88 $\pm$ 2.25 & -6.83 $\pm$ 10.42 & -1.34 $\pm$ 1.73 & --- & -60.24 $\pm$ 0 & -88.33 $\pm$ 0 \\
\bottomrule
\end{tabular}
}
\end{table*}


\end{document}